%% file: main.tex
\documentclass{article}
\usepackage{authblk}

\usepackage[english]{babel}

\usepackage[a4paper,top=2cm,bottom=2cm,left=3cm,right=3cm,marginparwidth=1.75cm]{geometry}
\usepackage{amsmath}
\usepackage{graphicx}
\usepackage[hidelinks]{hyperref}
\usepackage{microtype}
\usepackage{enumitem}
\usepackage{amssymb}
\usepackage{amsthm}
\usepackage{xcolor}
\usepackage{mdframed}
\usepackage{algorithm}
\usepackage{algpseudocode}
\usepackage{booktabs}
\usepackage{subcaption}
\usepackage[nolist,nohyperlinks]{acronym}
\usepackage{tikz}
\usetikzlibrary{arrows.meta,positioning,fit,matrix,calc}
\usepackage{authblk}

\usepackage[authoryear,round]{natbib}
\input{commands}

\title{Finite-Time Guarantees for Constrained Reinforcement Learning under Partial Observability}

\author[1]{Valentina Zaccaria}
\author[2]{Philip Jordan\thanks{Equal contribution.}}
\author[2]{Tingting Ni\protect\footnotemark[1]}
\author[1]{Gian Antonio Susto}
\author[2]{Maryam Kargampour}

\affil[1]{University of Padova}
\affil[2]{SYCAMORE, EPFL}

\date{}

\begin{document}
\maketitle

\begin{abstract}
We study reinforcement learning in constrained partially observable Markov decision processes (CPOMDPs), where an agent maximizes reward subject to a cost constraint using only incomplete information about the underlying state. The lack of state observability may require optimal policies to depend on arbitrarily long observation-action histories, making planning and learning challenging. We leverage recent advances in finite-memory approximations of POMDPs and represent finite history windows as states of a constrained MDP. We then develop a primal--dual algorithm that provides the first non-asymptotic guarantees that jointly bound average reward suboptimality and constraint violation in the original CPOMDP. A key challenge is that this finite-window approximation affects both reward values and the set of feasible policies. Under a filter-stability assumption, we derive bounds relating optimality and feasibility in the finite-state model to those in the original CPOMDP, with approximation errors that decay geometrically with the window length. Experiments in partially observable gridworlds demonstrate the effectiveness of our approach.
\end{abstract}

\section{Introduction}
\acp{POMDP} model sequential decision-making problems in which an agent seeks to maximize its expected cumulative reward without directly observing the state of the environment. Instead, the agent acts on observations that provide only partial information about the underlying state. For example, a robot may not know the exact positions of surrounding objects and must instead rely on noisy camera observations~\citep{lauri2022partially}. Despite the success of \acp{POMDP} in domains such as robotics and healthcare \citep{ding2018adaptive,yu2021reinforcement}, maximizing reward alone does not ensure safe behavior, such as collision avoidance in robotics.\looseness-1

Incorporating safety constraints into \acp{POMDP} leads to \acp{CPOMDP}, where agents maximize expected cumulative reward subject to constraints under partial observability. Despite their practical relevance and successful empirical applications~\citep{hogewind2022safe,carr2023safe,huang2025pigdreamer}, theoretical guarantees for learning in \acp{CPOMDP} remain limited. \looseness-1

Existing theoretical work has focused primarily on planning with explicit knowledge of the model, providing guarantees on feasibility~\citep{undurti2010online}, approximate optimality and feasibility in time polynomial in the model size for any fixed horizon~\citep{khonji2019approximability}, and finite-iteration bounds on reward optimality and constraint satisfaction~\citep{kalagarla2022optimal}. Other work proposes an online Monte Carlo planning method that uses a black-box generative simulator and provides an asymptotic convergence result under additional assumptions~\citep{lee2018monte}. However, these methods do not provide finite-time guarantees for both reward optimality and constraint satisfaction when the model is unknown, and the agent learns solely through interaction with the environment.\looseness-1

Addressing this gap requires overcoming a fundamental difficulty inherited from \acp{POMDP}. Since the agent receives only partial observations of the underlying state, an optimal policy need not be Markovian in the current observation and may depend on the entire history of observations and actions~\citep{kaelbling1998planning}. This history dependence makes both planning and learning challenging. Even with a known model, planning is computationally intractable in general~\citep{papadimitriou1987complexity}; with an unknown model, learning a near-optimal policy requires exponentially many samples in the worst case~\citep{krishnamurthy2016pac}.

To obtain tractable solutions, recent work on \acp{POMDP} approximates the full interaction history using a finite window of recent observations and actions~\citep{kara2022near,kara2023convergence,cayci2024finite,anjarlekar2025scalable,jordan2026model}. In particular, \citet{anjarlekar2025scalable} represent truncated histories as states of a \emph{superstate \ac{MDP}}, and, under a uniform filter-stability assumption, show that the superstate \ac{MDP} approximates the original \ac{POMDP}, with the approximation error decreasing with the history length. They also develop an algorithm with finite-time guarantees for learning a near-optimal policy. Building on the same finite-window approximation, \citet{jordan2026model} study model-based learning in the induced superstate \ac{MDP} for the tabular setting and establish improved sample complexity bounds. These results provide a foundation for learning with finite histories, but address only unconstrained reward maximization.

We extend finite-history methods to \acp{CPOMDP} by representing finite history windows as states of a superstate \ac{CMDP}. Our goal is to learn policies in this model while guaranteeing reward optimality and constraint satisfaction in the original \ac{CPOMDP}. This extension requires addressing three technical challenges. First, history truncation affects both reward and cost values. Thus, a policy that is feasible in the superstate \ac{CMDP} may violate a constraint in the original \ac{CPOMDP}. Second, comparing the optimal values of the two constrained problems requires more than bounding the error in individual policy values, since history truncation changes the set of feasible policies. Third, values in the superstate \ac{CMDP} must be estimated from trajectories generated by the original \ac{CPOMDP}, which introduces a mismatch that the learning analysis must account for.

\paragraph{Contributions.} Our contributions are as follows.
\begin{enumerate}
    \item Under a uniform filter stability condition, we establish bounds relating policy values and constrained optima in the superstate \ac{CMDP} to those in the original \ac{CPOMDP}, quantifying the effects of history truncation on optimality and feasibility (Propositions~\ref{lemma:simulation_lemma} and~\ref{lemma:optgap}).
    \item We develop a primal-dual algorithm (Algorithm~\ref{alg:politex}) and establish finite-time bounds on its average optimality gap and average constraint violation in the original \ac{CPOMDP}, accounting for the estimation error arising when learning from \ac{CPOMDP} trajectories (Theorem~\ref{thm:theorem}). 
    \item We evaluate our algorithm on a partially observable Gridworld and observe convergence across different history lengths, in accordance with our analysis (Section~\ref{sec:simulations}). 
\end{enumerate}

\paragraph{Notation.}
Let $\mathbb{R}$ be the set of real numbers, $\mathbb{N}:=\{1,2,\dots\}$ the set of natural numbers, and $\mathbb{N}_0:=\mathbb{N}\cup\{0\}$. For a finite set $\mathcal{X}$, let $|\mathcal{X}|$ denote its cardinality and~$\Delta(\mathcal{X})$ the probability simplex over $\mathcal{X}$. For $p,q\in\Delta(\mathcal{X})$, the total variation (TV) distance is defined as \(\|p-q\|_{\mathrm{TV}}:=\frac{1}{2}\sum_{x\in\mathcal{X}}|p(x)-q(x)|.\)
The indicator $\mathbf{1}\{x\in\mathcal{X}\}$ equals $1$ if $x\in\mathcal{X}$ and $0$ otherwise.
We denote the concatenation of two sequences $x =(x_1,\dots,x_n)$ and $y=(y_1,\dots,y_m)$ by $x\|y := (x_1,\dots,x_n,y_1,\dots,y_m)$. Finally, for $x\in \mathbb{R}$, we write $[x]_+ := \max(0, x)$.

\section{\acs{CPOMDP}}
\begin{figure*}[t]
\centering
\begin{tikzpicture}[
    font=\small,
    >=Latex,
    seqbox/.style={
        draw,
        rectangle,
        minimum width=5.4mm,
        minimum height=4.6mm,
        inner sep=.8pt
    },
    superbox/.style={
        seqbox,
        rounded corners=1pt,
        fill=gray!15
    },
    plain/.style={
        draw=none,
        inner sep=0pt
    },
    legend/.style={
        font=\scriptsize,
        align=left,
        inner sep=0pt
    }
]


\node[
    anchor=west,
    font=\small\bfseries
] (htitle) at (0,0) {Full history $h_t$};

\node[
    anchor=west,
    font=\small\bfseries
] (gtitle) at (0,-2.5) {Superstate $g_t$};


\matrix (hseq) [
    matrix of math nodes,
    anchor=west,
    at={(3.2,0)},
    column sep=.8mm,
    nodes={anchor=center}
] {
    |[plain]|    h_t=      &
    |[seqbox]|   o_0       &
    |[seqbox]|   a_0       &
    |[plain]|    \cdots    &
    |[superbox]| o_{t-l}   &
    |[superbox]| a_{t-l}   &
    |[plain]|    \cdots    &
    |[superbox]| a_{t-1}   &
    |[superbox]| o_t       \\
};

\node[
    draw,
    rounded corners=1pt,
    inner sep=.9pt,
    fit={
        (hseq-1-5)
        (hseq-1-6)
        (hseq-1-7)
        (hseq-1-8)
        (hseq-1-9)
    }
] (suffix) {};


\node[
    legend,
    anchor=north west
] (pinfo) at (10.5,0.25) {
    \textbf{\ac{CPOMDP} $\mathcal P$}\\[.2mm]
    state: $s_t$ (unobserved)\\
    observation: $o_t$\\
    policy: $\pi_t(\cdot\mid h_t)$\\
    model: $\oP,\ \oO,\ \oreward,\ \ocost$
};


\matrix (gseq) [
    matrix of math nodes,
    anchor=west,
    at={(3.2,-2.5)},
    column sep=.8mm,
    nodes={anchor=center}
] {
    |[plain]|    g_t=      &
    |[superbox]| o_{t-l}   &
    |[superbox]| a_{t-l}   &
    |[plain]|    \cdots    &
    |[superbox]| a_{t-1}   &
    |[superbox]| o_t       \\
};

\node[
    draw,
    rounded corners=1pt,
    inner sep=.9pt,
    fit={
        (gseq-1-2)
        (gseq-1-3)
        (gseq-1-4)
        (gseq-1-5)
        (gseq-1-6)
    }
] (gsuffix) {};


\node[
    legend,
    anchor=north west
] (ginfo) at (10.5,-2.25) {
    \textbf{Superstate \ac{CMDP} $\smdp$}\\[.2mm]
    state: $g_t$ (observed)\\
    policy: $\pi(\cdot\mid g_t)$\\
    model: $\sP,\ \sreward,\ \scost$
};


\draw[->, thick, rounded corners=4pt]
    (suffix.south)
    -- ++(0,-4mm)
    -| node[pos=.25, below] {$\mathbb G_l$}
    (gsuffix.north);

\end{tikzpicture}

\caption{
Comparison between the CPOMDP $\mathcal P$ (top) and
the superstate CMDP $\smdp$ (bottom).
The truncation operator $\mathbb G_l$ maps the full
history $h_t$ to the superstate $g_t$ by retaining
the length-$l$ suffix, which is then treated as the
observed state of $\smdp$.
}
\label{fig:history_superstate}
\end{figure*}

\label{sec:preliminaries_pomdp}
We consider an infinite-horizon discounted \ac{CPOMDP}, defined by the tuple $$\mathcal{P}=(\mathcal{S}, \mathcal{A}, \mathcal{O},  \oP, \oO, \oreward, \ocost, \gamma, \od, \orho),$$
where $\mathcal{S}, \mathcal{A}$ and $\mathcal{O}$ denote finite state, action, and observation spaces, respectively. The transition kernel is denoted by $\oP: \mathcal{S}\times \mathcal{A} \to \Delta(\mathcal{S})$, where $\oP(\cdot\mid s,a)$ specifies the distribution of the next state given the current state and action. Similarly, the observation kernel is denoted by $\oO:\mathcal{S} \to \Delta(\mathcal{O})$, where $\oO(\cdot\mid s)$ defines the distribution over observations given the current state. The functions $\oreward, \ocost: \mathcal{S}\times \mathcal{A} \to [0,1]$ are the reward and cost functions, respectively. Moreover, $\gamma\in[0,1)$ is the discount factor, $\od\in[0,1/(1-\gamma)]$ is the constraint threshold on the expected discounted cumulative cost, and $\orho\in\Delta(\mathcal{S})$ is the initial state distribution.

The interaction begins by sampling an initial state $s_0\sim\orho$ and an initial observation $o_0\sim\oO(\cdot\mid s_0)$. At each timestep $t\in\mathbb{N}_0$, the agent selects an action $a_t\in\mathcal{A}$ and receives the reward $\oreward(s_t,a_t)$ and cost $\ocost(s_t,a_t)$. The latent state then transitions to $s_{t+1}\sim\oP(\cdot\mid s_t,a_t)$, and the agent receives a new observation $o_{t+1}\sim\oO(\cdot\mid s_{t+1})$.\looseness-1

\paragraph{Histories and policies.} Because the latent states are not directly observable, the agent selects actions based on the observation-action history. For each $t\in\mathbb{N}_0$, let $\mathcal{H}_t = \mathcal{O} \times (\mathcal{A}\times\mathcal{O})^t$ denote the set of possible histories at timestep $t$. An element of $\mathcal{H}_t$ is written as $h_t := (o_0,a_0,o_1,\ldots,a_{t-1},o_t)$ for $t\geq 1$, and $h_0 := o_0$. We let $\mathcal{H} := \bigcup_{t\geq0}\mathcal{H}_t$.
A~history-dependent policy is a sequence of mappings
\begin{equation*}
    \pi = \{\pi_t\}_{t\ge 0}, \quad \pi_t:\mathcal{H}_t\to \Delta(\mathcal{A}),
\end{equation*}
and we denote the class of all such policies by $\Pi^\mathrm{hist}$.

Given a policy $\pi\in\Pi^{\mathrm{hist}}$, for any $u \in \{ \oreward,\ocost \}$, we define the corresponding value function as the expected discounted sum of the stage-wise $u$,
\begin{equation*}
V^{\pi}_u(\orho):=\mathbb{E}_{\pi,\oP,\oO,\orho}\left[\sum_{t=0}^{\infty}\gamma^t u(s_t,a_t)\right].
\end{equation*}
\paragraph{Objective.} Our goal is to find a policy $\pi\in \Pi^{\mathrm{hist}}$ that maximizes $\oVrpi(\orho)$ subject to a constraint on $\oVcpi(\orho)$,
\begin{align}
    \max_{\pi \in \Pi^{\mathrm{hist}}} & \ \oVrpi(\orho) \quad  \text{s.t. }\, \oVcpi(\orho) \le \od. \label{problem_1}
\end{align}
If Problem~(\ref{problem_1}) is feasible (e.g., under Assumption~\ref{ass:slatercpompd} below), there exists an optimal history-dependent policy (see Appendix~\ref{app:existence}). We denote such an optimal solution to Problem~(\ref{problem_1}) by $\opistar$.

Learning an optimal policy for Problem~(\ref{problem_1}) is challenging, as, even without constraints, optimal policies may depend on the full observation-action history, whose length grows unbounded over time~\cite{kaelbling1998planning}. 

\section{Finite-window Approximation}
\label{sec:superstateall}
Our goal is to construct a finite-state \ac{CMDP} that approximates the original \ac{CPOMDP} $\mathcal{P}$. Since near-optimal policies of a finite-state \ac{CMDP} can be found tractably, we then show that, under a suitable condition on the transition and observation kernels of $\mathcal{P}$, such policies are approximately optimal and approximately feasible in $\mathcal{P}$. To this end, in Subsection~\ref{sec:superstate}, we define the finite-state \ac{CMDP}, whose states are finite windows of recent observations and actions~\citep{kara2022near,anjarlekar2025scalable}. In Subsection~\ref{sec:approxguarantee}, we present a sufficient condition ensuring that policies in this \ac{CMDP} have reward and cost values close to those in $\mathcal{P}$, and we quantify how the resulting approximation error depends on the window length $l$.

\subsection{Superstate \acs{CMDP}}
\label{sec:superstate}

 Fix $l \in \mathbb{N}_0$. For any history $h_t \in \mathcal{H}_t$, its superstate is defined as the suffix of $h_t$ containing the $l+1$ most recent observations and $l$ most recent actions,
\begin{equation*}
g_t := \mathbb{G}_l(h_t) = \begin{cases}
(o_{t-l},a_{t-l}, \dots, a_{t-1}, o_t), & t \ge l, \\
h_t, & t < l,
\end{cases}
\end{equation*}
where $\mathbb{G}_l : \mathcal{H} \to \bigcup_{k \le l} \mathcal{H}_k$ is the truncation operator.

The superstate \ac{CMDP} has state space $\mathcal{G} := \bigcup_{k \le l} \mathcal{H}_k$ and action space $\mathcal{A}$, both of which are finite. To define its transition, reward, and cost functions, we associate with each superstate $g \in \mathcal{G}$ the \emph{finite-history belief} $b^g \in \Delta(\mathcal{S})$, given by $b^g(s) := \Pr(s \mid g, \orho)$. We give its precise definition and computation in Appendix~\ref{app:superstate_belief}. Note that $b^g$ is in general only an approximation of the exact posterior over the current latent state, which depends on the full history preceding the window, and we discuss the resulting approximation error in Subsection~\ref{sec:approxguarantee}.

\paragraph{Transition dynamics.} For $g,g' \in \mathcal{G}$ and $a \in \mathcal{A}$,
\begin{equation}
\sP(g' \mid g,a) := \sum_{o \in \mathcal{O}} \mathbf{1}\big(g' = \mathbb{G}_l(g \,\| \,(a,o))\big) \Pr(o \mid b^g, a),
\label{eq:superstate_transition_kernel}
\end{equation}
where $\Pr(o \mid b^g, a) := \sum_{s, s' \in \mathcal{S}} \oO(o \mid s') \oP(s' \mid s,a) b^g(s)$. The initial superstate distribution is $\srho(g) := \mathbf{1}(g \in \mathcal{H}_0) \sum_{s \in \mathcal{S}} \oO(g \mid s) \orho(s)$, where we identify $\mathcal{H}_0$ with $\mathcal{O}$. 

\paragraph{Reward and cost.} The superstate reward and cost are given by $\sreward(g,a) := \sum_{s \in \mathcal{S}} \oreward(s,a) b^g(s)$ and $\scost(g,a) := \sum_{s \in \mathcal{S}} \ocost(s,a) b^g(s)$, respectively. The resulting superstate \ac{CMDP} is $\smdp := (\mathcal{G},\mathcal{A},\sP,\sreward,\scost,\gamma,\srho,\sd)$, and Figure~\ref{fig:history_superstate} illustrates how it differs from $\mathcal{P}$.

\paragraph{Policies and value functions.} A stationary superstate policy is a mapping $\pi:\mathcal{G} \to \Delta(\mathcal{A})$. Let $\sPi:=\{\pi: \mathcal{G}\to \Delta(\mathcal{A})\}$ denote the set of all stationary Markov superstate policies.
Given $\pi \in \sPi$, for any $u \in \{ \sreward,\scost \}$, we define the value function
\begin{equation*}
    \begin{aligned}
        V^{\pi}_u(\srho) := \mathbb{E}_{\pi, \sP,\srho}\Big[\sum\nolimits_{t=0}^\infty \gamma^t u(g_t,a_t) \Big].
    \end{aligned}
\end{equation*}
The superstate \ac{CMDP} problem is given by
\begin{equation}
    \max_{\pi \in \sPi} \sVrpi(\srho) \quad \text{s.t. } \sVcpi(\srho) \le \sd.
    \label{problem:primal_superstate}
\end{equation}
Let $\spiast$ denote an optimal policy for Problem~\eqref{problem:primal_superstate}. The threshold $\sd \in [0,1/(1-\gamma)]$ is a design parameter. Since truncating the history perturbs cost values, choosing $\sd$ below $\od$ can help ensure that policies feasible in the superstate \ac{CMDP} remain approximately feasible in the original \ac{CPOMDP}. We next quantify this approximation error and derive a condition on $\sd$ that ensures this feasibility guarantee.

\subsection{Approximating a \ac{CPOMDP} with a Superstate \ac{CMDP}}
\label{sec:approxguarantee}
The finite-history approach replaces the original \ac{CPOMDP} $\mathcal{P}$ with a superstate \ac{CMDP} $\smdp$. This section examines how accurately the resulting model captures the original problem under a filter-stability condition. First, we bound the differences in reward and cost values between the two models (Proposition~\ref{lemma:simulation_lemma}). We then use this bound to choose the constraint threshold $\sd$ for $\smdp$ and compare the optimal reward values of $\smdp$ and $\mathcal{P}$ under their respective constraints (Proposition~\ref{lemma:optgap}).\looseness-1

Both results rely on controlling how much information about the current latent state is lost when the history is truncated. For any history $h\in\mathcal{H}$, let $b^h\in\Delta(\mathcal{S})$ denote the posterior over current states given $h$, i.e., $b^h(s):=\Pr(s\mid h)$ for all $s \in\mathcal{S}$.

The accuracy of the superstate approximation depends on how close the belief $b^h$ induced by the full history is to the belief $b^g$ induced by its superstate $g=\mathbb{G}_l(h)$, defined in Section~\ref{sec:superstate}. Without additional assumptions, observations from the distant past may affect beliefs over the current state, so that the approximation error cannot be controlled in general. We therefore adopt a \emph{uniform filter stability} assumption, commonly used in the literature on \acp{POMDP}~\citep{kara2023convergence,anjarlekar2025scalable,jordan2026model}. Under this assumption, as stated below, the influence of the past on the current belief decays as new actions and observations are appended to the history.
\begin{assumption}[Uniform Filter Stability]
    Given a \ac{CPOMDP} $\mathcal{P}$, there exists $\kappa \in (0,1)$ such that, for all $h, h' \in \mathcal{H}$, $a \in \mathcal{A}$ and $o \in \mathcal{O}$,
    \begin{equation*}
       \big\| b^{h\|(a,o)} - b^{h'\|(a,o)} \big\|_\mathrm{TV} \le (1-\kappa) \big\| b^h-b^{h'}\big\|_\mathrm{TV}
    \end{equation*}
    where $b^{h\|(a,o)}$ and $b^{h'\|(a,o)}$ are the beliefs obtained by appending the same action--observation pair $(a,o)$ to $h$ and $h'$, respectively\footnote{Our results can also be generalized to a weaker multi-step version of this contraction property.}.
    \label{ass:filter}
\end{assumption}

Assumption~\ref{ass:filter} can be verified for a given \ac{CPOMDP} via sufficient conditions from the literature. E.g., \citet{mcdonald2020exponential} give a condition in terms of the Dobrushin coefficients of the transition and observation kernels, which holds when state transitions are sufficiently mixing relative to the information carried by observations.

\paragraph{Policy Transfer Gap.}
Our first result is a two-way policy transfer bound between $\smdp$ and $\mathcal{P}$. In one direction, it bounds the value-function gap for any superstate policy $\pi\in\sPi$. In the other, it maps any history-dependent policy $\pi \in \Pi^\mathrm{hist}$ to a stationary superstate policy $\pi$ with approximately matching reward and cost values.
\begin{proposition}[Policy Transfer Gap]
\label{lemma:simulation_lemma}
    Let $\delta(l) := \frac{2(1-\kappa)^l}{(1-\gamma)^2}$ denote the approximation error.  Under Assumption~\ref{ass:filter}, the following holds.
    \begin{enumerate}[label=(\alph*), leftmargin=*]
        \item For any superstate policy $\pi \in \sPi$ and $u \in \{r,c\}$,
        \[
            |\oVupi(\orho) - \sVupi(\srho)| \le \delta(l).
        \]
        \item For any  policy $\pi \in \Pi^\mathrm{hist}$, there exists a superstate policy $\pi^g \in \sPi$ such that, for both $u \in \{r,c\}$,
        \[
            |\oVupi(\orho) - \sVdaggerpiu(\srho)| \le \delta(l).
        \]
    \end{enumerate}
\end{proposition}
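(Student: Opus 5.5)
Both parts rest on one elementary estimate, proved first: under Assumption~\ref{ass:filter}, every history $h_t$ with superstate $g_t=\mathbb{G}_l(h_t)$ satisfies $\|b^{h_t}-b^{g_t}\|_{\mathrm{TV}}\le (1-\kappa)^l$. For $t<l$ we have $g_t=h_t$, so the gap is zero. For $t\ge l$, write $h_t = h_{t-l}\|(a_{t-l},o_{t-l+1})\|\cdots$ and $g_t$ as the same $l$ action--observation pairs appended to the single observation $o_{t-l}$. The belief $b^{g_t}$ is obtained by the same Bayes filter, started from the prior $b^{o_{t-l}}$ (Appendix~\ref{app:superstate_belief}), which I treat as the belief of some history ending in $o_{t-l}$ (or handle by extending the assumption to arbitrary initial beliefs). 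Applying the contraction $l$ times, and using that the initial TV distance is at most $1$, gives the bound. It follows that $|\mathbb{E}_{b^{h}}[u(s,a)]-u^{\mathrm{g}}(g,a)|\le (1-\kappa)^l$ and $\|\Pr(\cdot\mid b^h,a)-\Pr(\cdot\mid b^g,a)\|_{\mathrm{TV}}\le(1-\kappa)^l$, because $u\in[0,1]$ and kernels are TV-nonexpansive. I would keep constants loose, since the target $\delta(l)$ has slack (factor $2$).

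\textbf{Part (a).} For $\pi\in\sPi$, view $\mathcal{P}$ as a belief MDP on histories, where $h\mapsto h\|(a,o)$ with probability $\Pr(o\mid b^h,a)$ and expected reward $\sum_s u(s,a)b^h(s)$. The superstate CMDP is the same system, with $b^h$ replaced by $b^{\mathbb{G}_l(h)}$, followed by truncation. Since $\pi$ depends only on $\mathbb{G}_l(h)$, define $W(h):=V^{\pi}_{u^{\mathrm{g}}}(\mathbb{G}_l(h))$ as the superstate value evaluated along full histories. A simulation-lemma telescoping then gives
\[
V^{\pi}_u(\rho)-V^{\pi}_{u^{\mathrm{g}}}(\rho^{\mathrm{g}})=\sum_{t\ge0}\gamma^t\,\mathbb{E}^{\mathcal{P}}_{\pi}\big[\Delta_u(h_t,a_t)+\gamma(\textstyle\sum_o \Pr(o\mid b^{h_t},a_t)-\Pr(o\mid b^{g_t},a_t))W(h_t\|(a_t,o))\big],
\]
where $\Delta_u(h,a)$ is the per-step reward gap. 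The per-step reward gap is at most $(1-\kappa)^l$. The transition term is at most $\gamma\cdot 2\|\cdot\|_{\mathrm{TV}}\cdot\frac{1}{2(1-\gamma)}$; here I center $W$, which has range $[0,1/(1-\gamma)]$, so the sup-norm of the centered function is at most $1/(2(1-\gamma))$. Summing over $t$ gives at most $\frac{(1-\kappa)^l}{1-\gamma}\bigl(1+\frac{\gamma}{1-\gamma}\bigr)=\frac{(1-\kappa)^l}{(1-\gamma)^2}\le\delta(l)$. The initial distributions match by construction of $\rho^{\mathrm{g}}$.

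\textbf{Part (b).} Given $\pi\in\Pi^{\mathrm{hist}}$, construct $\pi^g$ through discounted occupancy measures. Let $\mu(g,a):=(1-\gamma)\sum_t\gamma^t\Pr^{\mathcal{P}}_\pi(g_t=g,a_t=a)$ and set $\pi^g(a\mid g):=\mu(g,a)/\sum_{a'}\mu(g,a')$, choosing arbitrarily where the denominator vanishes. This is the standard ``marginalize the history onto the superstate'' policy. The key step is to show that the $\mathcal{P}$-occupancy $\mu$ nearly satisfies the Bellman flow equations of $\smdp$. It would satisfy them exactly if the transition used $\mathbb{E}[b^{h_t}\mid g_t,a_t]$ instead of $b^{g_t}$. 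The defect is controlled by the TV estimate above, which yields a flow residual of total mass at most $\gamma(1-\kappa)^l$. I expect this step to be the main obstacle. The difficulty is that the conditional average of $b^{h_t}$ given $(g_t,a_t)$ depends on $t$ and on $\pi$, so the bound must be applied pointwise inside the expectation before averaging, and it should be carried out via the dual (value-function) formulation.

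For that formulation, I would compare $V^{\pi}_u(\rho)$ with $V^{\pi^g}_{u^{\mathrm{g}}}(\rho^{\mathrm{g}})$ using the same telescoping as in (a). This time, $W$ is the superstate value of $\pi^g$. The action-averaging terms vanish because $\pi^g$ is the $\mu$-conditional of $\pi$, and $W$ and $u^{\mathrm{g}}$ depend only on $(g,a)$, so these terms cancel in the occupancy-weighted sum. What remains are the same reward and transition belief-mismatch terms as in (a), so the same bound $\delta(l)$ follows simultaneously for both $u\in\{r,c\}$, since $\pi^g$ does not depend on $u$.
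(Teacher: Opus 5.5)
Your argument is correct. Part (a) is essentially the paper's simulation lemma, but part (b) takes a genuinely different route.

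\textbf{Part (a).} The paper proves the same estimate as a sup-norm recursion over histories, $\Delta\le \frac{2(1-\kappa)^l}{1-\gamma}+\gamma\Delta$, rather than by your trajectory telescoping. It also bounds the reward gap by $\sum_s|b^h(s)-b^g(s)|$ and the transition term by $\frac{1}{1-\gamma}\sum_o|\Pr(o\mid b^h,a)-\Pr(o\mid b^g,a)|$, without centering. That is why it obtains $\delta(l)$, where your oscillation bounds give $\delta(l)/2$.

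\textbf{Part (b).} The paper never uses the occupancy of $\pi$ in $\mathcal{P}$. Instead, it builds an auxiliary history-state model whose reward and next-observation law are $u^g(\mathbb{G}_l(h),a)$ and $\Pr(o\mid b^{\mathbb{G}_l(h)},a)$. It reruns the argument of (a) to compare $\mathcal{P}$ with this model under the history-dependent $\pi$. It then notes that in the auxiliary model the superstate occupancy satisfies the Bellman flow of $\mathcal{M}^g$ exactly. By uniqueness of the flow solution, the conditional policy of that auxiliary occupancy reproduces the auxiliary values exactly.

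Your $\pi^g$ is a different policy: the conditional of the true $\mathcal{P}$-occupancy. Your comparison is a performance-difference-type telescoping with $W=V^{\pi^g}_{u^g}\circ\mathbb{G}_l$. In it, the policy-mismatch term
$\sum_t\gamma^t\mathbb{E}\bigl[Q^{\pi^g}_{u^g}(g_t,a_t)-V^{\pi^g}_{u^g}(g_t)\bigr]=\frac{1}{1-\gamma}\sum_g\mu(g)\bigl[\sum_a\pi^g(a\mid g)Q^{\pi^g}_{u^g}(g,a)-V^{\pi^g}_{u^g}(g)\bigr]$
vanishes exactly, because $\mu(g,a)=\mu(g)\pi^g(a\mid g)$. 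Only the belief-mismatch terms from (a) remain. So the approximate-flow residual you flagged as the main obstacle never needs to be bounded; the cancellation is exact, not approximate.

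\textbf{What each route buys.} The paper's proof is modular: it reuses (a) verbatim plus a textbook occupancy identity, and gets an exact value match in the auxiliary model. Yours dispenses with the fictitious model. It produces a $\pi^g$ that is the natural marginalization of $\pi$'s actual superstate--action frequencies in $\mathcal{P}$, and it keeps the factor-$2$ sharper constant.
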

We defer the proof to Appendix~\ref{app:simulation_lemma} and give a brief overview. Part~(a) is a Simulation Lemma argument~\citep{kearns2002near}: under Assumption~\ref{ass:filter}, the rewards, costs and next-observation distributions of the two models are close, which yields a recursion for the worst-case value gap over histories. For Part~(b), the same argument bounds the value gap between $\mathcal{P}$ and an auxiliary model in which $\pi$ still acts on full histories, but rewards, costs, and transitions depend only on the current superstate. In this model, the superstate--action occupancy measure of $\pi$ is induced by a stationary superstate policy $\pi^g$, which therefore attains the same values in~$\smdp$.

\paragraph{Constrained Optimal Value Gap.}
We next bound the loss in constrained optimal reward from solving Problem~(\ref{problem:primal_superstate}) instead of Problem~(\ref{problem_1}). Specifically, we compare the optimal value $\oVrpistar(\orho)$ of $\mathcal{P}$, attained by the optimal history-dependent policy $\opistar$ (Section~\ref{sec:preliminaries_pomdp}), with the optimal value $\sVrpiast(\srho)$ of $\smdp$, attained by the optimal superstate policy $\spiast$. 

By Proposition~\ref{lemma:simulation_lemma}(b), the optimal history-dependent policy $\opistar$ can be mapped to a superstate policy with nearly the same reward and cost values. If the mapped policy is feasible in $\smdp$, its reward cannot exceed $\sVrpiast(\srho)$, giving the desired bound. If it is infeasible, its constraint violation is bounded by the approximation error $\delta(l)$. To control how much its reward may then exceed $\sVrpiast(\srho)$, we use a standard sensitivity result for constrained optimization. This requires the strict feasibility condition stated below, which is common in the \ac{CMDP} literature~\citep{ding2025convergence,efroni2020exploration}.
\begin{assumption}[{Slater Condition on $\mathcal{P}$}]
\label{ass:slatercpompd}
    There exist $\tilde{\pi}\in\Pi^\mathrm{hist}$ and $\oxi>0$ such that $\oVcpibar(\orho)\le \od-\oxi$.
\end{assumption}
The assumption ensures that $\mathcal{P}$ has a policy whose cost is at least $\oxi$ below the threshold $\od$. By Proposition~\ref{lemma:simulation_lemma}(b), there exists a policy in $\sPi$ whose cost in $\smdp$ is at most $\od-\oxi+\delta(l)$. Therefore, if $\sd$ exceeds this bound, $\smdp$ also satisfies the Slater condition, as formalized below.\looseness-1
\begin{lemma}[Slater Condition on $\smdp$]
Under Assumptions~\ref{ass:filter} and~\ref{ass:slatercpompd}, if
\(
\sd>\od-\oxi+\delta(l),
\)
then there exists $\bar{\pi}\in\sPi$ such that
\[
\sVcpibar(\srho)\leq\sd-\sxi,
\]
where $\sxi:=\sd-\od+\oxi-\delta(l)>0$.
    \label{lemma:superstateslater}
\end{lemma}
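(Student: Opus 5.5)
The plan is to apply Proposition~\ref{lemma:simulation_lemma}(b) directly to the Slater policy of Assumption~\ref{ass:slatercpompd}, followed by a one-line rearrangement. First, take the history-dependent policy $\tilde{\pi}\in\Pi^\mathrm{hist}$ and margin $\oxi>0$ from Assumption~\ref{ass:slatercpompd}, so that $\oVcpibar(\orho)\le\od-\oxi$. Under Assumption~\ref{ass:filter}, Proposition~\ref{lemma:simulation_lemma}(b) applied to $\tilde{\pi}$ yields a stationary superstate policy, which we call $\bar{\pi}\in\sPi$. Its cost value in $\smdp$ satisfies $|\oVcpibar(\orho)-\sVcpibar(\srho)|\le\delta(l)$; here, with slight abuse of the paper's notation, $\oVcpibar$ refers to the value of $\tilde\pi$ in $\mathcal{P}$ and $\sVcpibar$ to the value of the mapped policy $\bar\pi$ in $\smdp$. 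Only the cost part ($u=c$) of the proposition is needed.

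Second, chain the inequalities:
\[
\sVcpibar(\srho)\le \oVcpibar(\orho)+\delta(l)\le \od-\oxi+\delta(l).
\]
By the definition $\sxi:=\sd-\od+\oxi-\delta(l)$, we have $\od-\oxi+\delta(l)=\sd-\sxi$. The chain therefore reads $\sVcpibar(\srho)\le\sd-\sxi$, which is the claimed inequality. Finally, the hypothesis $\sd>\od-\oxi+\delta(l)$ is exactly equivalent to $\sxi>0$, so $\bar\pi$ is strictly feasible in $\smdp$ with a positive margin.

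There is essentially no obstacle here. The only point needing care is that Proposition~\ref{lemma:simulation_lemma}(b) applies to an arbitrary history-dependent policy (not only to superstate policies), which is precisely what allows us to transfer $\tilde\pi$. It is also worth noting that $\sd$ should lie in $[0,1/(1-\gamma)]$ for the superstate problem to be well posed, but this does not affect the argument.
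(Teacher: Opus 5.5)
Your proposal is correct and matches the paper's argument: applying Proposition~\ref{lemma:simulation_lemma}(b) to the Slater policy $\tilde{\pi}$ gives a superstate policy $\bar{\pi}\in\sPi$ with $\sVcpibar(\srho)\le\od-\oxi+\delta(l)=\sd-\sxi$. The hypothesis $\sd>\od-\oxi+\delta(l)$ is exactly the condition $\sxi>0$.
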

The margin $\sxi>0$ bounds the optimal dual multiplier of $\smdp$, which in turn controls how much the optimal value of $\smdp$ can increase when its constraint is relaxed. This yields the following bound.
\begin{proposition}[Constrained Optimal Value Gap]\label{lemma:optgap} 
    Suppose that Assumptions~\ref{ass:filter} and~\ref{ass:slatercpompd} hold and that $\sd$ satisfies the condition of Lemma~\ref{lemma:superstateslater}. Then,
    \begin{equation}
        \!\!\!\oVrpistar(\orho) - \sVrpiast(\srho) \le \delta(l) + \frac{[\od - \sd + \delta(l)]_+}{(1-\gamma)\sxi}.
        \label{eq:proofii}
    \end{equation}
\end{proposition}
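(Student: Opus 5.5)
We plan to follow the outline given just before Assumption~\ref{ass:slatercpompd}. First, apply Proposition~\ref{lemma:simulation_lemma}(b) to the optimal history-dependent policy $\opistar$. This yields a superstate policy $\pi^g\in\sPi$ with
\[
\sVrpi{}^{\,g}\text{-value} \ge \oVrpistar(\orho)-\delta(l)
\quad\text{and}\quad
\text{cost value in }\smdp \le \oVcpistar(\orho)+\delta(l)\le \od+\delta(l).
\]
Hence the reward gap splits as
\[
\oVrpistar(\orho)-\sVrpiast(\srho)\le \delta(l)+\big(V^{\pi^g}_{\sreward}(\srho)-\sVrpiast(\srho)\big),
\]
and it remains to bound the last difference by $[\od-\sd+\delta(l)]_+/((1-\gamma)\sxi)$.

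Next we split into two cases. If $\od+\delta(l)\le \sd$, then $\pi^g$ is feasible for Problem~\eqref{problem:primal_superstate}, so the last difference is $\le 0$ and we are done. Otherwise $\pi^g$ violates the constraint by at most $\epsilon:=\od-\sd+\delta(l)>0$, and we use a sensitivity argument in $\smdp$.

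For the sensitivity argument, consider the Lagrangian $L(\pi,\lambda)=V^\pi_{\sreward}(\srho)-\lambda\,(V^\pi_{\scost}(\srho)-\sd)$. Because $\smdp$ is a finite CMDP, it can be written as a linear program over occupancy measures, and Lemma~\ref{lemma:superstateslater} provides Slater's condition. Strong duality therefore holds, and there is an optimal multiplier $\lambda^\ast\ge0$ with
\[
\sVrpiast(\srho)=\max_{\pi\in\sPi}L(\pi,\lambda^\ast).
\]
Evaluating this at $\pi^g$ gives
\[
V^{\pi^g}_{\sreward}(\srho)-\sVrpiast(\srho)\le \lambda^\ast\big(V^{\pi^g}_{\scost}(\srho)-\sd\big)\le\lambda^\ast\epsilon.
\]

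Finally, we bound the multiplier using the Slater policy $\bar\pi$ of Lemma~\ref{lemma:superstateslater}:
\[
\sVrpiast(\srho)\ge L(\bar\pi,\lambda^\ast)\ge V^{\bar\pi}_{\sreward}(\srho)+\lambda^\ast\sxi .
\]
Since rewards lie in $[0,1]$, the values satisfy $\sVrpiast(\srho)\le 1/(1-\gamma)$ and $V^{\bar\pi}_{\sreward}(\srho)\ge0$. Thus $\lambda^\ast\le 1/((1-\gamma)\sxi)$, and combining the bounds gives \eqref{eq:proofii}.

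The main obstacle is justifying strong duality and the existence of $\lambda^\ast$ over the class $\sPi$ of stationary policies. We would handle it through the occupancy-measure LP formulation of the finite superstate CMDP, whose feasible set is a polytope matching the stationary policies. We would also check that the first case is indeed consistent with the $[\cdot]_+$ term, which it is.
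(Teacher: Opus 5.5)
Your proposal is correct and follows essentially the same route as the paper. It uses the same decomposition via Proposition~\ref{lemma:simulation_lemma}(b), the same occupancy-measure LP justification of strong duality, and the same Slater-based bound $\slambdaast\le 1/((1-\gamma)\sxi)$; the only cosmetic difference is that the paper packages your direct Lagrangian evaluation at $\pi^g$ as a sensitivity lemma for a relaxed problem with threshold $\sd+\Delta_c$, where $\Delta_c=[\od+\delta(l)-\sd]_+$ absorbs your case split.
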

The proof is given in Appendix~\ref{app:optimal_value}. The proposition bounds how much the optimal reward in $\smdp$ can fall below that in $\mathcal{P}$. In particular, if $\sd=\od-\delta(l)$ and the history length $l$ is large enough that $\oxi>2\delta(l)$ to ensure $\sxi>0$, this bound converges to zero as $l$ grows.

\paragraph{Choice of $\sd$.} 
We seek to learn a policy that is near-optimal and (nearly) feasible in $\mathcal{P}$. By Proposition~\ref{lemma:simulation_lemma}(a), choosing $\sd=\od-\delta(l)$ ensures that every policy feasible in $\smdp$ is also feasible in $\mathcal{P}$. For sufficiently large $l$, Lemma~\ref{lemma:superstateslater} ensures that $\smdp$ remains strictly feasible. Our results hold more generally for any $\sd$ satisfying the condition of Lemma~\ref{lemma:superstateslater}, which reveals a trade-off between optimality and feasibility. Increasing $\sd$ reduces the optimality-loss bound in Proposition~\ref{lemma:optgap}, but may allow a policy feasible in $\smdp$ to violate the constraint in $\mathcal{P}$ by up to $[\sd-\od+\delta(l)]_+$, as follows from Proposition~\ref{lemma:simulation_lemma}(a). Theorem~\ref{thm:theorem} makes this trade-off explicit.\looseness-1

 \section{Algorithm}
\label{sec:algorithm}
\subsection{Lagrangian Formulation}
To solve Problem~(\ref{problem:primal_superstate}), we incorporate the cost constraint into the objective using a nonnegative Lagrange multiplier \(\lambda\). This yields the following max–min formulation:\looseness-1
\begin{align}
\max_{\pi\in\sPi}\min_{\lambda\in\mathbb R_{\geq0}}L(\pi,\lambda):=\sVrpi(\srho)-\lambda\bigl(\sVcpi(\srho)-\sd\bigr).\label{eq:superstate_dual}
\end{align}
For a fixed \(\lambda\), the associated dual function is \(D^\lambda(\srho):=\max_{\pi\in\sPi}L(\pi,\lambda)\), and we denote an optimal dual variable by \(\slambdaast\in\arg\min_{\lambda\in\mathbb R_{\geq0}}D^\lambda(\srho)\).

The following lemma establishes strong duality, thereby justifying the Lagrangian formulation, and provides an explicit upper bound on \(\slambdaast\).  Both statements follow from standard \ac{CMDP} arguments~\citep{altman2021constrained,ding2025convergence}, which apply to $\smdp$ since it satisfies the Slater condition by Lemma~\ref{lemma:superstateslater}; for completeness, we include a proof in Appendix~\ref{app:dualityandbound}.

\begin{lemma}\label{lemma_dualityandbound}
Suppose that Assumptions~\ref{ass:filter} and~\ref{ass:slatercpompd} hold and that $\sd$ satisfies the condition of Lemma~\ref{lemma:superstateslater}. Then,
\[
\textnormal{(i)}\, V_{r^g}^{\spiast}(\srho)=D^{\slambdaast}(\srho),
\quad
\textnormal{(ii)}\, \slambdaast\le \lambda^{\text{max}},
\]
where $\lambda^{\text{max}}:=2(\sxi(1-\gamma))^{-1}$.
\end{lemma}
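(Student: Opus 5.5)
For part (i), I would reduce strong duality for $\smdp$ to a finite-dimensional linear program over discounted occupancy measures. $\smdp$ is a finite-state, finite-action CMDP, so every $\pi\in\sPi$ induces a discounted state--action occupancy measure $\nu^\pi(g,a):=(1-\gamma)\sum_t\gamma^t\Pr(g_t=g,a_t=a)$. The set of such measures is the polytope cut out by the Bellman flow constraints
\[
\sum_a \nu(g',a)=(1-\gamma)\srho(g')+\gamma\sum_{g,a}\sP(g'\mid g,a)\nu(g,a),\qquad \nu\ge 0.
\]
Conversely, every feasible $\nu$ is induced by the stationary policy $\pi(a\mid g)\propto\nu(g,a)$, with an arbitrary choice where the normalizer vanishes. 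In these variables both $\sVrpi(\srho)$ and $\sVcpi(\srho)$ are linear in $\nu$, scaled by $(1-\gamma)^{-1}$. Hence Problem~\eqref{problem:primal_superstate} is a linear program, and $L(\pi,\lambda)$ is bilinear in $(\nu,\lambda)$ over a compact convex set times $\mathbb{R}_{\ge0}$.

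Lemma~\ref{lemma:superstateslater} provides a strictly feasible $\bar\pi$, so Slater's condition holds for this LP. Standard convex (LP) duality then gives zero duality gap and attainment of the dual minimum:
\[
\sVrpiast(\srho)=\max_\pi\min_{\lambda\ge0}L(\pi,\lambda)=\min_{\lambda\ge0}D^\lambda(\srho)=D^{\slambdaast}(\srho).
\]
The one step needing care is that maximizing over $\sPi$ is the same as maximizing over the occupancy polytope. This follows from the surjectivity of $\pi\mapsto\nu^\pi$ onto the flow polytope, which I would state and cite from \citet{altman2021constrained}.

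For part (ii), I would use the standard Slater argument. By (i) and the definition of $D$,
\[
\sVrpiast(\srho)=D^{\slambdaast}(\srho)\ge L(\bar\pi,\slambdaast)=V^{\bar\pi}_{r^g}(\srho)+\slambdaast\bigl(\sd-\sVcpibar(\srho)\bigr)\ge V^{\bar\pi}_{r^g}(\srho)+\slambdaast\sxi .
\]
Rearranging gives
\[
\slambdaast\le\frac{\sVrpiast(\srho)-V^{\bar\pi}_{r^g}(\srho)}{\sxi}.
\]
Since $\sreward\in[0,1]$, both value functions lie in $[0,1/(1-\gamma)]$, so the numerator is at most $1/(1-\gamma)$. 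This yields $\slambdaast\le 1/(\sxi(1-\gamma))$, which is within $\lambda^{\max}=2(\sxi(1-\gamma))^{-1}$; the factor $2$ is slack, presumably kept so that later projection steps have room.

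I expect the main obstacle to be the occupancy-measure equivalence underlying (i), specifically handling superstates that are unreachable under some policies. The ones outside the support of the initial distribution and transitions are exactly where the normalization defining $\pi$ from $\nu$ breaks down. Such states carry zero occupancy, however, so any action choice there leaves both values unchanged, and the argument goes through.
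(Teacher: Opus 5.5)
Your proposal is correct and follows essentially the same route as the paper. For (i), you recast Problem~\eqref{problem:primal_superstate} as a linear program over the occupancy-measure polytope and invoke LP duality with the strictly feasible $\bar\pi$ from Lemma~\ref{lemma:superstateslater}; for (ii), you use the Slater bound $\sVrpiast(\srho)\ge\sVrpibar(\srho)+\slambdaast\sxi$ to get $\slambdaast\le 1/((1-\gamma)\sxi)=\lambda^{\mathrm{max}}/2$. One correction to your guess about the factor $2$: the paper keeps it so that $\lambda^{\mathrm{max}}\ge 2\slambdaast$ holds when Lemma~\ref{lemma:ding} is applied in the constraint-violation analysis.
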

Part~(i) shows that the Lagrangian formulation in~\eqref{eq:superstate_dual} and Problem~\eqref{problem:primal_superstate} have the same optimal value. Part~(ii) shows that the search for the optimal dual variable \(\slambdaast\) can be restricted to a compact interval $\left[0,\lambda^{\text{max}}\right]$. Together, these results motivate a primal–dual method that alternates between improving the policy and performing a projected update of the Lagrange multiplier. We first present these two updates assuming access to the exact value functions of the current policy $\pi^k$ in $\smdp$, namely $\sQlagpik$ and $\sVcpik(\srho)$ (defined below), and then introduce the estimation oracle used to implement them.\looseness-1

\paragraph{Policy update.} 
For a fixed multiplier $\lambda^k$, maximizing the Lagrangian over policies is equivalent to maximizing $V_{r^g-\lambda^k c^g}^{\pi}(\srho)$ under the modified reward $r^g-\lambda^k c^g$. We define the state-action value function of $\pi^k$ for this reward as
\begin{align*}
    &\sQlagpik(g,a) :=\mathbb{E}_{\pi^k,\sP}\!\left[\sum_{t=0}^{\infty}\gamma^t(r^g-\lambda^k c^g)(g_t,a_t)\,\middle|\,g_0=g,\ a_0=a\right].
\end{align*}
We then update the policy using exponential weights:
\begin{align}
\pi^{k+1}(a\mid g) \propto \pi^k(a\mid g)\exp\!\left(\eta_1 \sQlagpik(g,a)\right), \label{eq:politex_update_primal}
\end{align}
where the stepsize $\eta_1>0$ will be specified in Theorem~\ref{thm:theorem}. This is called \textsc{Politex} update~\citep{abbasi2019politex}.\footnote{Under a softmax parameterization, this update can also be interpreted as a natural policy gradient step with an appropriate stepsize.} When the multiplier is fixed, repeatedly applying this update yields an average optimality gap that decreases to zero at a sublinear rate for the corresponding modified reward.

\paragraph{Dual update.} Given the current policy \(\pi^k\), we update the multiplier to penalize violations of the cost constraint. Since the dual problem minimizes the Lagrangian with respect to \(\lambda\), we use the projected gradient descent step
\begin{equation}
    \lambda^{k+1}= \mathrm{Proj}_{[0, \lambda^{\mathrm{max}}]}\big(\lambda^k - \eta_2(\sd-\sVcpik(\srho))\big),
    \label{eq:dual_update}
\end{equation}
where the stepsize $\eta_2>0$ will be specified in Theorem~\ref{thm:theorem}. When the current policy violates the constraint, i.e., \(\sVcpik(\srho)>\sd\), the multiplier increases and imposes a larger cost penalty in the next policy update; otherwise, it decreases.

\subsection{Estimation Oracle}
The policy update in~\eqref{eq:politex_update_primal} requires the state-action values \(\sQlagpik(g,a)\), whereas the dual update in~\eqref{eq:dual_update} requires the cost value \(\sVcpik(\srho)\). Since neither quantity is directly available, we assume access to an oracle that estimates both using trajectories generated by \ac{CPOMDP} \(\mathcal P\). 
\begin{oracle}\label{orcale}
There exists \(\delta_{\mathrm{est}}\geq 0\) such that, at every iteration \(k\), for any policy \(\pi^k\in\sPi\), \(\lambda^k\in[0, \lambda^{\mathrm{max}}]\), and \((g,a)\in\mathcal G\times\mathcal A\), the oracle \(\mathcal B\) returns estimates \(\sQbarlagpik(g,a)\) and \(\sVbarcpik(\srho)\) satisfying
\begin{equation}
\begin{aligned}
    \mathbb E\!\left[
        \left|\sQlagpik(g,a)-\sQbarlagpik(g,a)\right|
    \right]
    &\leq \delta_{\mathrm{est}},\\
    \mathbb E\!\left[
        \left|\sVcpik(\srho)-\sVbarcpik(\srho)\right|
    \right]
    &\leq \delta_{\mathrm{est}},
    \label{ass:oracle}
\end{aligned}
\end{equation}
where the expectations are taken over the randomness of the estimation procedure.
\end{oracle}
The oracle abstraction allows us to accommodate different policy evaluation procedures. It can be instantiated using either the model-based method~\cite{jordan2026model} or the model-free TD policy evaluation method~\cite{anjarlekar2025scalable} (details in Appendix \ref{app:samplecomplexity}). For these two instantiations, the estimation error $\delta_{\mathrm{est}}$ has two sources: estimating values from finite-length trajectories and the mismatch between the full-history dynamics and the finite-history superstate model. Under Assumption~\ref{ass:filter}, the latter error decreases as the window length $l$ increases for both methods.

Note that exact values \(\sQlagpik\) and \(\sVcpik\) lie in the intervals
\[
\left[-\frac{\lambda^{\mathrm{max}}}{1-\gamma},\,\frac{1}{1-\gamma}\right]
\quad\text{and}\quad
\left[0,\,\frac{1}{1-\gamma}\right],
\]
respectively. Before using their estimates in the policy and dual updates, we therefore clip \(\sQbarlagpik\) and \(\sVbarcpik\) to the corresponding intervals. This keeps estimates bounded without increasing estimation errors.

Combining the preceding steps, we obtain Algorithm~\ref{alg:politex} for learning an optimal policy for the original \ac{CPOMDP}. At each iteration, the algorithm updates the policy using \textsc{Politex} and the dual variable using projected gradient descent, based on the value estimates provided by Estimation Oracle~\(\mathcal B\). 
\begin{algorithm}[h]
    \caption{Primal--Dual \textsc{Politex}}
    \begin{algorithmic}[1]
        \State \textbf{Input:} stepsizes $\eta_1, \eta_2 > 0$, number of iterations $K$, constraint threshold $\sd$, projection set \([0, \lambda^{\mathrm{max}}]\), and Estimation Oracle \(\mathcal B\)
        \State \textbf{Initialize:} \(\pi^0(a\mid g)= 1/|\mathcal A|\) for all \((g,a)\in\mathcal G\times\mathcal A\), and \(\lambda^0= 0\)        
        \For{$k = 0, \dots, K-1$}
            \State \parbox[t]{0.9\linewidth}{Get $\sQbarlagpik(g,a), \forall g,a$ and $ \sVbarcpik(\srho)$ from $\mathcal{B}$} 
        
            \For{$(g,a)\in\mathcal G\times\mathcal A$} 
                 \State \parbox[t]{0.9\linewidth}{\(\pi^{k+1}(a | g)\propto\pi^k(a|g) \exp\!\left(\eta_1 \sQbarlagpik(g,a)\right)\)}
            \EndFor
            \State \(\lambda^{k+1} = \mathrm{Proj}_{[0, \lambda^{\mathrm{max}}]} \Bigl( \lambda^k + \eta_2 (\sVbarcpik(\srho) - \sd ) \Bigr)\)
        \EndFor
        \end{algorithmic}
    \label{alg:politex}
\end{algorithm}
\paragraph{Remark.} Since $|\mathcal{G}|=O(|\mathcal{O}|^{l+1}|\mathcal{A}|^{l})$, in each iteration, the tabular policy update requires time and memory exponential in the window length $l$; this reflects the trade-off between approximation accuracy and computational cost. A similar trade-off is encountered in prior work on finite-window methods~\cite{cayci2024finite,anjarlekar2025scalable,jordan2026model}.

\section{Theoretical Guarantees}
\label{sec:guarantees}
In this section, we evaluate the performance of Algorithm~\ref{alg:politex} in the original \ac{CPOMDP} using the average optimality gap and average constraint violation, defined as follows:
\begin{align*}
    \mathrm{OpGap}_K&:=\frac{1}{K}\sum_{k=0}^{K-1}\mathbb{E}\!\left[\oVrpistar(\orho)-\oVrpik(\orho)\right],\\
    \mathrm{CViol}_K&:=\left[\frac{1}{K}\sum_{k=0}^{K-1}\mathbb{E}\!\left[\oVcpik(\orho)-\od\right]\right]_+,
\end{align*}
where the expectations are over the randomness of the estimation oracle, and $\{\pi^k\}_{k=0}^{K-1}$ denotes the policies generated by Algorithm~\ref{alg:politex} over $K$ iterations.
\begin{theorem}
\label{thm:theorem}
Given Estimation Oracle~\(\mathcal B\) and under Assumptions~\ref{ass:filter} and~\ref{ass:slatercpompd}, we run Algorithm~\ref{alg:politex} for \(K\) iterations with constraint threshold
\(
\sd>\od-\oxi+\delta(l)
\) and
stepsizes
\begin{align*}
\eta_1&:=\frac{(1-\gamma)^2\sxi}{(1-\gamma)\sxi+2}\sqrt{\frac{2\log|\mathcal A|}{K}},\\
\eta_2&:=\frac{2}{\sxi(1+(1-\gamma)\sd)\sqrt K}.
\end{align*}
Then,
\begin{small}
 \begin{align*}
&\mathrm{OpGap}_K \leq \mathcal{O}\!\left(\delta(l)+\frac{\delta_{\mathrm{est}}+[\od-\sd+\delta(l)]_+}{\sxi}\right) +\mathcal{O}\!\left(\frac{\sqrt{\log|\mathcal A|}}{\sxi\sqrt K}\right),\\
&\mathrm{CViol}_K \leq \mathcal{O}\!\left([\sd-\od+\delta(l)]_++\delta_{\mathrm{est}}\right)+\mathcal{O}\!\left(\frac{\sqrt{\log|\mathcal A|}}{\sqrt K}\right).
\end{align*}   
\end{small}
\end{theorem}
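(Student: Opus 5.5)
The plan is to work entirely in the superstate CMDP $\smdp$ first and transfer to $\mathcal{P}$ only at the end. For the optimality gap I would decompose, for each $k$,
\[
\oVrpistar(\orho)-\oVrpik(\orho)=\bigl[\oVrpistar(\orho)-\sVrpiast(\srho)\bigr]+\bigl[\sVrpiast(\srho)-\sVrpik(\srho)\bigr]+\bigl[\sVrpik(\srho)-\oVrpik(\orho)\bigr].
\]
The first bracket is controlled by Proposition~\ref{lemma:optgap}. The third is at most $\delta(l)$ by Proposition~\ref{lemma:simulation_lemma}(a), since $\pi^k\in\sPi$. Similarly, $\oVcpik(\orho)-\od\le \sVcpik(\srho)-\sd+[\sd-\od+\delta(l)]_+$, so the constraint violation in $\mathcal{P}$ reduces to the violation in $\smdp$. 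It therefore suffices to show that the averaged superstate gap $\frac1K\sum_k\mathbb{E}[\sVrpiast-\sVrpik]$ is $O(\sqrt{\log|\mathcal A|}/(\sxi\sqrt K)+\delta_{\mathrm{est}}/\sxi)$, and that the averaged superstate violation $[\frac1K\sum_k\mathbb{E}[\sVcpik-\sd]]_+$ is $O(\sqrt{\log|\mathcal A|/K}+\delta_{\mathrm{est}})$.

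For the superstate analysis I would follow the standard primal--dual NPG/Politex argument (as in Ding et al.).

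\textbf{Primal step.} Apply the performance difference lemma to the Lagrangian reward $r^g-\lambda^k c^g$ with comparator $\spiast$. This gives
\[
V^{\spiast}_{r^g-\lambda^k c^g}-V^{\pi^k}_{r^g-\lambda^k c^g}=\frac{1}{1-\gamma}\,\mathbb{E}_{g\sim d^{\spiast}}\langle \spiast(\cdot|g)-\pi^k(\cdot|g),Q^{\pi^k}_{\lambda^k}(g,\cdot)\rangle .
\]
Replace $Q$ by the clipped estimate $\bar Q$, which costs $2\delta_{\mathrm{est}}/(1-\gamma)$ in expectation. Then use the mirror-descent (exponential weights) three-point inequality per state, weighted by $d^{\spiast}$. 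Telescoping the KL terms $\mathrm{KL}(\spiast\|\pi^k)$, starting from the uniform $\pi^0$, gives a sum bounded by $\frac{\log|\mathcal A|}{\eta_1(1-\gamma)}+\frac{\eta_1 K B^2}{2(1-\gamma)}$, where $B=(1+\lambda^{\max})/(1-\gamma)$ bounds $|\bar Q|$. With the stated $\eta_1$ this is $O(\sqrt{K\log|\mathcal A|}\,/((1-\gamma)^2\sxi))$.

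\textbf{Dual step.} The projected update gives, for any $\lambda\in[0,\lambda^{\max}]$,
\[
\sum_k(\lambda^k-\lambda)(\sd-\bar V^{\pi^k}_{c^g})\le \frac{\lambda^2}{2\eta_2}+\frac{\eta_2 K}{2(1-\gamma)^2}+(\text{estimation terms}).
\]
Combining with the primal step, and using feasibility of $\spiast$ (so $\lambda^k(\sVcpiast-\sd)\le0$), yields a master inequality
\[
\frac1K\sum_k\bigl[\sVrpiast-\sVrpik\bigr]+\lambda\,\frac1K\sum_k\bigl[\sVcpik-\sd\bigr]\le O\!\Bigl(\tfrac{\sqrt{\log|\mathcal A|}}{\sxi\sqrt K}\Bigr)+O\!\Bigl(\tfrac{\delta_{\mathrm{est}}}{\sxi}\Bigr).
\]
Here the $\lambda$-dependent estimation error is bounded through $\lambda\le\lambda^{\max}=2/(\sxi(1-\gamma))$.

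\textbf{Reading off the two bounds.} Taking $\lambda=0$ gives the optimality bound. For the violation, take $\lambda=\lambda^{\max}$ whenever the average violation is positive. Then use strong duality (Lemma~\ref{lemma_dualityandbound}) together with $\slambdaast\le\lambda^{\max}/2$, which is the factor $2$ built into $\lambda^{\max}$. Concretely, $\sVrpiast-\sVrpik\ge-\slambdaast(\sVcpik-\sd)$, so averaging gives $(\lambda^{\max}-\slambdaast)\cdot\text{viol}\le\text{RHS}$. Since $\lambda^{\max}-\slambdaast\ge 1/(\sxi(1-\gamma))$, dividing by it cancels the $1/\sxi$ and yields the violation bound without $1/\sxi$. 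This averaged statement uses convexity in occupancy measures: the averaged policy's occupancy is the average of the occupancies.

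\textbf{Main obstacle.} I expect the delicate step to be handling the oracle errors inside the coupled dual inequality. The estimate $\bar V_{c^g}$ enters multiplied by $\lambda^k-\lambda$, and $\lambda^k$ is random and correlated with past estimates. I would bound $\mathbb{E}|(\lambda^k-\lambda)(V-\bar V)|\le\lambda^{\max}\delta_{\mathrm{est}}$ crudely, using only that $\lambda^k\in[0,\lambda^{\max}]$. That crude bound still lands within the stated rates after the final division by $\lambda^{\max}-\slambdaast$. Finally, I would tune constants so that the chosen $\eta_2$ balances $\lambda^{\max 2}/\eta_2$ against $\eta_2K/(1-\gamma)^2$.
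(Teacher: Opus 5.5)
Your proposal is correct and follows the paper's proof essentially step by step: the same decomposition, the performance difference lemma plus an exponential-weights bound for the primal step, fixed-comparator telescoping of the projected dual update with $\lambda=0$ for optimality and $\lambda=\lambda^{\mathrm{max}}$ for violation, and the bound $\lambda^*\le\lambda^{\mathrm{max}}/2$ from the proof of Lemma~\ref{lemma_dualityandbound}. The differences are cosmetic. Your per-state KL-telescoping three-point inequality is the same computation as the exponential-weights regret bound the paper cites, giving the identical pathwise $\log|\mathcal A|/\eta_1+\eta_1 K B^2/2$. Your iterate-by-iterate use of strong duality reproves the adapted Lemma~\ref{lemma:ding} inline, so the occupancy-convexity construction of an averaged policy, which you mention and the paper uses, is not actually needed in your route.
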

The complete bounds are given in Appendix~\ref{app:fulltheorem} and full proof in Appendix \ref{app:omittedproofs}.

The bounds combine three sources of error. First, the approximation error $\delta(l)$ decays geometrically with $l$, consistent with finite-history analyses of unconstrained \acp{POMDP}~\citep{anjarlekar2025scalable,jordan2026model}. Second, the bounds scale linearly with the estimation error $\delta_{\mathrm{est}}$ specified in~\eqref{ass:oracle}. Third, the optimization error from running finitely many primal--dual iterations decays as $\mathcal{O}(\sqrt{\log|\mathcal{A}|/K})$, matching rates for primal-dual methods in \acp{CMDP}~\citep{ding2025convergence}.

The threshold $\sd$ trades optimality for feasibility. Lowering $\sd$ reduces $[\sd-\od+\delta(l)]_+$ in the bound on $\mathrm{CViol}_K$, but increases $[\od-\sd+\delta(l)]_+$ and reduces the Slater margin $\sxi$ (Lemma~\ref{lemma:superstateslater}); both effects loosen the bound on $\mathrm{OpGap}_K$.

To our knowledge, this is the first work to provide non-asymptotic learning guarantees that jointly bound average reward suboptimality and constraint violation in \acp{CPOMDP}. Since these guarantees hold on average across iterates, they do not ensure per-iteration feasibility or final-policy performance. Establishing last-iterate guarantees with safe exploration is a key future direction~\cite{pmlr-v258-ni25a,ding2023last}.
\subsection{Proof Sketch}
We split $\mathrm{OpGap}_K$ and $\mathrm{CViol}_K$ into model approximation terms, controlled by the results of Section~\ref{sec:approxguarantee}, and optimization terms in $\smdp$, controlled by a primal--dual analysis based on~\citep{ding2025convergence}, combined with an exponential weights regret bound. Throughout, we omit the dependence on the initial distributions $\orho, \srho$ for simplicity.


We decompose the average optimality gap as
\begin{align*}
    \mathrm{OpGap}_K  = & \underbrace{\frac{1}{K} \sum_{k=0}^{K-1} \mathbb{E}[\sVrpik- \oVrpik]}_{(i)}+ \underbrace{\oVrpistar-\sVrpiast}_{(ii)}  +  \underbrace{\frac{1}{K} \sum_{k=0}^{K-1}\mathbb{E}[\sVrpiast- \sVrpik]}_{(iii)},
\end{align*}
and the constraint violation as
\begin{equation*}
    \begin{aligned}
        \mathrm{CViol}_K=&\Bigg[ \sd -\od+\underbrace{\frac{1}{K }\sum_{k=0}^{K-1} \mathbb{E} [\oVcpik -\sVcpik]}_{(iv)} +\underbrace{\frac{1}{K }\sum_{k=0}^{K-1} \mathbb{E} [\sVcpik-\sd]}_{(v)}\Bigg]_+.     \label{eq:costdecomposition}
    \end{aligned}
\end{equation*}
Terms $(i)$ and $(iv)$ are policy evaluation gaps between $\mathcal{P}$ and $\smdp$, each bounded by $\delta(l)$ by Proposition~\ref{lemma:simulation_lemma}(a). Term $(ii)$ is the gap between the optimal reward values of the two problems, bounded by Proposition~\ref{lemma:optgap}.
Terms $(iii)$ and $(v)$ are the average reward gap and the average constraint violation of the iterates in $\smdp$. To control them, we first prove an average Lagrangian performance bound.\looseness-1

\begin{lemma}[Bounded Average Performance]
\label{lemma:boundedaverage}
Let $\spiast$ be an optimal policy for the superstate $\smdp$. If the oracle $\mathcal{B}$ satisfies inequalities in~\eqref{ass:oracle}, after running Algorithm~\ref{alg:politex} for $K$ iterations, it holds
\begin{equation*}
\frac{1}{K}\sum_{k=0}^{K-1}\mathbb{E}[(\sVrpiast  - \lambda^k \sVcpiast) -( \sVrpik -\lambda^k\sVcpik)] \le \delta_{\mathrm{lag}},
\end{equation*}
where $\delta_\mathrm{lag}:= \frac{2\delta_\mathrm{est}}{1-\gamma }+ \frac{1+\lambda^\mathrm{max}}{(1-\gamma)^2} \sqrt{\frac{2 \log |\mathcal{A}|}{K}}$.
\end{lemma}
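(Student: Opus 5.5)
The plan is the standard \textsc{Politex}/mirror-descent regret argument in $\smdp$, with the estimation error carried through the per-state regret. For each $k$, the Lagrangian-reward Q-function $Q_k:=\sQlagpik$ belongs to the MDP $\smdp$ with reward $r^g-\lambda^k c^g$. By the performance difference lemma in $\smdp$,
\[
(\sVrpiast-\lambda^k\sVcpiast)-(\sVrpik-\lambda^k\sVcpik)=\frac{1}{1-\gamma}\mathbb{E}_{g\sim d^{\spiast}}\Big[\big\langle \spiast(\cdot\mid g)-\pi^k(\cdot\mid g),\,Q_k(g,\cdot)\big\rangle\Big],
\]
where $d^{\spiast}$ is the normalized discounted occupancy of $\spiast$ from $\srho$. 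This occupancy does not depend on $k$, which is the key feature: we can average over $k$ and bound the inner sum state by state.

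Write $\bar Q_k$ for the clipped oracle estimate. Then split
\[
\langle \spiast-\pi^k,Q_k\rangle=\langle \spiast-\pi^k,\bar Q_k\rangle+\langle \spiast-\pi^k,Q_k-\bar Q_k\rangle .
\]
\begin{enumerate}[label=(\roman*), leftmargin=*]
\item \emph{Estimation term.} The second inner product is at most $\sum_a(\spiast(a\mid g)+\pi^k(a\mid g))|Q_k(g,a)-\bar Q_k(g,a)|$. Here $\pi^k$ depends only on the oracle outputs at earlier iterations, so conditioning on the past and applying the oracle bound~\eqref{ass:oracle} pointwise in $(g,a)$ gives an expectation of at most $2\delta_{\mathrm{est}}$. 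After the $1/(1-\gamma)$ prefactor this yields the $2\delta_{\mathrm{est}}/(1-\gamma)$ term. Clipping preserves this bound, since clipping to an interval containing $Q_k$ can only decrease the error.
\item \emph{Regret term.} At each fixed $g$, the update $\pi^{k+1}(\cdot\mid g)\propto\pi^k(\cdot\mid g)\exp(\eta_1\bar Q_k(g,\cdot))$ is Hedge with gains $\bar Q_k(g,\cdot)$. These gains lie in an interval of width $(1+\lambda^{\max})/(1-\gamma)$ after clipping. The standard Hedge regret bound against any fixed comparator, here $\spiast(\cdot\mid g)$, starting from the uniform $\pi^0$, is
\[
\sum_{k=0}^{K-1}\langle \spiast-\pi^k,\bar Q_k\rangle\le\frac{\log|\mathcal A|}{\eta_1}+\frac{\eta_1 K}{2}\Big(\frac{1+\lambda^{\max}}{1-\gamma}\Big)^2 .
\]
This holds deterministically for every realization of the estimates.
\item \emph{Stepsize.} With $\lambda^{\max}=2/(\sxi(1-\gamma))$ one has $(1+\lambda^{\max})/(1-\gamma)=((1-\gamma)\sxi+2)/((1-\gamma)^2\sxi)$. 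The choice of $\eta_1$ in Theorem~\ref{thm:theorem} is exactly $\frac{1-\gamma}{1+\lambda^{\max}}\sqrt{2\log|\mathcal A|/K}$. Substituting it into the Hedge bound and dividing by $K$ gives $\frac{1+\lambda^{\max}}{1-\gamma}\sqrt{2\log|\mathcal A|/K}$. Since Lemma~\ref{lemma:boundedaverage} is stated without specifying $\eta_1$, I would state that this choice is assumed.
\end{enumerate}

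Averaging over $k$, taking the expectation over $g\sim d^{\spiast}$, and multiplying by $1/(1-\gamma)$ gives the regret contribution $\frac{1+\lambda^{\max}}{(1-\gamma)^2}\sqrt{2\log|\mathcal A|/K}$. Adding the estimation contribution from (i) gives exactly $\delta_{\mathrm{lag}}$.

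The main obstacle is handling the randomness. Since $\lambda^k$ and $\pi^k$ are random, the performance-difference identity has to be applied pathwise. The Hedge bound then holds almost surely, and expectations are taken only at the end, with the estimation term handled through the tower property. Because the oracle guarantee is stated for any $\pi^k$ and $\lambda^k$ in range, it applies conditionally on the past iterates, so this should work. A secondary issue is that the gain range needs a shift to $[0,\text{width}]$ for the Hedge lemma; Hedge is invariant to constant shifts per state, so this is harmless.
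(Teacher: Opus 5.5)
Your proposal is correct and follows the paper's proof essentially step by step. It uses the performance difference lemma under the $k$-independent occupancy of $\pi^*$, then adds and subtracts the clipped estimates to get two $\delta_{\mathrm{est}}/(1-\gamma)$ terms, and finally applies a pathwise per-state exponential-weights bound $\log|\mathcal A|/\eta_1+\eta_1KR^2/2$ with $R=(1+\lambda^{\max})/(1-\gamma)$ and $\eta_1=R^{-1}\sqrt{2\log|\mathcal A|/K}$. The only differences are cosmetic: the paper reaches the same constant by treating the payoffs as lying in $[-R,R]$, and your tower-property argument for the $\pi^k$-weighted estimation term spells out a step the paper states in one line.
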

The proof uses the Performance Difference Lemma~\citep{kakade2002approximately} to express the Lagrangian value gap through $\sQlagpik$. Adding and subtracting estimates $\sQbarlagpik$ decomposes it into two estimation error terms, bounded by~\eqref{ass:oracle}, and a statewise term comparing cumulative estimated Lagrangian Q-values under $\spiast$ and the iterates $\pi^k$. The latter is bounded by the exponential weights regret bound~\citep{cesa2006prediction}, and choosing $\eta_1$ appropriately yields $\delta_{\mathrm{lag}}$. Unlike the telescoping KL-divergence argument of~\citet{ding2025convergence} and the high-probability analysis of \textsc{Politex}~\citep{abbasi2019politex}, this argument applies the regret bound pathwise and bounds the estimation errors directly in expectation. The full proof is in Appendix~\ref{app:boundedaverage}.

\paragraph{Optimality.} Adding and subtracting $\lambda^k(\sVcpiast-\sVcpik)$ gives
\begin{equation*}
    \begin{aligned}
        (iii) = {} & \frac{1}{K}\sum_{k=0}^{K-1} \mathbb{E}[\lambda^k(\sVcpiast- \sVcpik)] 
    +\frac{1}{K}\sum_{k=0}^{K-1}\mathbb{E}[(\sVrpiast  - \lambda^k \sVcpiast) -( \sVrpik -\lambda^k\sVcpik)].
    \end{aligned}
\end{equation*}
By telescoping the projected dual update, using non-expansiveness of the projection, feasibility of $\spiast$, and~\eqref{ass:oracle}, the first sum is at most $\frac{\eta_2}{2} \big(\frac{1}{1-\gamma} +\sd\big)^2  +  \lambda^\mathrm{max} \delta_\mathrm{est}$. The second sum is at most $\delta_\mathrm{lag}$ by Lemma~\ref{lemma:boundedaverage}. Hence\looseness-1
\begin{equation}
    (iii) \le \delta_\mathrm{lag}+  \frac{\eta_2}{2}\bigg(\frac{1}{1-\gamma}+\sd\bigg)^2 + \lambda^\mathrm{max}\delta_\mathrm{est},
    \label{eq:proofiii}
\end{equation}
and substituting $\eta_2$ gives the bound in Theorem~\ref{thm:theorem}. The detailed proof is given in Appendix~\ref{app:optimalitygap}.

\paragraph{Constraint.} By Proposition~\ref{lemma:simulation_lemma}(a), combined with the fact that $[x+y]_+ \le [x]_++[y]_+$, we have
\begin{equation}
    \mathrm{CViol}_K \le [\sd-\od+\delta(l)]_+ + [(v)]_+.
    \label{eq:cvioldec_2}
\end{equation}
For any fixed $\lambda \in [0, \lambda^{\mathrm{max}}]$, telescoping the projected dual update from $\lambda^0=0$ and using~\eqref{ass:oracle} gives
\begin{equation*}
    \begin{aligned}
        &\frac{1}{K}\sum_{k=0}^{K-1}\mathbb{E}[(\lambda-\lambda^k)(\sVcpik-\sd)] \le \frac{\lambda^2}{2\eta_2K} + \frac{\eta_2}{2} \bigg( \frac{1}{1-\gamma}+\sd \bigg)^2 + \lambda^{\mathrm{max}}\delta_{\mathrm{est}},
    \end{aligned}
\end{equation*}
while Lemma~\ref{lemma:boundedaverage} and feasibility of $\spiast$ give $\frac{1}{K}\sum_{k=0}^{K-1}\mathbb{E}[\sVrpiast - \sVrpik + \lambda^k(\sVcpik-\sd)]\le\delta_\mathrm{lag}$. Adding the two cancels the $\lambda^k$ terms and, using $\lambda\le\lambda^{\mathrm{max}}$, yields
\begin{equation*}
    \frac{1}{K}\sum_{k=0}^{K-1}\mathbb{E}[\sVrpiast - \sVrpik] + \lambda\cdot(v) \le \varepsilon_c,
\end{equation*}
where $\varepsilon_c$ collects $\delta_\mathrm{lag}$ and the right-hand side above with $\lambda=\lambda^{\mathrm{max}}$. Maximizing over $\lambda\in[0, \lambda^{\mathrm{max}}]$ turns $\lambda\cdot(v)$ into $\lambda^\mathrm{max} [(v)]_+$. Since $\lambda^\mathrm{max}\ge 2\slambdaast$ by Lemma~\ref{lemma_dualityandbound}, Lemma~\ref{lemma:ding} (adapted from~\citet[Lemma~5]{ding2025convergence}) gives $[(v)]_+ \le (1-\gamma)\sxi\varepsilon_c$. Substituting this and $\eta_2$ into~\eqref{eq:cvioldec_2} yields the stated result. Full proof is in Appendix~\ref{app:constraint_violation}.

\section{Simulations} 
\label{sec:simulations}
To illustrate how the algorithm’s performance depends on $K$ and $l$ in a controlled setting, we evaluate it on a \(7\times 7\) gridworld adapted from~\citet{sutton1998reinforcement} and shown in Figure~\ref{fig:gridworld}.
The latent state is the agent's position in the grid, and the action set is \(\{\texttt{up},\texttt{right},\texttt{down},\texttt{left}\}\). Both the transition and observation dynamics are stochastic. Instead of observing its position directly, the agent receives one of five observations:
\(
\mathcal O=\{\texttt{safe},\texttt{edge},\texttt{near-hazard},\texttt{unsafe},\texttt{goal}\}.
\)
Because multiple grid cells can generate the same observation, the observation-action history does not generally identify the latent state. The agent receives a reward of \(1\) at the absorbing goal state and \(0\) elsewhere, and incurs a cost of \(1\) in unsafe states and \(0\) elsewhere. Accordingly, the agent seeks to reach the goal as quickly as possible while limiting the expected discounted number of visits to unsafe cells, despite observing the environment only partially.\looseness-1
\begin{figure}[t]
    \centering
\includegraphics[width=0.3\linewidth]{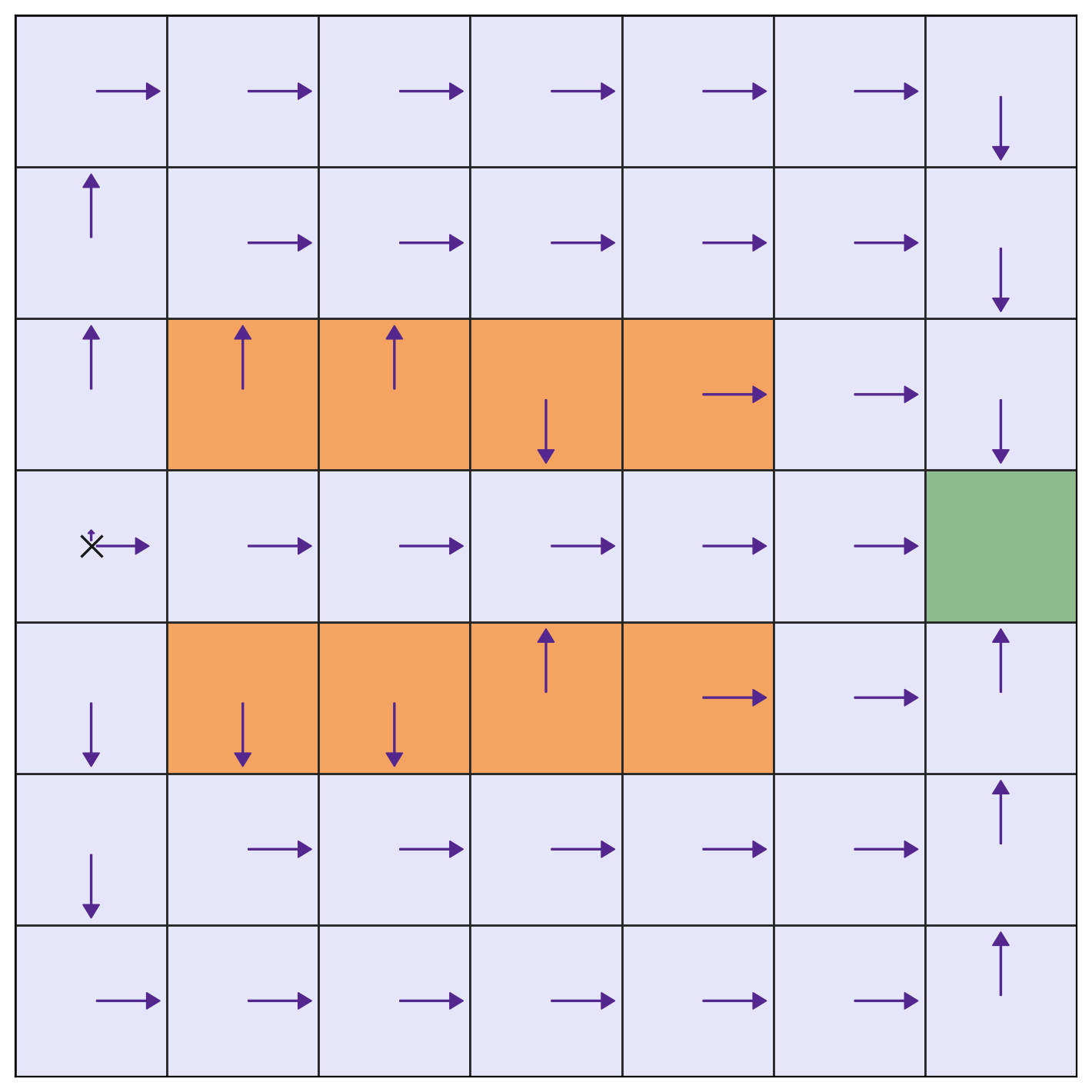}
    \caption{Gridworld: Green cell is the goal, orange cells are unsafe, and \(\times\) is initial position. Arrows are an approximately optimal policy computed using SARSOP \cite{kurniawati2008sarsop}, with lengths proportional to action probabilities.}
    \label{fig:gridworld}
    \vspace{-1em}
\end{figure}

Figure~\ref{fig:grid_results} shows the performance of our algorithm across different history lengths $l$ and iterations $K$. It uses an exact policy-evaluation oracle in the superstate \ac{CMDP} to isolate the effects of $l$ and $K$ from the estimation error.
As \(K\) increases, both the average optimality gap and the average constraint violation approach a neighborhood of zero. Consistent with our theoretical results, the size of this neighborhood is governed by the approximation error $\delta(l)$: increasing \(l\) reduces the approximation error and improves performance.

In Appendix~\ref{app:experiments}, we replace the exact oracle with a sample-based TD(0) oracle, we examine sensitivity to step sizes and the constraint threshold $\sd$, and the performance loss due to partial observability. We provide complete environment and implementation details.

\begin{figure}[t]
    \centering
    \includegraphics[width=0.5\linewidth]
        {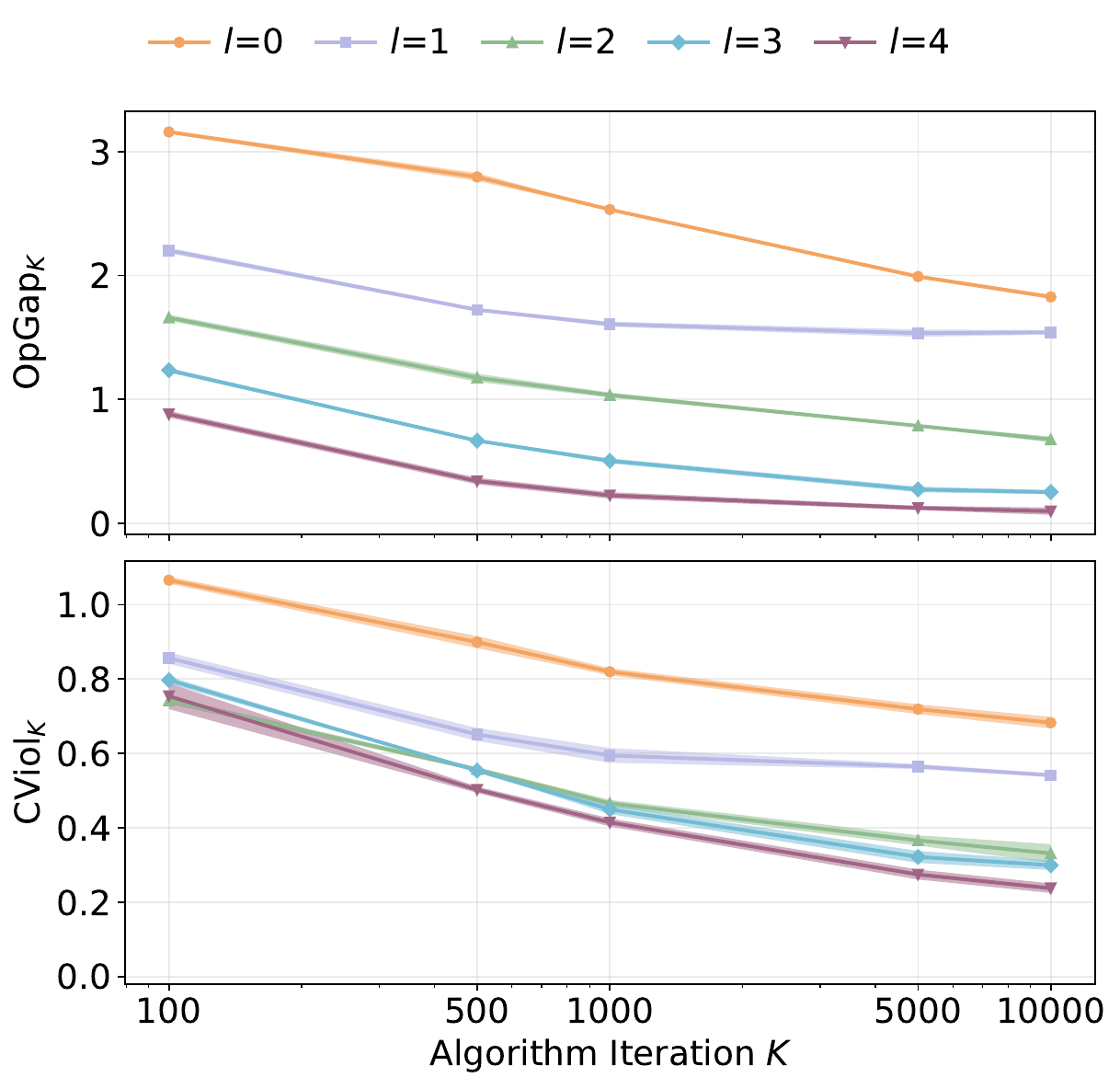}
    \caption{Algorithm performance for different history lengths \(l\). Solid curves show the mean, shaded regions show the evaluation uncertainty. \looseness-1}
    \label{fig:grid_results}
    \vspace{-1em}
\end{figure}

\section{Conclusion \& Future Work}
We addressed RL in \acp{CPOMDP} by representing finite observation-action windows as states of a superstate \ac{CMDP}. We quantified how this approximation affects both reward values and the set of feasible policies, showing that the resulting errors decay geometrically with the window length under filter stability. Building on these bounds, we developed a primal--dual algorithm with non-asymptotic guarantees on average reward suboptimality and constraint violation in the original \ac{CPOMDP}. Experiments on a partially observable constrained Gridworld supported our theoretical results.

Our analysis is limited to tabular superstate \acp{CMDP} and our experiments to Gridworld. Extensions to continuous spaces, realistic benchmarks, and a weaker uniform filter-stability assumption remain for future work. \looseness-1

\bibliography{references}

\clearpage
\appendix
\thispagestyle{empty}

\input{supplementary}

\bibliography{references}

\end{document}

%% file: commands.tex
\newtheorem{lemma}{Lemma}
\newtheorem{theorem}{Theorem}
\newtheorem{assumption}{Assumption}

\newtheorem{proposition}{Proposition}
\newtheorem*{oracle}{Estimation Oracle \(\mathcal B\)}

\newcommand{\oreward}{r^{{o}}}          
\newcommand{\ocost}{c^{{o}}}            
\newcommand{\ou}{u^o}                   
\newcommand{\oP}{\mathbb{P}^{{o}}}      
\newcommand{\oO}{\mathbb{O}^{{o}}}      
\newcommand{\orho}{\rho^{o}}            
\newcommand{\od}{d^{o}}                 
\newcommand{\oxi}{\xi^o}                
\newcommand{\opistar}{\pi^{\star, h}}   

\newcommand{\oVrpi}{V_{r^{o}}^\pi}                  
\newcommand{\oVcpi}{V_{c^{o}}^\pi}                  
\newcommand{\oVupi}{V_{u^o}^\pi}                    
\newcommand{\oVrpistar}{V_{r^{o}}^{\pi^{\star,h}}}  
\newcommand{\oVcpistar}{V_{c^{o}}^{\pi^{\star,h}}}  
\newcommand{\oVcpibar}{V_{c^o}^{\tilde{\pi}}}       

\newcommand{\hreward}{r^{h}}            
\newcommand{\hcost}{c^{h}}              
\newcommand{\hu}{u^h}                   
\newcommand{\hP}{\mathbb{P}^{h}}        
\newcommand{\hrho}{\rho^h}              
\newcommand{\hd}{d^h}                   
\newcommand{\hVrpi}{V_{r^h}^\pi}        
\newcommand{\hVcpi}{V_{c^h}^\pi}        
\newcommand{\hVupi}{V_{u^h}^\pi}        

\newcommand{\smdp}{\mathcal{M}^g}       
\newcommand{\sreward}{r^g}              
\newcommand{\scost}{c^g}                
\newcommand{\su}{u^g}                   
\newcommand{\sP}{\mathbb{P}^g}          
\newcommand{\srho}{\rho^g}              
\newcommand{\sd}{d^g}                   
\newcommand{\sxi}{\xi^g}                

\newcommand{\sPi}{\Pi^g}                
\newcommand{\spiast}{\pi^{*}}           
\newcommand{\slambdaast}{\lambda^*}     
\newcommand{\smuast}{\mu^*}             

\newcommand{\sVrpi}{V_{r^g}^\pi}            
\newcommand{\sVcpi}{V_{c^g}^\pi}            
\newcommand{\sVupi}{V_{u^g}^\pi}            
\newcommand{\sVrpiast}{V_{r^g}^{\pi^{*}}}   
\newcommand{\sVcpiast}{V_{c^g}^{\pi^{*}}}   

\newcommand{\sVrpik}{V_{r^g}^{\pi^k}}       
\newcommand{\sVcpik}{V_{c^g}^{\pi^k}}       
\newcommand{\oVrpik}{V_{r^o}^{\pi^k}}       
\newcommand{\oVcpik}{V_{c^o}^{\pi^k}}       

\newcommand{\sVrpibar}{V_{r^g}^{\bar{\pi}}} 
\newcommand{\sVcpibar}{V_{c^g}^{\bar{\pi}}} 
\newcommand{\sQlagpik}{Q_{r^g-\lambda^kc^g}^{\pi^k}}        
\newcommand{\sVlagkpiast}{V_{r^g-\lambda^k c^g}^{\pi^{*}}}  
\newcommand{\sVlagkpik}{V_{r^g-\lambda^k c^g}^{\pi^k}}      
\newcommand{\sQbarlagpik}{\hat{Q}_{r^g-\lambda^kc^g}^{\pi^k}} 
\newcommand{\sVbarcpik}{\hat{V}_{c^g}^{\pi^k}} 

\newcommand{\sVrpiprime}{V_{r^g}^{\pi'}}                
\newcommand{\sVcpiprime}{V_{c^g}^{\pi'}}                
\newcommand{\sVdaggerpiu}{V_{u^g}^{\pi^g}}              
\newcommand{\sVrpistarg}{V_{r^g}^{(\pi^{\star,h})^g}}   
\newcommand{\sVcpistarg}{V_{c^g}^{(\pi^{\star,h})^g}}   

\acrodef{MDP}[MDP]{Markov Decision Process}
\acrodefplural{MDP}{Markov Decision Processes}
\acrodef{CMDP}[CMDP]{Constrained Markov Decision Process}
\acrodefplural{CMDP}{Constrained Markov Decision Processes}
\acrodef{POMDP}[POMDP]{Partially Observable Markov Decision Process}
\acrodefplural{POMDP}{Partially Observable Markov Decision Processes}
\acrodef{CPOMDP}[CPOMDP]{Constrained Partially Observable Markov Decision Process}
\acrodefplural{CPOMDP}{Constrained Partially Observable Markov Decision Processes}

%% file: supplementary.tex
\begin{center}
    {\Large\bfseries Supplementary Material}
\end{center}

\section{Superstate \ac{CMDP} posterior belief}
\label{app:superstate_belief}
Let $g=(o_0, a_0,o_1, \ldots, a_{k-1}, o_{k})\in \mathcal{H}_k, k \le l$ be a superstate. We associate to it a finite-history belief $b^g \in \Delta(\mathcal{S})$ defined as follows. Starting from the prior on states $\orho$, define
\[
b^g_0(s)=\frac{\orho(s)\oO(o_0| s)}{\sum_{\tilde s\in\mathcal{S}}\orho(\tilde s)\oO(o_0|\tilde s)}.
\]
For $i=0,\ldots,k-1$, recursively define
\[
b^g_{i+1}(s') = \frac{\oO(o_{i+1}|s')\sum_{s \in \mathcal{S}}\oP(s'|s,a_i)b^g_i(s)}{\sum_{\tilde{s}\in \mathcal{S}}\oO(o_{i+1}|\tilde{s}) \sum_{s \in \mathcal{S}}\oP(\tilde{s}|s,a_i) b^g_i(s)}.
\]
We then set $b^g:=b^g_k$.
Equivalently, $b^g$ is obtained by marginalizing over all latent-state trajectories compatible with the observation-action sequence encoded by $g$, starting from the prior $\orho$.

\section{History-state \acs{CMDP}}
\label{app:preliminaries_history}
A classic result~\citep{kaelbling1998planning} is that any \ac{POMDP} can be equivalently formulated as an infinite-state observable \ac{MDP} whose states are full observation-action histories. The same construction easily extends to \acp{CPOMDP} by also accounting for costs. 

Specifically, define the infinite history-state \ac{CMDP} the tuple $$\mathcal{M}^h = (\mathcal{H}, \mathcal{A}, \hP, \hreward, \hcost, \gamma, \hd, \hrho)$$ where $\mathcal{H}$ is the state space, $\mathcal{A}$ is the same action space of the \ac{CPOMDP} $\mathcal{P}$, and $\hd = \od$. The transition kernel $\hP$, the reward and cost functions $\hreward, \hcost$, and the initial history distribution $\hrho$ are defined below. 

The transition kernel $\hP$ specifies the probability of the next history $h'$ given the current history $h$ and action $a$. The next history is obtained by appending $a$ and the next observation $o$ to $h$. Therefore, conditional on $(h,a)$, the only random component of the next history is $o$, and
\[
    \hP (h'|h,a) = \sum_{o \in \mathcal{O}} \mathbf{1}(h'=h\|(a,o)) \Pr(o|h,a)
\]
Since the latent state is not observed, the conditional distribution of the next observation given $h,a$, i.e., $\Pr(o|h,a)$ is obtained by marginalizing over the current and next latent state. Let 
\[
    b^h(s):=\Pr(s_t=s\mid h_t=h),
    \qquad b^h\in\Delta(\mathcal S),
\]
denote the posterior belief induced by the history $h$\footnote{If a history cannot be generated by $\mathcal P$, its
posterior belief may be chosen arbitrarily, since that history is
unreachable under every policy and does not affect values evaluated
from $\hrho$.}. Then, using the structure of the underlying \ac{CPOMDP}
\[
    \Pr(o|h,a) = \sum_{s, s' \in \mathcal{S}} \oO(o|s')\oP(s'|s,a)b^h(s). 
\]

The reward and cost functions are obtained analogously by taking
their conditional expectations over the latent state under $b^h$:
\[
\hreward(h,a)
=
\sum_{s\in\mathcal S}b^h(s)\oreward(s,a),
\qquad
\hcost(h,a)
=
\sum_{s\in\mathcal S}b^h(s)\ocost(s,a).
\]

The initial history consists only of the initial observation. Its initial distribution is therefore: 
\[
\hrho (h) := \mathbf{1}(h \in \mathcal{H}_0) \sum_{s \in \mathcal{S}} \oO(h|s) \orho(s) .
\]

For $\pi \in \Pi^\mathrm{hist}$, define the reward value function:
\[
\hVrpi(h) = \mathbb{E}_{\pi, \hP} \Big[\sum\nolimits_{t=0}^\infty\gamma^t \hreward(h_t, a_t)  \Big|h_0=h \Big],
\]
and its expected reward value with respect to the initial history distribution $\hrho$ as
\[
\hVrpi(\hrho) :=\mathbb{E}_{h \sim \hrho} \big[\hVrpi(h)\big] 
\]
Then, $\hVcpi(h), \hVcpi(\hrho)$ are defined analogously by replacing the reward $\hreward$ with the cost $\hcost$.

By construction,  $\hrho$ reproduces the initial observation
distribution of the original \ac{CPOMDP}, while $\hP$ reproduces the conditional distribution of the next observation given each history-action pair. Hence, under the same history-dependent policy, the two models induce the same distribution over observation-action histories. Moreover, $\hreward$ and $\hcost$ are the conditional
expectations of the original reward and cost. Thus, the two formulations have the same expected reward and cost values when evaluated from their respective initial distributions~\citep{kaelbling1998planning}, namely $\hVrpi(\hrho) = \oVrpi(\orho), \hVcpi(\hrho) = \oVcpi(\orho)$, and they can be used interchangeably. 

\paragraph{Existence of an optimal policy.}
\label{app:existence}
As shown above, Problem~(\ref{problem_1}) is equivalent to the constrained problem in the history-state \ac{CMDP} $\mathcal{M}^h$, which has a countable state space $\mathcal{H}$, a finite action space $\mathcal{A}$, and rewards and costs in $[0,1]$. For such discounted \acp{CMDP}, an optimal policy exists whenever the constrained problem is feasible~\citep{altman2021constrained,hernandez2000constrained}.

\section{Full Theorem Statement}
\label{app:fulltheorem}
Let Assumptions~\ref{ass:filter},~\ref{ass:slatercpompd} hold. Let $\sd$ be such that $\sxi:= \sd-\od+\oxi - \frac{2(1-\kappa)^l}{(1-\gamma)^2}>0$, and set $\lambda^\mathrm{max}= \frac{2}{(1-\gamma)\sxi}$. Run Algorithm~\ref{alg:politex} for $K \in \mathbb{N}$ iterations, using an oracle $\mathcal{B}$ that satisfies inequalities in~\eqref{ass:oracle} and with 
    \begin{align*}
        &\eta_1 = \frac{(1-\gamma)^2\sxi}{(1-\gamma)\sxi+2} \sqrt{\frac{2\log|\mathcal{A}|}{K}},\\
        &\eta_2 = \frac{2}{\sxi(1+(1-\gamma)\sd)\sqrt{K}}
    \end{align*}

Then, the average optimality gap satisfies
\begin{equation*}
     \begin{aligned}
         \frac{1}{K} \sum_{k=0}^{K-1} \mathbb{E}[\oVrpistar(\orho) - \oVrpik(\orho)] \le & { \frac{4(1-\kappa)^l}{(1-\gamma)^2} + \frac{1}{(1-\gamma)\sxi} \left[\od -\sd + \frac{2(1-\kappa)^l}{(1-\gamma)^2}\right]_+ }\\ 
         &
         + {\frac{2(\sxi+1)}{(1-\gamma)\sxi} \delta_\mathrm{est}}\\
         & + {\frac{(1-\gamma)\sxi+2}{(1-\gamma)^3 \sxi} \sqrt{\frac{2\log|\mathcal{A}|}{K}} + \frac{1+(1-\gamma)\sd}{(1-\gamma)^2 \sxi \sqrt{K}}}
     \end{aligned}
\end{equation*}
Moreover, the average constraint violation satisfies
\begin{equation*}
     \begin{aligned}
         \left[\frac{1}{K} \sum_{k=0}^{K-1} \mathbb{E}[\oVcpik(\orho) - \od]\right]_+ \le &{\left[\sd -\od+ \frac{2(1-\kappa)^l}{(1-\gamma)^2}\right]_+} \\
         & + {2(\sxi+1) \delta_\mathrm{est}} \\
         & + {\frac{(1-\gamma)\sxi+2}{(1-\gamma)^2} \sqrt{\frac{2\log|\mathcal{A}|}{K}} + \frac{2(1+(1-\gamma)\sd)}{(1-\gamma)  \sqrt{K}}}.
     \end{aligned}
\end{equation*}

\section{Omitted Proofs}
\label{app:omittedproofs}
\subsection{Proof of Proposition~\ref{lemma:simulation_lemma}}
\label{app:simulation_lemma}

We use the following consequence of Assumption~\ref{ass:filter}, obtained by applying its contraction over the $l$ steps of a superstate.
\begin{lemma}[{\citet[Lemma~1]{anjarlekar2025scalable}}]
    Under Assumption~\ref{ass:filter}, for any history $h \in \mathcal{H}$ and corresponding superstate $g=\mathbb{G}_l(h)$, it holds $\|b^h-b^g\|_\mathrm{TV} \le (1-\kappa)^l$. 
    \label{lemma:beliefbound}
\end{lemma}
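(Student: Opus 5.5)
We prove the bound by induction on the history length, peeling one action--observation pair at a time and applying Assumption~\ref{ass:filter} at each step. Write $h=h_{t-l}\|(a_{t-l},o_{t-l+1})\|\cdots\|(a_{t-1},o_t)$ when $t\ge l$, so that $g=\mathbb{G}_l(h)=(o_{t-l},a_{t-l},\dots,o_t)$. If $t<l$, then $g=h$ and $b^g=b^h$, since both are obtained by the same Bayes recursion from $\orho$ (Appendix~\ref{app:superstate_belief}); the bound holds trivially.

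For $t\ge l$ the key observation is that $b^g$ is itself a posterior of the form $b^{h'}$ for a suitable starting point. The recursion in Appendix~\ref{app:superstate_belief} runs the same Bayes filter as for full histories, but initialized at $b^g_0\propto \orho(\cdot)\oO(o_{t-l}\mid\cdot)$ rather than at $b^{h_{t-l}}$. Let $\Phi_{a,o}$ denote the one-step filter map, so that $b^{h\|(a,o)}=\Phi_{a,o}(b^h)$. Then
\[
b^h=\Phi_{a_{t-1},o_t}\circ\cdots\circ\Phi_{a_{t-l},o_{t-l+1}}(b^{h_{t-l}}),\qquad b^g=\Phi_{a_{t-1},o_t}\circ\cdots\circ\Phi_{a_{t-l},o_{t-l+1}}(b^{o_{t-l}}),
\]
where $b^{o_{t-l}}=b^g_0$ is exactly the belief of the length-zero history $o_{t-l}\in\mathcal{H}_0$. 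Both starting beliefs are therefore posteriors of genuine histories, namely $h_{t-l}$ and $o_{t-l}$. Appending the same $l$ pairs yields the histories $h$ and $h':=o_{t-l}\|(a_{t-l},o_{t-l+1})\|\cdots\|(a_{t-1},o_t)$, and $b^{h'}=b^g$.

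Applying Assumption~\ref{ass:filter} $l$ times, once per appended pair, gives
\[
\|b^h-b^g\|_{\mathrm{TV}}=\|b^h-b^{h'}\|_{\mathrm{TV}}\le(1-\kappa)^l\|b^{h_{t-l}}-b^{o_{t-l}}\|_{\mathrm{TV}}\le(1-\kappa)^l,
\]
using that the TV distance between probability vectors is at most $1$.

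The main obstacle is justifying that $b^g$ coincides with $b^{h'}$ for an actual history $h'\in\mathcal{H}$, so that the assumption, which is stated only for history beliefs, applies. This reduces to checking that the recursive definition of $b^g$ is literally the posterior of the length-$l$ history $h'$ under prior $\orho$, which follows by comparing the two Bayes recursions term by term.

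A subtlety arises if $h'$ has zero probability under $\mathcal{P}$, so that $b^{h'}$ is defined arbitrarily (Appendix~\ref{app:preliminaries_history}). In that case I would take the convention that $b^{h'}$ is defined by the same recursion, namely $b^g$. I would also note that such superstates arise only from histories $h$ that are themselves reachable, where the normalizers are positive; if the normalizers vanish, I would restrict the claim to reachable histories, which is all that is used downstream.
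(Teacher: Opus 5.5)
Your argument is correct and matches the paper's route. The paper cites \citet[Lemma~1]{anjarlekar2025scalable} and describes the lemma as applying the contraction over the $l$ steps of a superstate, which is what you do: identify $b^g$ with the posterior of $g$ viewed as a history in $\mathcal{H}_l$ started from $\orho$, use that Assumption~\ref{ass:filter} covers histories of different lengths to contract $l$ times from $h_{t-l}$ versus $o_{t-l}$, and bound the initial TV distance by $1$.

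One correction to your closing aside: reachability of $h$ does not make the normalizers in the recursion for $b^g$ positive, because that recursion restarts from $\orho$. For example, $o_{t-l}$ can be emitted at time $t-l$ yet have zero probability under $\sum_{s}\orho(s)\oO(\cdot\mid s)$. So well-definedness of $b^g$ is an implicit presupposition of the paper's definition in Appendix~\ref{app:superstate_belief}, not something you obtain by restricting to reachable histories.
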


For convenience, we prove the proposition in the history-state \ac{CMDP} formulation rather than in the \ac{CPOMDP}, since the former provides an equivalent fully observable formulation, as described in Appendix~\ref{app:preliminaries_history}. We then exploit the equivalence $\hVupi(\hrho)=\oVupi(\orho)$ to transfer the result back to the original formulation.

\textbf{Proof of a)}. 
Fix a stationary superstate policy $\pi \in \sPi$. When deployed in the history-state \ac{CMDP}, it selects actions according to $a_t \sim \pi(\cdot|\mathbb{G}_l(h_t))$. Fix $h \in \mathcal{H}$ and let $g=\mathbb{G}_l(h)$. 
We first show that
\[
    \sup_{h \in \mathcal{H}}  \left|\hVupi (h)-\sVupi (g) \right|\le \delta(l).
\]
where $g=\mathbb{G}_l(h)$ and where $\hVupi(h)$ is the reward/cost value function given a history, defined in Appendix~\ref{app:preliminaries_history}, and $\sVupi(g) := \mathbb{E}_{\pi,\sP}\big[\sum_{t=0}^\infty\gamma^t \su(g_t,a_t)\,\big|\, g_0=g\big]$ is the corresponding superstate value function given an initial superstate $g$. In turn this implies 
\[
 \left|\hVupi (\hrho)-\sVupi(\srho) \right|\le \delta(l).
\]
Finally, because $\hVupi(\hrho)=\oVupi(\orho)$, this will imply statement a).

For $u \in \{r,c\}$ recall that: 
\begin{equation*}
    \begin{alignedat}{2}
        &\hu(h,a) := \sum_{s \in \mathcal{S}}\ou(s,a)b^h(s),
        &&\qquad\su(g,a) := \sum_{s \in \mathcal{S}}\ou(s,a)b^g(s), \\
    \end{alignedat}
\end{equation*}
and 
\begin{equation*}
    \begin{aligned}
        \Pr(o|h,a) &= \sum_{s,s'\in \mathcal{S}} \oO(o|s')\oP(s'|s,a)b^h(s) \\ 
        \Pr(o|b^g,a) &= \sum_{s,s'\in \mathcal{S}} \oO(o|s')\oP(s'|s,a)b^g(s).
    \end{aligned}
\end{equation*}
Since $\ou(s,a) \in [0,1]$, Lemma~\ref{lemma:beliefbound} (which follows from Assumption~\ref{ass:filter}) allows us to bound the reward and cost differences:
\begin{equation}
    |\hu(h,a) - \su (g,a)| = \left|\sum_{s \in \mathcal{S}} \ou(s,a)(b^h(s)-b^g(s))\right| \le \sum_{s \in \mathcal{S}}|b^h(s)-b^g(s)| = 2 \| b^h-b^g\|_\mathrm{TV} \le 2(1-\kappa)^l.
    \label{eq:diffu}
\end{equation}
Similarly
\begin{equation}
    \sum_{o \in \mathcal{O}}|\Pr(o|h,a)-\Pr(o|b^g, a)| = \sum_{o \in \mathcal{O}} \left|\sum_{s, s'\in \mathcal{S}} \oO(o|s')\oP(s'|s,a)(b^h(s)-b^g(s))\right| \le \sum_{s \in \mathcal{S}} |b^h(s)-b^g(s)| \le 2 (1-\kappa)^l.
    \label{eq:diffprob}
\end{equation}

We now consider the Bellman expectation equations in the history-state and superstate \acp{CMDP}, which are, respectively:
\begin{align*}
    \hVupi(h) &= \mathbb{E}_{a \sim \pi(\cdot|g)} \left[\hu (h,a) + \gamma \sum_{o \in \mathcal{O} }\Pr(o|h,a) \hVupi(h\|(a,o))\right] \\
    \sVupi(g) &= \mathbb{E}_{a \sim \pi(\cdot|g)} \left[\su (g,a) + \gamma \sum_{o \in \mathcal{O} }\Pr(o|b^g,a) \sVupi(\mathbb{G}_l(h\|(a,o)))\right] \\
\end{align*}
Subtracting the two equations, adding and subtracting $\Pr(o|h,a) \sVupi(\mathbb{G}_l(h\|(a,o)))$, and applying the triangle inequality gives
\begin{equation}
    \begin{aligned}
        \left|\hVupi(h)- \sVupi(g)\right|  \le  &\, \mathbb{E}_{a \sim \pi(\cdot|g)}|\hu(h,a)-\su(g,a)|+ \gamma \mathbb{E}_{a \sim \pi(\cdot|g)} \sum_{o \in \mathcal{O}} |\Pr(o|h,a)-\Pr(o|b^g,a)| \cdot \left| \sVupi(\mathbb{G}_l(h\|(a,o))) \right|\\
    &   +   \gamma \mathbb{E}_{a \sim \pi(\cdot|g)} \sum_{o \in \mathcal{O}} \Pr(o|h,a)\cdot |\hVupi(h\|(a,o))-\sVupi(\mathbb{G}_l(h\|(a,o)))|   \\ 
    \label{eq:diffbell}
    \end{aligned}
\end{equation}
 Because $\su \in[0,1]$, it holds $|\sVupi(g) |\le1/(1-\gamma)$. Moreover, using~\eqref{eq:diffu}-\eqref{eq:diffprob} in~\eqref{eq:diffbell}, we obtain 
 \begin{align*}
     \left|\hVupi(h)- \sVupi(g)\right| \le 2(1-\kappa)^l + \frac{2\gamma(1-\kappa)^l}{1-\gamma } + \gamma \mathbb{E}_{a \sim \pi(\cdot|g)} \sum_{o \in \mathcal{O}} \Pr(o|h,a)\cdot |\hVupi(h\|(a,o))-\sVupi(\mathbb{G}_l(h\|(a,o)))|
 \end{align*}
Define
\[
    \Delta:= \sup_{h \in \mathcal{H}} \left|\hVupi(h) - \sVupi(\mathbb{G}_l(h))\right|,
\]
which is finite because the value functions are finite. It follows that
\begin{align*}
    \left|\hVupi(h)- \sVupi(g)\right| & \le \frac{2(1-\kappa)^l}{1-\gamma} + \gamma \Delta .
\end{align*}
Taking the supremum over $h \in \mathcal{H}$ gives
\begin{align*}
    \Delta \le \frac{2(1-\kappa)^l}{1-\gamma} + \gamma \Delta \qquad \Rightarrow \qquad \Delta \le \frac{2(1-\kappa)^l}{(1-\gamma)^2} = \delta(l). 
\end{align*} 
Finally, since $\hrho$ and $\srho$ agree
\[
\left|\hVupi(\hrho) - \sVupi(\srho)\right| \le \delta(l).
\]

\textbf{Proof of b).}
Fix an arbitrary history-dependent policy $\pi \in \Pi^\mathrm{hist}$. The argument in part a) cannot be applied directly because the action distribution $\pi(\cdot|h)$ does not generally define a unique policy on the superstate space. In particular, two histories with the same superstate may be associated to different action distributions.

To address this issue, we first construct an auxiliary history-state \ac{CMDP}. Its state is the full history $h \in \mathcal{H}$, so that the original policy $\pi(\cdot|h)$ can be applied without modification. Its reward and cost functions and transition kernel are defined to depend only on the corresponding superstate: 
\begin{align*}
    & u^\mathrm{aux}(h,a) := \su(\mathbb{G}_l(h),a), \\
    &\mathbb{P}^\mathrm{aux}(h'|h,a) := \sum_{o \in \mathcal{O}} \mathbf{1}(h'=h\|(a,o)) \Pr(o|b^{\mathbb{G}_l(h)},a). 
\end{align*}

The same Bellman argument as in part a) can now be applied to $\hVupi(h)$ and $V_{u^\mathrm{aux}}^\pi(h)$, giving
\begin{equation}
    \left| \hVupi(\hrho)-V_{u^\mathrm{aux}}^{\pi}(\hrho)\right|\le \delta(l).
    \label{eq:ciao}
\end{equation}

We next show that, although $\pi$ is history-dependent, its behavior in the auxiliary \ac{CMDP}, in terms of expected reward value and expected cost value, can be reproduced by a stationary policy that only observes the superstate. 
The key observation is that, for expected discounted values, the behavior of a policy is completely characterized by its discounted state-action occupancy measure. Define: 
\[
    \mu^\pi(g,a):=(1-\gamma)\mathbb{E}_{\pi, \mathbb{P}^\mathrm{aux}, \hrho} \left[\sum_{t=0}^\infty \gamma^t \mathbf{1}(\mathbb{G}_l(h_t)=g, a_t=a)\right] \in \Delta(\mathcal{G \times A})
\]
Thus, $\mu^\pi(g,a)$ is the discounted frequency with which the auxiliary process visits superstate $g$ and takes action $a$ under $\pi$. Let: 
\[
    \mu^\pi(g):= \sum_{a \in \mathcal{A}} \mu^\pi(g,a) \in \Delta(\mathcal{G}) 
\]
denote the corresponding state occupancy measure. 

By construction of the auxiliary model, once the current superstate-action pair $(g,a)$ is fixed, the distribution of the next superstate is given by $\sP(\cdot|g,a)$, independently of the particular history that produced $g$. Hence, the state occupancy measure $\mu^\pi(g)$ must satisfy the following Bellman flow constraint~\citep{puterman2014markov}: 
\begin{equation}
    \mu^\pi(g) = (1-\gamma ) \srho(g) + \gamma \sum_{g' \in \mathcal{G}} \sum_{a' \in \mathcal{A}} \mu^\pi(g',a') \sP(g|g',a').
    \label{eq:auxflow}
\end{equation}

We now define a stationary randomized superstate policy $\pi^g \in \sPi$ by 
\[
    \pi^g(a|g) := \frac{\mu^\pi(g,a)}{\mu^\pi(g)},
\]
whenever $\mu^\pi(g) >0$, and arbitrarily otherwise. 
Since by construction $\mu^\pi(g,a) = \mu^\pi(g) \pi^g(a|g)$, we can substitute this identity into the flow constraint in~\eqref{eq:auxflow}: 
\begin{equation*}
    \mu^\pi(g) = (1-\gamma ) \srho(g) + \gamma \sum_{g '\in \mathcal{G}}\sum_{a' \in \mathcal{A}} \mu^\pi(g') \pi^g(a'|g')\sP(g|g',a').
\end{equation*}
But, by definition, also the state occupancy measure induced by $\pi^g$, $\mu^{\pi^g}$ satisfies this constraint:  
\begin{equation*}
    \mu^{\pi^g}(g) = (1-\gamma ) \srho(g) + \gamma \sum_{g' \in \mathcal{G}}\sum_{a' \in \mathcal{A}} \mu^{\pi^g}(g') \pi^g(a'|g')\sP(g|g',a').
\end{equation*}
Because $\gamma <1$, this equation has a unique solution. Hence: 
\[
    \mu^{\pi^g}(g) = \mu^\pi(g)
\]
and so
\[
\mu^{\pi^g}(g,a) = \mu^\pi(g,a).
\]
Finally, since the auxiliary and superstate models have reward and cost as functions of the superstate-action pair $(g,a)$, the expected discounted reward and cost values can be expressed in terms of the corresponding state-action occupancy measure as~\citet{puterman2014markov}:  
\[
    V_{u^\mathrm{aux}}^\pi(\hrho) = \frac{1}{1-\gamma}  \sum_{g \in \mathcal{G}}\sum_{a \in \mathcal{A}} \mu^\pi(g,a) \su(g,a). 
\]
Since the state-action occupancy measures coincide, it follows that
\[
    V_{u^g}^{\pi^g}(\srho) = \frac{1}{1-\gamma}  \sum_{g \in \mathcal{G}}\sum_{a \in \mathcal{A}} \mu^\pi(g,a) \su(g,a)=V_{u^\mathrm{aux}}^\pi(\hrho) 
\]
which concludes the proof.

\subsection{Proof of Lemma~\ref{lemma_dualityandbound}}
\label{app:dualityandbound}
Both parts are standard for \acp{CMDP}~\citep{altman2021constrained,paternain2019constrained},~\citep[Lemma~3]{ding2025convergence}; we include the proof formulated in our superstate setting for completeness. By Lemma~\ref{lemma:superstateslater}, there exists $\bar{\pi}\in\sPi$ with $\sVcpibar(\srho)\le\sd-\sxi$ and $\sxi>0$.

\emph{(i) Strong duality.} For $\pi\in\sPi$, let $\mu^\pi(g,a):=\sum_{t=0}^\infty\gamma^t\Pr_{\pi,\sP}(g_t=g,a_t=a\mid g_0\sim\srho)$ denote its discounted occupancy measure, so that $\sVrpi(\srho)=\langle\mu^\pi,\sreward\rangle$ and $\sVcpi(\srho)=\langle\mu^\pi,\scost\rangle$. Since $\mathcal{G}$ and $\mathcal{A}$ are finite, the set $\mathcal{M}:=\{\mu^\pi:\pi\in\sPi\}$ is a polytope described by the Bellman flow constraints, and every $\mu\in\mathcal{M}$ is induced by the stationary policy $\pi(a\mid g)\propto\mu(g,a)$~\citep{altman2021constrained}. Hence Problem~(\ref{problem:primal_superstate}) is equivalent to the linear program $\max_{\mu\in\mathcal{M}}\{\langle\mu,\sreward\rangle:\langle\mu,\scost\rangle\le\sd\}$, and
\[
D^\lambda(\srho)=\max_{\mu\in\mathcal{M}}\langle\mu,\sreward-\lambda\scost\rangle+\lambda\sd.
\]
This linear program is feasible and bounded, and $\mu^{\bar\pi}$ is strictly feasible. By linear programming duality, the dual problem $\min_{\lambda\ge0}D^\lambda(\srho)$ attains its minimum at some $\slambdaast$, and $V_{r^g}^{\spiast}(\srho)=D^{\slambdaast}(\srho)$.

\emph{(ii) Bound on $\slambdaast$.} By (i) and the definition of $D^{\slambdaast}$,
\[
V_{r^g}^{\spiast}(\srho)=D^{\slambdaast}(\srho)\ge L(\bar\pi,\slambdaast)=\sVrpibar(\srho)+\slambdaast\bigl(\sd-\sVcpibar(\srho)\bigr)\ge\sVrpibar(\srho)+\slambdaast\sxi.
\]
Since $\sreward(g,a)$ is a convex combination of values of $\oreward\in[0,1]$, we have $\sreward\in[0,1]$ and thus $\sVrpi(\srho)\in[0,1/(1-\gamma)]$ for all $\pi\in\sPi$. Rearranging gives
\[
\slambdaast\le\frac{V_{r^g}^{\spiast}(\srho)-\sVrpibar(\srho)}{\sxi}\le\frac{1}{(1-\gamma)\sxi}=\frac{\lambda^{\mathrm{max}}}{2}\le\lambda^{\mathrm{max}}. \qquad\square
\]

\subsection{Proof of Proposition~\ref{lemma:optgap}}
\label{app:optimal_value}
Let $\pi^{\star,h} \in \Pi^\mathrm{hist}$ be an optimal \ac{CPOMDP} policy.
By Proposition~\ref{lemma:simulation_lemma}b), there exists a superstate policy $(\pi^{\star, h})^g$ such that
\[
     \left|\oVrpistar -\sVrpistarg \right| \le   \delta(l), \qquad  \left|\oVcpistar -\sVcpistarg\right| \le   \delta(l).
\]
We have:
\begin{align}
    \oVrpistar - \sVrpiast & = \underbrace{(\oVrpistar - \sVrpistarg)}_{\le \delta(l)} + (\sVrpistarg - \sVrpiast) \le \delta(l) + (\sVrpistarg - \sVrpiast). 
    \label{eq:boh}
\end{align}
We proceed by bounding $(\sVrpistarg - \sVrpiast)$. Since $\pi^{\star, h}$ is feasible in the original \ac{CPOMDP} $\mathcal{P}$, $\oVcpistar \le \od$. 

By Proposition~\ref{lemma:simulation_lemma}b), 
\begin{align*}
    \sVcpistarg \le \oVcpistar +\delta(l) \le \od + \delta(l) .
\end{align*}
We note that $(\pi^{\star, h})^g$ is not necessarily feasible for the superstate $\smdp$. If we define
\[
\Delta_c := [\od + \delta(l) - \sd]_+, 
\]
then $\Delta_c \ge 0$ and $\sVcpistarg \le \sd + \Delta_c$. So, $(\pi^{\star, h})^g$ is feasible for a relaxed version of the superstate problem
\begin{equation}
        \max_{\substack{ \pi \in \Pi^g : \sVcpi \le \sd + \Delta_c}} \ \sVrpi 
    \label{eq:relaxedproblem}
\end{equation}
Now, $(\pi^{\star, h})^g$ is feasible for this relaxed problem, but it is not necessarily optimal. Therefore: 
\begin{align*}
    \sVrpistarg \le \max_{\substack{ \pi \in \Pi^g : \sVcpi \le \sd + \Delta_c}} \ \sVrpi .
\end{align*}
We now use the following constraint-sensitivity result from~\citet[Lemma 14]{tian2024confident}. We restate it under our convention $\sVcpi \le d$ and include the adapted proof for completeness. 
\begin{lemma}
    Assume that the superstate \ac{CMDP} satisfies the Slater condition with margin $\sxi>0$ and let $\slambdaast$ be an optimal multiplier for the problem with threshold $\sd$. For any $\Delta_c \ge 0$, we have: 
    \[
     0 \le \max_{\substack{\pi\in\sPi:\\
                 \sVcpi\le\sd+\Delta_c}} \sVrpi -\sVrpiast\le  \slambdaast \Delta_c \le \frac{1}{(1-\gamma)\sxi} \Delta_c.
    \]
    \label{lemma:lemma14}
\end{lemma}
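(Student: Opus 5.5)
The argument is standard Lagrangian duality applied to the relaxed superstate problem. The left inequality $0\le\cdot$ is immediate. Every policy feasible for threshold $\sd$ is also feasible for threshold $\sd+\Delta_c$, since $\Delta_c\ge 0$. Hence $\spiast$ is feasible for the relaxed problem, and the relaxed maximum is at least $\sVrpiast$. By the occupancy-measure/LP reformulation used in the proof of Lemma~\ref{lemma_dualityandbound}, the relaxed maximum is attained by some $\pi'\in\sPi$: the feasible set of occupancy measures is a nonempty compact polytope.

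For the middle inequality, fix such a maximizer $\pi'$ with $\sVcpiprime\le\sd+\Delta_c$. By strong duality (Lemma~\ref{lemma_dualityandbound}(i)) and the definition of the dual function,
\[
\sVrpiast = D^{\slambdaast}(\srho) \ge L(\pi',\slambdaast) = \sVrpiprime - \slambdaast\bigl(\sVcpiprime - \sd\bigr).
\]
Since $\slambdaast\ge 0$ and $\sVcpiprime-\sd\le\Delta_c$, this gives $\sVrpiprime - \sVrpiast \le \slambdaast(\sVcpiprime-\sd)\le \slambdaast\Delta_c$. When $\sVcpiprime<\sd$ the bound is even better, which is harmless because $\slambdaast\Delta_c\ge0$. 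So the middle inequality holds.

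The last inequality follows from the bound $\slambdaast\le 1/((1-\gamma)\sxi)$, which is exactly what part (ii) of the proof of Lemma~\ref{lemma_dualityandbound} establishes before the final factor of $2$ is introduced. Multiplying by $\Delta_c\ge0$ finishes.

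The only delicate point is that the relaxed maximum is attained (or we replace $\pi'$ by a near-maximizer and pass to the limit). This is handled by the compactness of the occupancy polytope, or more simply by applying the inequality to an arbitrary feasible $\pi'$ and taking the supremum. I would use the latter route to avoid any attainment issue. Note that the Slater margin, via Lemma~\ref{lemma:superstateslater}, is what guarantees that $\slambdaast$ exists and is finite.
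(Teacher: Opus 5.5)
Your proposal is correct and follows essentially the same argument as the paper. For any $\pi$ feasible for the relaxed threshold, you bound $\sVrpi \le L(\pi,\slambdaast)+\slambdaast\Delta_c \le D^{\slambdaast}(\srho)+\slambdaast\Delta_c=\sVrpiast+\slambdaast\Delta_c$ via strong duality, take the maximum over relaxed-feasible policies, and import $\slambdaast\le 1/((1-\gamma)\sxi)$ from the proof of Lemma~\ref{lemma_dualityandbound}(ii); working with an arbitrary relaxed-feasible policy and then taking the supremum is also how the paper avoids the attainment question.
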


\emph{Proof}. 
Since $\Delta_c\ge 0$, increasing the threshold enlarges the feasible set, proving the first inequality. \\\\
Now, fix any policy $\pi \in \sPi$ feasible for the relaxed problem. Since $\sVcpi \le \sd+\Delta_c$ and $\slambdaast\ge 0$, we have: 
\begin{align*}
    \sVrpi & = \sVrpi -\slambdaast(\sVcpi-\sd)+\slambdaast(\sVcpi-\sd) \\ 
    & \le \sVrpi -\slambdaast(\sVcpi-\sd) +  \slambdaast \Delta_c \\ 
    & \le \max_{\pi'\in\sPi}\left\{V_{r^g}^{\pi'}-\slambdaast(V_{c^g}^{\pi'}-\sd)\right\}+\slambdaast\Delta_c\\
    & = \sVrpiast + \slambdaast\Delta_c,
\end{align*}
where the last equality follows from strong duality and the optimality of $\slambdaast$. Taking the max over all policies feasible for the relaxed problem gives the result. 
Finally, by the proof of Lemma~\ref{lemma_dualityandbound}(ii) in Appendix~\ref{app:dualityandbound},
\[
\slambdaast \le \frac{1}{(1-\gamma)\sxi}, 
\]
which proves the last inequality $\square$.

Applying Lemma~\ref{lemma:lemma14} gives: 
\begin{equation*}
    \sVrpistarg - \sVrpiast \le \frac{\Delta_c}{(1-\gamma)\sxi}
\end{equation*}

Combining this inequality with~\eqref{eq:boh} yields 
\begin{align*}
    \oVrpistar - \sVrpiast \le \delta(l) + \frac{1}{(1-\gamma)\sxi} \Delta_c = \delta(l) + \frac{1}{(1-\gamma)\sxi} [\od+\delta(l) -\sd]_+
\end{align*}
which proves the lemma.

\subsection{Proof of Lemma~\ref{lemma:boundedaverage}}
\label{app:boundedaverage}
Define 
\[
\sVlagkpiast := \sVrpiast -\lambda^k \sVcpiast, \qquad \sVlagkpik := \sVrpik -\lambda^k \sVcpik.
\]
Let $\smuast \in \Delta(\mathcal{G})$ be the state-normalized discounted occupancy distribution induced by $\spiast$ from  $\srho$: 
\[
    \smuast(g):= (1-\gamma ) \mathbb{E}_{\spiast, \sP, \srho} \left[\sum_{t=0}^{\infty} \gamma ^t \mathbf{1}(g_t=g)\right].
\]
By the Performance Difference Lemma~\citep{kakade2002approximately}: 
\begin{align*}
    \sVlagkpiast - \sVlagkpik  = \frac{1}{1-\gamma } \left\langle \smuast, \mathbb{E}_{a \sim \spiast}[\sQlagpik(\cdot, a)]-\mathbb{E}_{a \sim \pi^k} [\sQlagpik(\cdot,a)]\right\rangle.
\end{align*}
By adding and subtracting the estimated Lagrangian Q-function $\sQbarlagpik$, 
\begin{align*}
    \sVlagkpiast - \sVlagkpik  = &\frac{1}{1-\gamma } \left\langle \smuast, \mathbb{E}_{a \sim \spiast}[\sQlagpik(\cdot, a)-\sQbarlagpik(\cdot,a)]\right\rangle \\
    & + \frac{1}{1-\gamma } \left\langle \smuast, \mathbb{E}_{a \sim \pi^k}[\sQbarlagpik(\cdot,a)-\sQlagpik(\cdot, a)]\right\rangle\\ 
    & + \frac{1}{1-\gamma } \left\langle \smuast, \mathbb{E}_{a \sim \spiast}[\sQbarlagpik(\cdot,a)] -\mathbb{E}_{a \sim \pi^k}[\sQbarlagpik(\cdot, a)]\right\rangle.
\end{align*}
By Estimation Oracle~\(\mathcal B\), after taking the expectation over the estimation-oracle randomness, the first two terms are bounded by $\frac{\delta_\mathrm{est}}{1-\gamma}$ each. Hence: 
\begin{align}
    \sum_{k=0}^{K-1} \mathbb{E}[\sVlagkpiast-\sVlagkpik] &\le \frac{2K\delta_{\mathrm{est}}}{1-\gamma} + \frac{1}{1-\gamma} \mathbb{E}[\langle\smuast, \mathcal{R}_K\rangle],
    \label{eq:regret}
\end{align}
where, for each $g \in \mathcal{G}$: 
\[
\mathcal{R}_K(g) := \sum_{k=0}^{K-1}(\mathbb{E}_{a \sim \spiast}[\sQbarlagpik(g,a)] -\mathbb{E}_{a \sim \pi^k}[\sQbarlagpik(g, a)]).
\]
The remaining task is to bound $\mathcal{R}_K(g)$. 

Fix $g \in \mathcal{G}$ and define: 
\[
    q_k(a):= \sQbarlagpik(g,a).
\]
By construction, the oracle outputs are clipped so that:
\[
    q_k(a)\in \left[-\frac{\lambda^\mathrm{max}}{1-\gamma},\frac{1}{1-\gamma}\right]\subseteq [-R,R], \qquad R:= \frac{1+\lambda^\mathrm{max}}{1-\gamma}.
\]
For fixed $g$, the policy update is an exponential weights update with payoffs $q_k$. So, by the standard exponential weights regret bound~\citep[Theorem 2.2]{cesa2006prediction}, applied to payoffs in an interval of length $2R$ (without rescaling): 
\[
    \max_{a \in \mathcal{A} }\sum_{k=0}^{K-1}  q_k(a) -\sum_{k=0}^{K-1}\mathbb{E}_{a \sim \pi^k}[q_k(a)] \le \frac{\log|\mathcal{A}|}{\eta_1 } + \frac{\eta_1 K R^2}{2}. 
\]
Since 
\[
\sum_{k=0}^{K-1} \mathbb{E}_{a \sim \spiast}[q_k(a)]  \le \max_{a \in \mathcal{A} }\sum_{k=0}^{K-1}  q_k(a),
\]
it follows that 
\[
    \mathcal{R}_K(g)\le \frac{\log|\mathcal{A}|}{\eta_1 } + \frac{\eta_1 K R^2}{2}.
\]
The bound holds for any realization of the sequence $\{q_k\}_{k=0}^{K-1}$. Thus, it also holds after taking expectation over the algorithmic randomness. Moreover, since $\smuast$ is a probability distribution:
\[
    \mathbb{E}[\langle \smuast, \mathcal{R}_K\rangle] \le \frac{\log|\mathcal{A}|}{\eta_1 } + \frac{\eta_1 K R^2}{2}.
\]
Choosing 
\[
    \eta_1 = \frac{1}{R} \sqrt{\frac{2\log|\mathcal{A}|}{K}}
\]
gives 
\[
    \mathbb{E}[\langle \smuast, \mathcal{R}_K\rangle] \le R \sqrt{2K\log|\mathcal{A}|}. 
\]
Substituting this into~\eqref{eq:regret} yields 
\[
    \sum_{k=0}^{K-1} \mathbb{E}[\sVlagkpiast-\sVlagkpik] \le \frac{2K\delta_{\mathrm{est}}}{1-\gamma} + \frac{R}{1-\gamma} \sqrt{2K \log |\mathcal{A}|} = \frac{2K\delta_{\mathrm{est}}}{1-\gamma} + \frac{1+\lambda^\mathrm{max}}{(1-\gamma)^2} \sqrt{2K \log |\mathcal{A}|} 
\]
Finally, dividing both sides by $K$ gives 
\[
\frac{1}{K}\sum_{k=0}^{K-1} \mathbb{E}[\sVlagkpiast-\sVlagkpik] \le \frac{2\delta_{\mathrm{est}}}{1-\gamma} + \frac{1+\lambda^\mathrm{max}}{(1-\gamma)^2} \sqrt{\frac{2\log |\mathcal{A}|}{K}} ,
\]
which concludes the proof.

\subsection{Average Optimality gap}
\label{app:optimalitygap}
We use telescoping, the non-expansiveness of the projection of the dual update and the boundedness of the clipped estimates (Section~\ref{sec:algorithm}), as follows: 
\begin{align*}
    \allowdisplaybreaks
    0 \le \mathbb{E}[(\lambda^K)^2] & = \sum_{k=0}^{K-1} \mathbb{E}[(\lambda^{k+1})^2-(\lambda^k)^2] \\ 
    & = \sum_{k=0}^{K-1} \mathbb{E}[\mathrm{Proj}^2_{[0, \lambda^{\mathrm{max}}]}(\lambda^k + \eta_2(\sVbarcpik - \sd))-(\lambda^k)^2] \\ 
    & \le \sum_{k=0}^{K-1} \mathbb{E}[(\lambda^k + \eta_2(\sVbarcpik - \sd))^2-(\lambda^k)^2] \tag{non-expansiveness of $\mathrm{Proj}_{[0, \lambda^{\mathrm{max}}]}$} \\
    & \le \sum_{k=0}^{K-1} \mathbb{E}[ \eta_2^2  (\sVbarcpik - \sd)^2 + 2\eta_2\lambda^k(\sVbarcpik - \sd)] \\
     & \le K \eta_2^2 \left(\frac{1}{1-\gamma} +\sd\right)^2 + 2\eta_2 \sum_{k=0}^{K-1} \mathbb{E}[\lambda^k(\sVbarcpik - \sd)] \tag{$|\sVbarcpik|\le 1/(1-\gamma)$} \\
     &\le K \eta_2^2 \left(\frac{1}{1-\gamma} +\sd\right)^2 + 2\eta_2 \sum_{k=0}^{K-1} \mathbb{E}[\lambda^k(\sVcpik - \sd)] +2\eta_2 \sum_{k=0}^{K-1} \mathbb{E}[\lambda^k(\sVbarcpik - \sVcpik)] \\ 
     & \le K \eta_2^2 \left(\frac{1}{1-\gamma} +\sd\right)^2 + 2\eta_2 \sum_{k=0}^{K-1} \mathbb{E}[\lambda^k(\sVcpik - \sVcpiast)] +2\eta_2 \sum_{k=0}^{K-1} \mathbb{E}[\lambda^k(\sVbarcpik - \sVcpik)] \tag{$-\sd \le - \sVcpiast$}\\ 
\end{align*}
Arranging terms: 
\begin{align*}
    \sum_{k=0}^{K-1} \mathbb{E}[\lambda^k(\sVcpiast -\sVcpik)] \le  \frac{K\eta_2}{2} \left(\frac{1}{1-\gamma} +\sd\right)^2 + \sum_{k=0}^{K-1} \mathbb{E}[\lambda^k(\sVbarcpik - \sVcpik)].
\end{align*}
Using Estimation Oracle~\(\mathcal B\) and $\lambda^k \le \lambda^\mathrm{max}$, we can conclude: 
\begin{align*}
     \sum_{k=0}^{K-1} \mathbb{E}[\lambda^k(\sVcpiast -\sVcpik)] \le \frac{K\eta_2}{2} \left(\frac{1}{1-\gamma} +\sd\right)^2  + K \lambda^\mathrm{max} \delta_\mathrm{est}.
\end{align*}
Finally, dividing both sides by $K$ gives
\begin{align*}
     \frac{1}{K}\sum_{k=0}^{K-1} \mathbb{E}[\lambda^k(\sVcpiast -\sVcpik)] \le \frac{\eta_2}{2} \left(\frac{1}{1-\gamma} +\sd\right)^2  +  \lambda^\mathrm{max} \delta_\mathrm{est}.
\end{align*}

\subsection{Average constraint violation}
\label{app:constraint_violation}
To bound
\[
\left[\frac{1}{K}\sum_{k=0}^{K-1} \mathbb{E}[\sVcpik -\sd]\right]_+,
\]
we use the dual update to compare the dual iterates with a fixed multiplier $\lambda \in [0, \lambda^{\mathrm{max}}]$. Combined with the Lagrangian average performance bound, in Lemma~\ref{lemma:boundedaverage}, this yields a bound on $\lambda$ times the average constraint residual. Choosing $\lambda=\lambda^\mathrm{max}$ when the residual is positive and $\lambda=0$ otherwise, turns this term into the positive-part average constraint violation. 

For any fixed $\lambda \in [0, \lambda^{\mathrm{max}}]$, by non-expansiveness of the projection and the dual update, we have
\begin{align*}
    \allowdisplaybreaks
    |\lambda^{k+1}-\lambda|^2&  \le \big|\lambda^k-\lambda + \eta_2(\sVbarcpik-\sd)\big|^2 \\ 
    & = |\lambda^k-\lambda|^2 + \eta_2^2 (\sVbarcpik-\sd)^2 + 2\eta_2 (\lambda^k-\lambda)(\sVbarcpik-\sd).
\end{align*}
Rearranging
\begin{align*}
    (\lambda-\lambda^k) (\sVbarcpik-\sd) \le \frac{|\lambda^k-\lambda|^2-|\lambda^{k+1}-\lambda|^2}{2\eta_2} + \frac{\eta_2}{2} (\sVbarcpik-\sd)^2.
\end{align*}
Averaging over $k=0,\ldots,K-1$ and taking expectations gives
\begin{align*}
    \frac{1}{K}\sum_{k=0}^{K-1} \mathbb{E}[(\lambda-\lambda^k) (\sVbarcpik-\sd)] \le \frac{|\lambda^0-\lambda|^2-|\lambda^{K}-\lambda|^2}{2\eta_2K} + \frac{\eta_2}{2K}\sum_{k=0}^{K-1} \mathbb{E}[(\sVbarcpik-\sd)^2]
\end{align*}
Since $\lambda^0=0$, dropping the nonpositive term $-\mathbb{E}|\lambda^K-\lambda|^2$ and using $|\sVbarcpik|\le 1/(1-\gamma)$ from the clipping in Section~\ref{sec:algorithm}, we obtain
\begin{align*}
    \frac{1}{K}\sum_{k=0}^{K-1} \mathbb{E}[(\lambda-\lambda^k) (\sVbarcpik-\sd)] \le \frac{\lambda^2}{2\eta_2 K} + \frac{\eta_2}{2}\left(\frac{1}{1-\gamma} + \sd\right)^2.
\end{align*}
We can now add and subtract the true superstate cost value $\sVcpik$ to the left-hand side, obtaining
\begin{align*}
    \frac{1}{K}\sum_{k=0}^{K-1} \mathbb{E}[(\lambda-\lambda^k) (\sVcpik-\sd)] - \frac{1}{K}\sum_{k=0}^{K-1} \mathbb{E}[(\lambda-\lambda^k) (\sVcpik-\sVbarcpik)] \le \frac{\lambda^2}{2\eta_2 K} + \frac{\eta_2}{2}\left(\frac{1}{1-\gamma} + \sd\right)^2
\end{align*}
By Estimation Oracle~\(\mathcal B\) and because $\lambda, \lambda^k \in [0, \lambda^\mathrm{max}]$, we have
\begin{align}
    \frac{1}{K}\sum_{k=0}^{K-1} \mathbb{E}[(\lambda-\lambda^k) (\sVcpik-\sd)] \le \frac{\lambda^2}{2\eta_2 K} + \frac{\eta_2}{2}\left(\frac{1}{1-\gamma} + \sd\right)^2+ \lambda^\mathrm{max} \delta_\mathrm{est}.
    \label{eq:costproofbo}
\end{align}
Equation~\eqref{eq:costproofbo} bounds the term involving $(\lambda-\lambda^k)(\sVcpik-\sd)$. To obtain a bound involving the fixed multiplier $\lambda$, we are going to bound the other term $\lambda^k(\sVcpik-\sd)$. To do so, we use Lemma~\ref{lemma:boundedaverage} that gives
\begin{align*}
    \frac{1}{K} \sum_{k=0}^{K-1} \mathbb{E}[\sVrpiast-\sVrpik + \lambda^k(\sVcpik-\sVcpiast)] \le \delta_\mathrm{lag}.
\end{align*}
Since $\spiast$ is feasible, i.e., $\sVcpiast\le \sd$, and $\lambda^k \ge 0$, we have $\lambda^k(\sVcpik-\sd)\le \lambda^k(\sVcpik-\sVcpiast)$ and
\begin{align*}
    \frac{1}{K} \sum_{k=0}^{K-1} \mathbb{E}[\sVrpiast-\sVrpik + \lambda^k(\sVcpik-\sd)] \le \delta_\mathrm{lag}.
\end{align*}
Adding this inequality to~\eqref{eq:costproofbo} cancels the terms involving $\lambda^k$, yielding, for any $\lambda \in [0, \lambda^{\mathrm{max}}]$
\begin{align*}
    \frac{1}{K} \sum_{k=0}^{K-1} \mathbb{E}[\sVrpiast-\sVrpik + \lambda(\sVcpik-\sd)] \le \delta_\mathrm{lag} + \frac{\lambda^2}{2\eta_2 K} + \frac{\eta_2}{2}\left(\frac{1}{1-\gamma} + \sd\right)^2+ \lambda^\mathrm{max} \delta_\mathrm{est}.
\end{align*}
Since this inequality holds for any $\lambda \in [0, \lambda^{\mathrm{max}}]$, we choose: 
\begin{align*}
    \lambda = 
    \begin{cases}
        \lambda^\mathrm{max}, & \text{if } \frac{1}{K}\sum_{k=0}^{K-1} \mathbb{E}[\sVcpik-\sd]>0, \\
        0, & \text{otherwise}.
    \end{cases}
\end{align*}
With this choice: 
\begin{align*} \lambda \left( \frac{1}{K}\sum_{k=0}^{K-1} \mathbb{E}[\sVcpik-\sd] \right) = \lambda^{\mathrm{max}} \left[ \frac{1}{K}\sum_{k=0}^{K-1} \mathbb{E}[\sVcpik-\sd] \right]_+,
\end{align*}
and then: 
\begin{align*}
    \frac{1}{K}\sum_{k=0}^{K-1} \mathbb{E}[\sVrpiast-\sVrpik] + \lambda^{\mathrm{max}} \left[ \frac{1}{K}\sum_{k=0}^{K-1} \mathbb{E}[\sVcpik-\sd] \right]_+ \le \underbrace{\delta_{\mathrm{lag}} + \frac{(\lambda^{\mathrm{max}})^2}{2\eta_2K} + \frac{\eta_2}{2} \left( \frac{1}{1-\gamma}+\sd \right)^2 + \lambda^{\mathrm{max}}\delta_{\mathrm{est}}}_{=:\varepsilon_c}.
\end{align*}
By the convexity of the discounted occupancy measure set ~\citep{ding2025convergence}, there exists a stationary randomized policy $\pi'\in\sPi$ such that
\begin{align*}
    \sVrpiprime = \frac{1}{K}\sum_{k=0}^{K-1}\mathbb{E} [\sVrpik], \qquad \sVcpiprime = \frac{1}{K}\sum_{k=0}^{K-1}\mathbb{E} [\sVcpik].
\end{align*}
and 
\begin{align*}
    \sVrpiast- \sVrpiprime + \lambda^\mathrm{max}[\sVcpiprime-\sd]_+ \le \varepsilon_c.
\end{align*}

We now use the upper-bound constraint counterpart of~\citet[Lemma 5]{ding2025convergence}, which we report here.
\begin{lemma}[\textbf{Adapted from ~\citet[Lemma 5]{ding2025convergence}}]
    Assume superstate \ac{CMDP} satisfies Slater condition and let $\slambdaast$ be an optimal dual multiplier. For any $B \ge 2\slambdaast$, if there exists a policy $\pi' \in \sPi$ and $\delta >0$ such that $\sVrpiast-\sVrpiprime + B[\sVcpiprime-\sd]_+ \le \delta$ then $[\sVcpiprime - \sd ]_+ \le \frac{2\delta}{B}$. $\square$
    \label{lemma:ding}
\end{lemma}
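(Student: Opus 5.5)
The plan is to follow the standard Lagrangian sensitivity argument, using strong duality from Lemma~\ref{lemma_dualityandbound}(i). Set $v:=[\sVcpiprime-\sd]_+$. If $v=0$ there is nothing to prove, so assume $v>0$.

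\emph{Step 1: a lower bound on the reward gap from the perturbation function.} Let $\pi'$ be feasible for the relaxed threshold $\sd+v$. Repeating the chain of inequalities in the proof of Lemma~\ref{lemma:lemma14}, but with $\pi'$ in place of the generic relaxed-feasible policy, gives
\[
\sVrpiprime\le \max_{\pi}\bigl\{\sVrpi-\slambdaast(\sVcpi-\sd)\bigr\}+\slambdaast(\sVcpiprime-\sd).
\]
By strong duality the maximum equals $\sVrpiast$. Since $\sVcpiprime-\sd=v$, this yields
\[
\sVrpiast-\sVrpiprime\ge -\slambdaast v.
\]
In words, any policy can beat the constrained optimum by at most $\slambdaast$ times its constraint violation.

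\emph{Step 2: combine with the hypothesis.} Inserting the lower bound from Step 1 into the assumed inequality gives
\[
-\slambdaast v + Bv \le \sVrpiast-\sVrpiprime+Bv\le \delta,
\]
so $(B-\slambdaast)v\le\delta$.

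\emph{Step 3: use $B\ge2\slambdaast$.} The condition $B\ge 2\slambdaast$ implies $B-\slambdaast\ge B/2>0$. Here $B>0$ holds because otherwise $B=0=\slambdaast$ and $B\ge 2\slambdaast$ is satisfied trivially; that degenerate case must be handled separately or excluded. Dividing then gives $v\le 2\delta/B$.

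The only delicate point is Step 1. It needs the value $\sVrpiast$ to equal the dual optimum $D^{\slambdaast}$, which is exactly Lemma~\ref{lemma_dualityandbound}(i) under the Slater condition. It also needs $\pi'$ to be a stationary superstate policy, so that it lies in the set over which the Lagrangian is maximized; this is the case since $\pi'\in\sPi$.

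For the degenerate case $B=0$: there $\slambdaast=0$, and the bound $2\delta/B$ should be read as $+\infty$, which is vacuous. In the application $B=\lambda^{\mathrm{max}}>0$, so the case does not arise.
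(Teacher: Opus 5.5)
Your proof is correct and is essentially the standard argument behind \citet[Lemma~5]{ding2025convergence}, which the paper cites without reproducing. Ding et al.\ phrase your Step~1 through the perturbation function, i.e.\ the constrained optimum viewed as a function of the threshold shift, and its supergradient $-\slambdaast$ at zero; you instead read off $\sVrpiprime-\slambdaast(\sVcpiprime-\sd)\le D^{\slambdaast}(\srho)=\sVrpiast$ directly, exactly as in the proof of Lemma~\ref{lemma:lemma14}, and you correctly note that the statement implicitly requires $B>0$, which holds for $B=\lambda^{\mathrm{max}}$.
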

By the proof of Lemma~\ref{lemma_dualityandbound}(ii), Appendix~\ref{app:dualityandbound}, \[
\slambdaast \le \frac{1}{(1-\gamma)\sxi},
\]
and choosing $\lambda^\mathrm{max}=2/((1-\gamma)\sxi)$ ensures that $\lambda^\mathrm{max}\ge 2\slambdaast$. 

Applying Lemma~\ref{lemma:ding} with $B=\lambda^\mathrm{max}, \delta=\varepsilon_c$ gives 
\[
[\sVcpiprime-\sd]_+ \le \frac{2\varepsilon_c}{\lambda^\mathrm{max}} = (1-\gamma)\sxi\varepsilon_c.
\]
Using the definition of $\pi'$, we conclude that
\[
\left[\frac{1}{K}\sum_{k=0}^{K-1} \mathbb{E}[\sVcpik-\sd]\right]_+ \le (1-\gamma)\sxi\varepsilon_c.
\]

\section{Conditional Sample Complexity for a Fixed Length}
\label{app:samplecomplexity}

We derive the sample complexity of Algorithm \ref{alg:politex} for a fixed history-window length $l$. Given a target accuracy $\varepsilon>0$, we seek a sufficient number $N_\varepsilon$ of observation-action pairs collected from the original \ac{CPOMDP} $\mathcal P$ such that
\[
\mathrm{OpGap}_K\leq\varepsilon,\qquad \mathrm{CViol}_K\leq\varepsilon.
\]
We consider an implementation of Estimation Oracle $\mathcal B$ based on the projected linear TD procedure of \citet{anjarlekar2025scalable}. We first describe the estimation procedure, then state the finite-time property required from the reward and cost critics, and finally combine it with Theorem \ref{thm:theorem}.

\subsection{Projected Linear TD Learning Estimation Oracle}
\label{app:oracle_options}

Let $\phi:\mathcal G\times\mathcal A\to\mathbb R^d$ be a fixed feature map satisfying
\[
\|\phi(g,a)\|_2\leq1,\qquad \forall(g,a)\in\mathcal G\times\mathcal A.
\]
Fix an ordering of the superstate-action pairs and let $\Phi\in\mathbb R^{|\mathcal G||\mathcal A|\times d}$ be the feature matrix whose row indexed by $(g,a)$ is $\phi(g,a)^\top$. We identify each action-value function with its vector in $\mathbb R^{|\mathcal G||\mathcal A|}$.

For each policy iterate $\pi^k$ and each $u\in\{r,c\}$, the oracle approximates the corresponding action-value function as
\[
\widehat Q_{u^g}^{\pi^k}(g,a):=\phi(g,a)^\top\theta_{u,k},\qquad \theta_{u,k}\in\mathcal B(R):=\{\theta\in\mathbb R^d:\|\theta\|_2\leq R\}.
\]
Algorithm~\ref{alg:tdestimation} gives the complete estimation procedure. The same trajectory is used to update both critics, so collecting reward and cost estimates does not double the number of interactions.

\begin{algorithm}[H]
\caption{Projected Linear TD Estimation Oracle}
\label{alg:tdestimation}
\begin{algorithmic}[1]

\State \textbf{Input:} fixed policy $\pi^k$, feature map $\phi:\mathcal G\times\mathcal A\to\mathbb R^d$, discount factor $\gamma$, projection radius $R$, stepsizes $\{\epsilon_t\}$, initial phase length $\tau'$, and number of TD updates $\tau$

\State Initialize $\theta_r^k=\theta_c^k=0\in\mathbb R^d$

\State Initialize the original CPOMDP from its initial distribution

\State Interact with the original CPOMDP under $\pi^k$ for $\tau'$ steps without updating the critics, obtaining $g_{\tau'}$

\State Sample $a_{\tau'}\sim\pi^k(\cdot\mid g_{\tau'})$

\For{$t=\tau',\ldots,\tau'+\tau-1$}

    \State Execute $a_t$ and observe $r_t$, $c_t$, and $g_{t+1}$

    \State Sample $a_{t+1}\sim\pi^k(\cdot\mid g_{t+1})$

    \For{$u\in\{r,c\}$}

        \State Set $u_t=r_t$ if $u=r$, and $u_t=c_t$ if $u=c$

        \State Compute $\delta_{u,t}^k=u_t+\gamma\phi(g_{t+1},a_{t+1})^\top\theta_u^k-\phi(g_t,a_t)^\top\theta_u^k$

        \State Update $\theta_u^k\gets\operatorname{Proj}_{\mathcal B(R)}\!\left(\theta_u^k+\epsilon_t\delta_{u,t}^k\phi(g_t,a_t)\right)$

    \EndFor

\EndFor

\State Define $\widehat Q_{u^g}^{\pi^k}(g,a)=\phi(g,a)^\top\theta_u^k$ for every $(g,a)\in\mathcal G\times\mathcal A$ and $u\in\{r,c\}$

\State Compute $\widehat Q_{r^g-\lambda^k c^g}^{\pi^k}(g,a)=\widehat Q_{r^g}^{\pi^k}(g,a)-\lambda^k\widehat Q_{c^g}^{\pi^k}(g,a)$

\State Compute $\widehat V_{c^g}^{\pi^k}(\rho^g)=\displaystyle\sum_{g\in\mathcal G}\sum_{a\in\mathcal A}\rho^g(g)\pi^k(a\mid g)\widehat Q_{c^g}^{\pi^k}(g,a)$

\State \textbf{Output:} $\widehat Q_{r^g-\lambda^k c^g}^{\pi^k}$ and $\widehat V_{c^g}^{\pi^k}(\rho^g)$

\end{algorithmic}
\end{algorithm}

Algorithm~\ref{alg:tdestimation} describes a possible implementation of the estimation oracle. Its finite-sample accuracy depends on the feature representation and on the coverage and mixing properties of the induced sampling process. We isolate the property required by the outer-loop analysis in the following assumption.

For each critic, define a best parameter in the projection set by
\[
\widehat\theta_{u,k}\in\arg\min_{\|\theta\|_2\leq R}\left\|Q_{u^g}^{\pi^k}-\Phi\theta\right\|_2,
\]
and define the uniform function-approximation error as
\[
\varepsilon_{\mathrm{FA},u}:=\sup_k\left\|Q_{u^g}^{\pi^k}-\Phi\widehat\theta_{u,k}\right\|_\infty.
\]
Then we assume the following: 
\begin{assumption}[Linear TD critic rate]
\label{ass:linear-td-rate}
For the fixed feature representation and history length $l$, there exist nonnegative constants $C_{u,l}^{\mathrm{hist}}$, $C_{u,l}^{\mathrm{init}}$, $C_{u,l}^{\mathrm{stat}}$, and $c_{u,l}>0$, independent of $k$, $K$, and $\tau$, such that, for every $u\in\{r,c\}$, Algorithm~\ref{alg:tdestimation} satisfies
\[
\begin{aligned}
\delta_{u^g}(\tau,l)
&:=
\sup_{K\geq1}\sup_{0\leq k<K}
\mathbb E\!\left[
\left\|
\widehat Q_{u^g}^{\pi^k}
-
Q_{u^g}^{\pi^k}
\right\|_\infty
\right].\\
&\leq\varepsilon_{\mathrm{FA},u}+C_{u,l}^{\mathrm{hist}}\frac{(1-\kappa)^l}{1-\gamma}+C_{u,l}^{\mathrm{init}}e^{-c_{u,l}\sqrt\tau}+C_{u,l}^{\mathrm{stat}}\frac{\log\tau}{\sqrt\tau}.
\end{aligned}
\]
Moreover, the required initial phase satisfies
\[
\tau'=O_l(\log(\tau)).
\]
\end{assumption}
where the subscript $l$ in $O_l$ means it depend on constants that depend on fixed $l$.

Assumption~\ref{ass:linear-td-rate} isolates the finite-sample property required by the outer-loop analysis. The first term is the linear function-approximation error. The second accounts for the mismatch between the full-history CPOMDP dynamics generating the samples and the finite-history superstate model whose values are estimated. The last two terms are respectively the initialization and finite-sample TD errors. Because rewards and costs have the same bounded range, the same form of the TD argument can be applied separately to the two critics.

The constants may depend on the fixed feature representation, projection radius, discount factor, and the relevant mixing and coverage properties, but not on $K$ or $\tau$. The form of the bound in this assumption follows the finite-time TD analysis of \citet{anjarlekar2025scalable}: under persistent exploration and a corresponding mixing condition, their TD bound has the same structure as the one in Assumption \ref{ass:linear-td-rate}. In our setting, we additionally require the same constants to apply uniformly over all policy generated by Algorithm \ref{alg:politex} and all iteration budgets $K$. The exponential-weights update does not by itself guarantee the persistent
exploration condition, so the resulting sample-complexity bound is conditional on these uniform coverage conditions. Establishing sufficient conditions under which the policy sequence generated by Algorithm~\ref{alg:politex} satisfies the required uniform mixing and coverage properties is left for future work.

Since
\[
Q_{r^g-\lambda^k c^g}^{\pi^k}=Q_{r^g}^{\pi^k}-\lambda^kQ_{c^g}^{\pi^k},\qquad \widehat Q_{r^g-\lambda^k c^g}^{\pi^k}=\widehat Q_{r^g}^{\pi^k}-\lambda^k\widehat Q_{c^g}^{\pi^k},
\]
the triangle inequality and $\lambda^k\leq\lambda^{\max}$ give
\[
\mathbb E\!\left[\left\|\widehat Q_{r^g-\lambda^k c^g}^{\pi^k}-Q_{r^g-\lambda^k c^g}^{\pi^k}\right\|_\infty\right]\leq\delta_{r^g}(\tau,l)+\lambda^{\max}\delta_{c^g}(\tau,l).
\]
Moreover, by the definition of the estimated cost value,
\[
\begin{aligned}
\mathbb E\!\left[\left|\widehat V_{c^g}^{\pi^k}(\rho^g)-V_{c^g}^{\pi^k}(\rho^g)\right|\right]&\leq\mathbb E\!\left[\left\|\widehat Q_{c^g}^{\pi^k}-Q_{c^g}^{\pi^k}\right\|_\infty\right]\\
&\leq\delta_{c^g}(\tau,l).
\end{aligned}
\]
Consequently, the conditions of Estimation Oracle $\mathcal B$ are satisfied with
\[
\delta_{\mathrm{est}}(\tau,l):=\max\left\{\delta_{r^g}(\tau,l)+\lambda^{\max}\delta_{c^g}(\tau,l),\ \delta_{c^g}(\tau,l)\right\}.
\]
A valid upper bound is
\[
\delta_{\mathrm{est}}(\tau,l)\leq(1+\lambda^{\max})\max\left\{\delta_{r^g}(\tau,l),\delta_{c^g}(\tau,l)\right\}.
\]

\subsection{Sample Complexity Bound for Fixed Window Length}

For any fixed $l$ and admissible superstate threshold $\sd$, recall
\[
\delta(l)=\frac{2(1-\kappa)^l}{(1-\gamma)^2},\qquad \sxi=\sd-\od+\oxi-\delta(l),\qquad \lambda^{\max}=\frac{2}{(1-\gamma)\sxi}.
\]
Since the history length and feature representation are fixed, the superstate space, feature dimension, projection radius, and the relevant mixing and coverage constants do not depend on $K$ or $\tau$. For a general linear representation, $R$ is a fixed design parameter defining the critic class.

The terms in Assumption~\ref{ass:linear-td-rate} that do not vanish as $\tau$ increases induce the estimation-error floor
\[
\begin{aligned}
\delta_{\mathrm{est}}^{\mathrm{floor}}(l,\lambda^{\max}):=\max\Bigg\{&\varepsilon_{\mathrm{FA},r}+\lambda^{\max}\varepsilon_{\mathrm{FA},c}+\left(C_{r,l}^{\mathrm{hist}}+\lambda^{\max}C_{c,l}^{\mathrm{hist}}\right)\frac{(1-\kappa)^l}{1-\gamma} , \varepsilon_{\mathrm{FA},c}+C_{c,l}^{\mathrm{hist}}\frac{(1-\kappa)^l}{1-\gamma}\Bigg\}.
\end{aligned}
\]
It follows from Assumption~\ref{ass:linear-td-rate} that, for some fixed $c_l>0$,
\[
\delta_{\mathrm{est}}(\tau,l)\leq\delta_{\mathrm{est}}^{\mathrm{floor}}(l,\lambda^{\max})+O_l\!\left(e^{-c_l\sqrt\tau}+\frac{\log\tau}{\sqrt\tau}\right).
\]

Combining this inequality with Theorem \ref{thm:theorem}, the average optimality gap satisfies
\[
\begin{aligned}
\mathrm{OpGap}_K\leq\underbrace{2\delta(l)+\frac{[\od-\sd+\delta(l)]_+}{(1-\gamma)\sxi}+\frac{2(\sxi+1)}{(1-\gamma)\sxi}\delta_{\mathrm{est}}^{\mathrm{floor}}(l,\lambda^{\max})}_{=:F_r(l,\sd) }+O_l\!\left(e^{-c_l\sqrt\tau}+\frac{\log\tau}{\sqrt\tau}\right)+O_l(K^{-1/2}).
\end{aligned}
\]
Similarly, the average constraint violation satisfies
\[
\begin{aligned}
\mathrm{CViol}_K\leq\underbrace{[\sd-\od+\delta(l)]_++2(\sxi+1)\delta_{\mathrm{est}}^{\mathrm{floor}}(l,\lambda^{\max})}_{=:F_c(l,\sd)}+ O_l\!\left(e^{-c_l\sqrt\tau}+\frac{\log(\tau)}{\sqrt\tau}\right)+ O_l(K^{-1/2}).
\end{aligned}
\]
The quantities $F_r(l,\sd)$ and $F_c(l,\sd)$ are respectively the fixed-window reward and constraint floors. One can consider, for example, the case in which $\sd=\od-\delta(l)$ or $\sd=\od$. They collect the CPOMDP-to-superstate approximation error, the linear function-approximation error, and the sampling-mismatch error of the TD oracle. The remaining terms are the finite-sample TD error and the finite-$K$ optimization error. The notation $ O_l$ indicates that the hidden constants may depend on fixed quantities such as the feature representation, $R$, the mixing and coverage constants, $\sxi$, $\lambda^{\max}$, and $\gamma$, but not on $K$ or $\tau$.

Define
\[
F_{l}:=\max\{F_{r}(l, \sd),F_{c}(l, \sd)\}.
\]
Because $l$ is fixed, $F_{l}$ does not decrease when more samples are collected. Therefore, the preceding bounds certify the target accuracy only when $\varepsilon>F_{l}$. For $\varepsilon\leq F_{l}$, the guarantee does not hold.

For any $\varepsilon>F_{l}$, define the remaining error budget
\[
\varepsilon_{\mathrm{rem}}:=\varepsilon-F_{l}>0.
\]
It is sufficient to choose $K$ and $\tau$ so that the finite-sample TD and optimization errors are each of order $\varepsilon_{\mathrm{rem}}$. For $0<\varepsilon_{\mathrm{rem}}<1$, sufficient choices are
\[
\tau= O_l\!\left(\varepsilon_{\mathrm{rem}}^{-2}\log^2\frac{e}{\varepsilon_{\mathrm{rem}}}\right),\qquad K= O_l(\varepsilon_{\mathrm{rem}}^{-2}).
\]
Each of the $K$ calls to the oracle uses $\tau'$ initial interactions followed by $\tau$ TD interactions. Since the same interactions update both critics, the total number of observation-action pairs is
\[
N_{\varepsilon}=K\bigl(\tau+\tau'\bigr).
\]
Using $\tau'=O_l(\log(\tau))=o(\tau)$ gives
\[
N_{\varepsilon}=O_l\!\left(\varepsilon_{\mathrm{rem}}^{-4}\log^2\frac{1}{\varepsilon_{\mathrm{rem}}}\right).
\]

\section{Experiments}
\label{app:experiments}

\subsection{Experimental Setup}
\paragraph{Hardware Specification}
Experiments are conducted on Apple Mac Studio equipped with Apple M4 Max chip (16-core CPU, 50-core GPU), 64 GB unified memory. 

\paragraph{Environment Details}
The state space consists of the grid cell coordinates. The action space is $\mathcal{A}=\{\texttt{up},\texttt{right},\texttt{down},\texttt{left}\}$. The observation space is $\mathcal{O}=\{\texttt{safe},\texttt{edge},\texttt{near-hazard},\texttt{unsafe},\texttt{goal}\}$.
The agent starts from the middle cell of the left boundary. The absorbing goal is located at the opposite of the grid. The agent receives reward $+1$ whenever the current state is the absorbing goal state, $0$ otherwise. The cost is $1$ when the current state is unsafe, $0$ otherwise. The intended action is executed with probability $0.7$, while with probability $0.3$ one of the four action is sampled uniformly. Boundary-crossing actions leave the agent in place. The nominal observation is emitted with probability $0.9$, while with probability $0.1$ one of the five observations is sampled uniformly. We use the discount factor $\gamma=0.9$.

\paragraph{Algorithm Configuration}
We consider history lengths $l \in \{0, 1, 2, 3, 4\}$ and iteration budgets $K\in\{100,500,1000,5000,10000\}$. The original and superstate constraint threshold are set to $d^o=d^g=0.692$: this specific value is chosen so that, for every tested history length $l$, the corresponding superstate \ac{CMDP} admits a strictly feasible policy, satisfying Slater condition. For every $l$, we verify strict feasibility of the corresponding superstate CMDP by computing $\hat{\xi}^g_l = \sd - \min_{\pi\in\sPi} \sVcpi (\srho)>0$.  Since the choice of $\sd=\od$ is motivated by the fact the the filter stability coefficient $\kappa$, which is needed for the choice of $\sd=\od-\delta(l)$, is not assumed to be known in the experiments. We still perform some experiments on the constraint tightening.  
  
We initialize the policy uniformly over actions and set $\lambda^0 = 0$. The dual variable is projected onto $\left[ 0, \frac{2}{(1-\gamma)\hat{\xi}^g_l} \right]$.
All results in the main experiment use exact policy evaluation in the superstate \ac{CMDP}. 

Regarding the learning rates, in most experiments, we use $\eta_1 = \sqrt{\frac{2\log|\mathcal A|}{K}}$ and $\eta_2 = \frac{1} {\left((1-\gamma)^{-1}-\sd\right)\sqrt K}$. These have the same $K^{-1/2}$ dependence as those in Theorem~\ref{thm:theorem}, but use different multiplicative constants, alleviating the need of knowing critical parameters such as $\sxi$ or $\oxi, \kappa$. Later in the appendix we will also analyze the case with exact constants. Our objective in these experiments is to study the practical behavior of the algorithm.

The primal rate is an exponential weights calibration based on the size of the action space. For the dual update, since $\sVcpi (\srho) - \sd \le 1/(1-\gamma) - \sd$, our choice guarantees that the largest possible positive dual-update increment is scaled as 
\[
\eta_2 \left( \frac{1}{1-\gamma}-\sd \right) = \frac{1}{\sqrt K}.
\]
This normalization makes the scale of the dual updates comparable across iteration budgets.
Both rules are fixed before running and are used for all the history lengths $l$. For completeness, we also conduct additional experiments using exact learning rates from Theorem~\ref{thm:theorem}. The results are reported below. 

For every pair $(l,K)$, we evaluate the average reward value and average constraint violation of the $K$ policy iterates in the original \ac{CPOMDP}. At the beginning of each episode, one of the $K$ iterates is sampled uniformly and used throughout that episode.
We use four independent random seeds, with $5{,}000$ episodes per seed, for a total of $20{,}000$ episodes per $(l,K)$ configuration. Since the policy-evaluation oracle is exact, the policy sequence is deterministic; the seeds affect only the Monte Carlo evaluation in the original CPOMDP.
Each rollout has horizon $H = \left\lceil \frac{\log(10^{-7})}{\log\gamma} \right\rceil = 153.$
The omitted discounted tail is therefore controlled by $\gamma^H\leq10^{-7}$.

\paragraph{Construction of the SARSOP Reference Value and Visualization}
\label{app:sarsop_baseline}
The purpose of this baseline is to obtain an approximation of the optimal constrained value $\oVrpistar(\orho)$ in the original \ac{CPOMDP}. We use SARSOP~\citep{kurniawati2008sarsop}, a point-based \acs{POMDP} solver that approximates the belief space reachable under optimal policies. Unlike our algorithm, this baseline does not truncate histories and the complete belief over all $49$ latent Gridworld states is updated using the exact transition and observation models. 

Since SARSOP does not directly support constrained \acsp{POMDP}, we consider the Lagrangian. For each $\lambda \ge 0$, we solve the unconstrained problem with reward $r-\lambda c$. We first solve it for $\lambda =0$ and then evaluate $\lambda=1,2,4,\ldots$ until finding a policy satisfying
\[
\bar{\oVcpi} (\orho)+2\,\mathrm{SE}\!\left(\bar\oVcpi(\orho)\right)\le \od.
\]
where $\bar\oVcpi(\orho)$ denote the empirical mean cost value computed from the evaluation rollouts, and $\mathrm{SE}(\bar\oVcpi(\orho))$ denote its standard error.
The resulting multiplier interval is refined by at most $12$ bisection steps, with relative tolerance $10^{-3}$. Each scalar problem is solved with target precision $10^{-2}$ and a timeout of $1200$ seconds. 
We then construct an episode-level mixture of one feasible and one infeasible candidate, choosing the mixture with the largest estimated reward while satisfying the constraint with a $1.96$-standard-error margin. A final evaluation over $50{,}000$ independent rollouts gives
\[
\bar V_{r^o}^{\pi_{\mathrm{mix}}}(\rho)=3.426219\pm0.006847,\qquad\bar V_{c^o}^{\pi_{\mathrm{mix}}}(\rho)=0.684417\pm0.001827,
\]
where uncertainties denote standard errors. Moreover,
\[
    0.684417+1.96(0.001827)=0.687998<\od,
\]
confirming feasibility at the adopted statistical margin. We use the resulting reward as a feasible SARSOP-based approximation of $\oVrpistar(\orho)$.

SARSOP returns a policy over belief states. We estimate its behavior at each latent cell from rollout frequencies. The arrow shows the frequency of each action. 

\subsection{Other Results}
\paragraph{Performance in the Superstate \ac{CMDP}}
Fig.~\ref{fig:superstate_results} reports the signed average optimality gap and average constraint violation in the superstate \ac{CMDP} under exact policy evaluation. The constraint violation decreases with $K$ for all tested history lengths. For some $l$, the positive average optimality gap approximately follows the displayed $K^{-1/2}$ reference slope. For other $l$ values, it becomes negative. A negative signed gap can occur because the finite-$K$ mixture remains infeasible, attaining a reward larger than the constrained optimum. Average optimality gap and average constraint violation curves must consequently be interpreted jointly. Overall, these curves are qualitatively compatible with the $O(K^{-1/2})$ optimization terms. 
\begin{figure}
    \centering
        \includegraphics[width=0.5\linewidth]{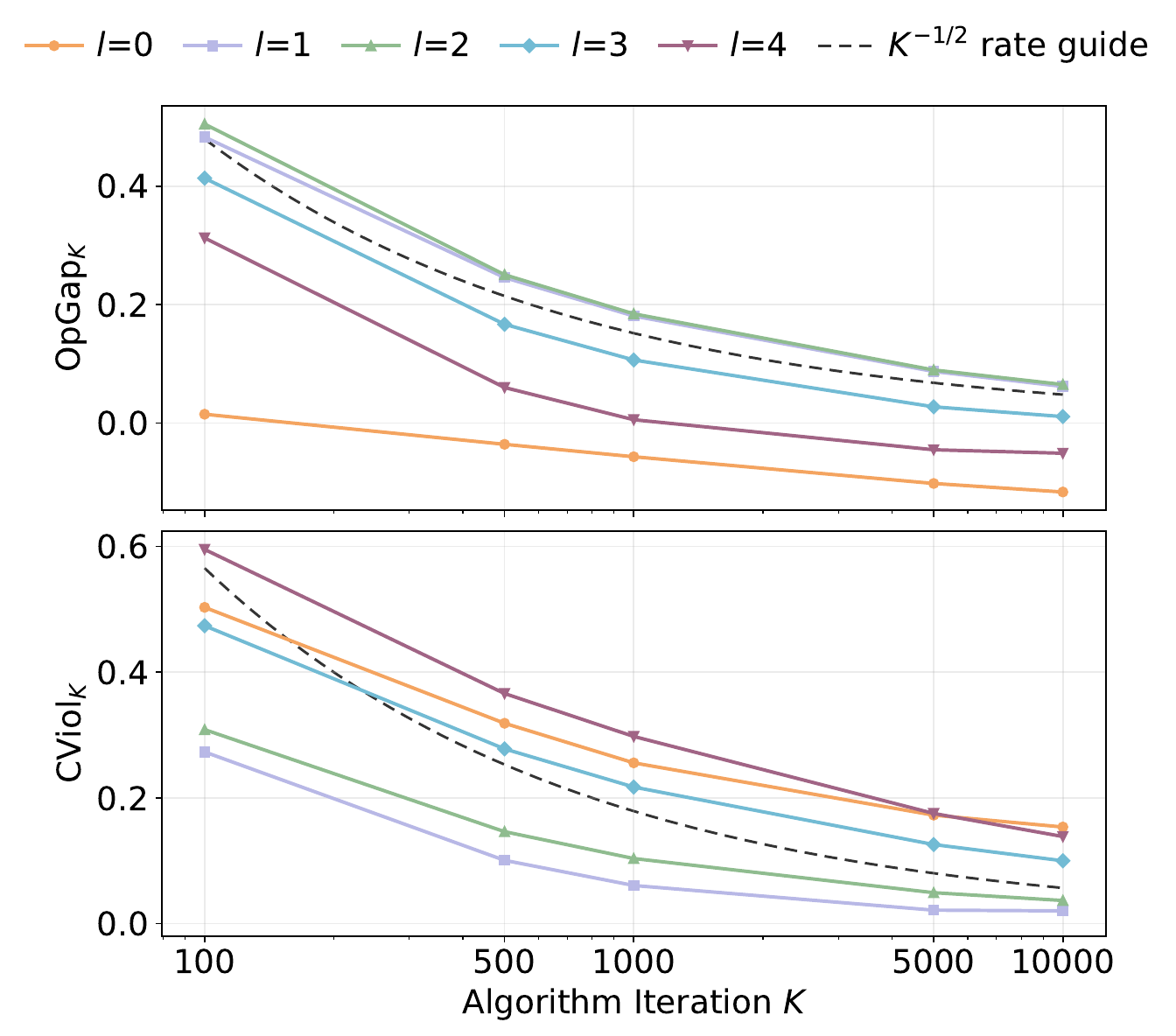}
    \caption{Performance of Algorithm~\ref{alg:politex} in the finite-history superstate model for different values of $K$ and $l$.}
    \label{fig:superstate_results}
\end{figure}

\paragraph{Effect of Implementing an Estimation Oracle}
We use persistent tabular TD as an empirical implementation of the estimation oracle. The experiment evaluates the practical effect of approximate policy evaluation, it is not intended as an empirical verification of an end-to-end TD sample-complexity bound. At each primal-dual iteration, the TD(0) oracle collects 64 trajectories of horizon 100 under the current policy. An additional 2000 warm-up trajectories are collected during the first iteration. The action-value tables persist across the primal-dual iterations, and, therefore, are not re-initialized. For the $n$th visit to a superstate-action pair we use as stepsize:
\[
    \alpha_n = \frac{0.1}{(1+0.01n)^{1/2}}.
\]

Results for $l\in \{0, 1, 2, 3, 4\}$ and $K\in \{100, 500, 1000, 5000, 10000\}$ are reported in Fig.~\ref{fig:oracleeffect}. 
Compared with exact policy evaluation, the TD oracle introduces in general an additional bias in both reward and constraint performance. This is qualitatively consistent with the estimation-error terms
$\propto\delta_{\mathrm{est}}$ appearing in the bounds. Since the TD estimates are retained across iterations, their actual errors vary with $K$. This illustrates the viability and the effect of approximate policy evaluation.

\begin{figure}
    \centering
    \includegraphics[width=\linewidth]{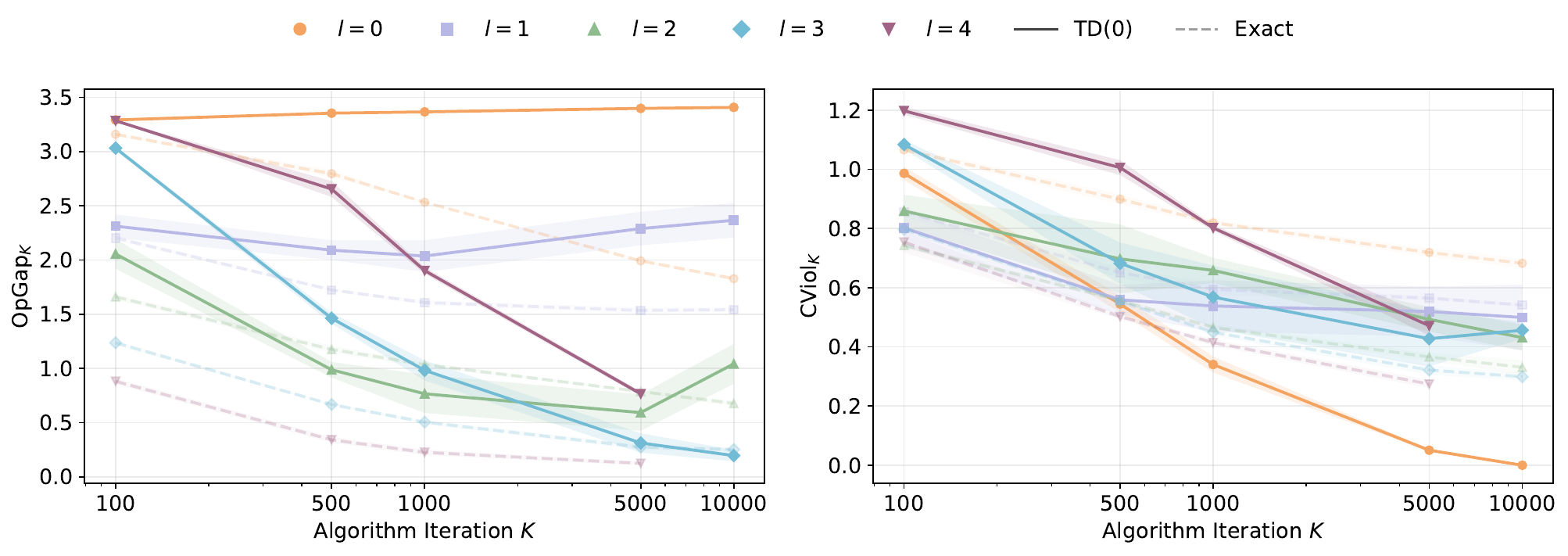}
    \caption{Performance of using exact oracle vs TD(0) learning oracle in Algorithm~\ref{alg:politex}. Average optimality gap (left) and average constraint violation (right). The iterates are evaluated in the \ac{CPOMDP}.}
    \label{fig:oracleeffect}
\end{figure}

\paragraph{Effect of Superstate Constraint Tightening}
Fig.~\ref{fig:tightening} illustrates the effect of tightening the superstate constraint for $l=3$, using smaller $\sd$. Let
\[
\Delta_{\mathrm{slack}} := \od- \min_{\pi\in\sPi} \sVcpi(\srho) \simeq 0.4753
\]
denote the available strict-feasibility slack. We set
\[
\sd_\alpha=\od-\alpha\Delta_\mathrm{slack}, \qquad \alpha\in\{0,0.25,0.5,0.75\},
\]
obtaining
\[
\sd_\alpha \in\{0.692,\,0.573,\,0.454,\,0.336\}.
\]
Hence, $\sd_0=0.692$ corresponds to the untightened problem, while smaller values impose increasingly conservative superstate constraints. The results show the expected reward-safety tradeoff: stronger tightening reduces the cost incurred in the original CPOMDP, at the price of a lower reward.
\begin{figure}
    \centering
    \begin{subfigure}[t]{0.49\textwidth}
        \centering
        \includegraphics[width=\linewidth]
            {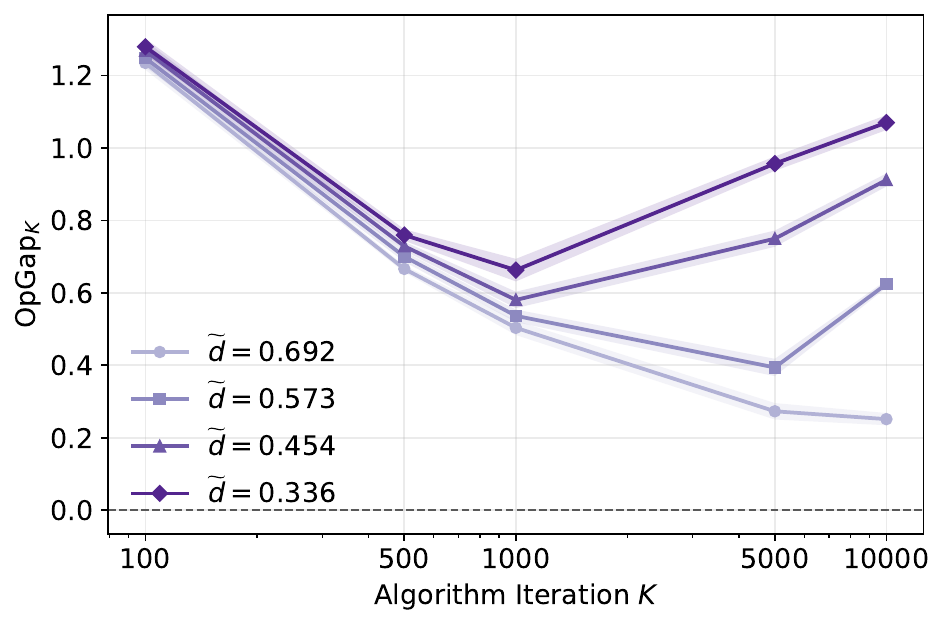}
    \end{subfigure}
    \hfill
    \begin{subfigure}[t]{0.49\textwidth}
        \centering
        \includegraphics[width=\linewidth]
            {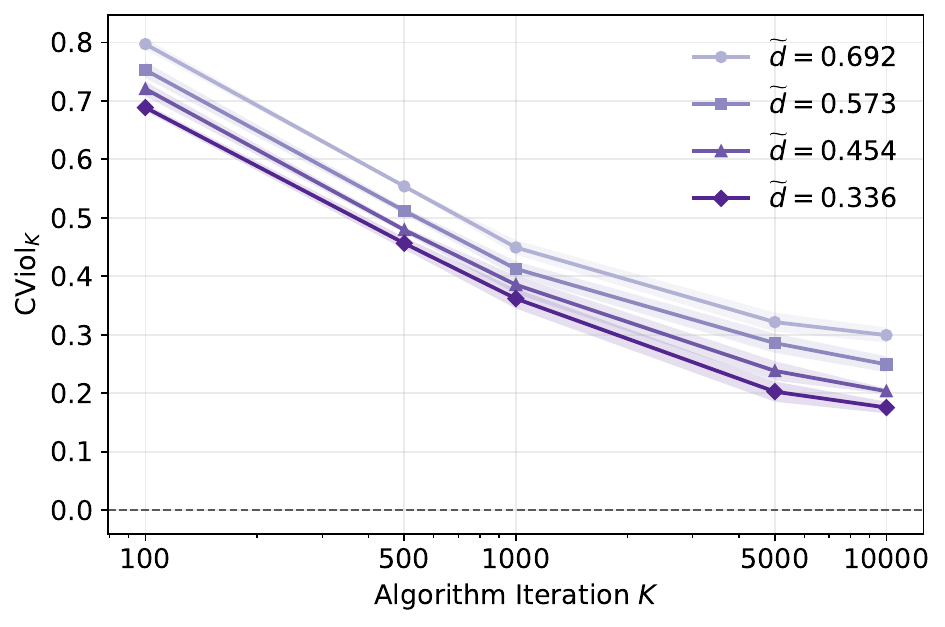}
    \end{subfigure}

    \caption{Effect of superstate constraint tightening. Average optimality gap (left) and average constraint violation (right). The iterates are evaluated in the \ac{CPOMDP}. History length $l=3$. }
    \label{fig:tightening}
\end{figure}

\paragraph{Fully Observable and Unconstrained Partially Observable Baselines}
\,\\
\textbf{1. Model-Based Planning References.}
Table~\ref{tab:reference_policies} compares three reference policies. The approximate constrained \ac{CPOMDP} optimum achieves an expected discounted cumulative reward of $3.426$ and an expected cumulative cost of $0.684$. Removing the constraint, but keeping partial observability, increases the SARSOP reward to $3.898$, but also raises the cost to $1.302$, above the threshold $\od \simeq 0.692$.
Under full observability, the exact constrained optimum achieves a reward of $4.162$ with a cost of $0.692$, showing the price of partial observability.

\begin{table}[h!]
\centering
\setlength{\tabcolsep}{3pt}
\begin{tabular}{lcc}
\toprule
Policy & Reward & Cost \\
\midrule
Approx.\ constrained \ac{CPOMDP}
    & $3.426 \pm 0.013$
    & $0.684 \pm 0.004$ \\
Approx.\ unconstrained \ac{POMDP}
    & $3.898 \pm 0.020$
    & $1.302 \pm 0.013$ \\
Exact constrained \ac{CMDP}
    & $4.162$
    & $0.692$ \\
\bottomrule
\end{tabular}
\caption{Model-based planning references. SARSOP-based values are
reported with 95\% confidence intervals. The constraint threshold is
$\od \simeq 0.692$.}
\label{tab:reference_policies}
\end{table}

Fig.~\ref{fig:optimalvssarsop} illustrates the difference between the constrained partially observable policy and constrained fully observable policies. The former appears more conservative, tending to move toward the boundary of the grid rather than through the central corridor when the agent is in some unsafe cells. This behavior is consistent with the state uncertainty induced by noisy and aliased observations. 
\begin{figure}[h]
    \centering
    \begin{subfigure}[t]{0.49\textwidth}
        \centering
        \includegraphics[width=0.9\linewidth]
            {images/sarsop_policy_map.png}
    \end{subfigure}
    \hfill
    \begin{subfigure}[t]{0.49\textwidth}
        \centering
        \includegraphics[width=0.9\linewidth]
            {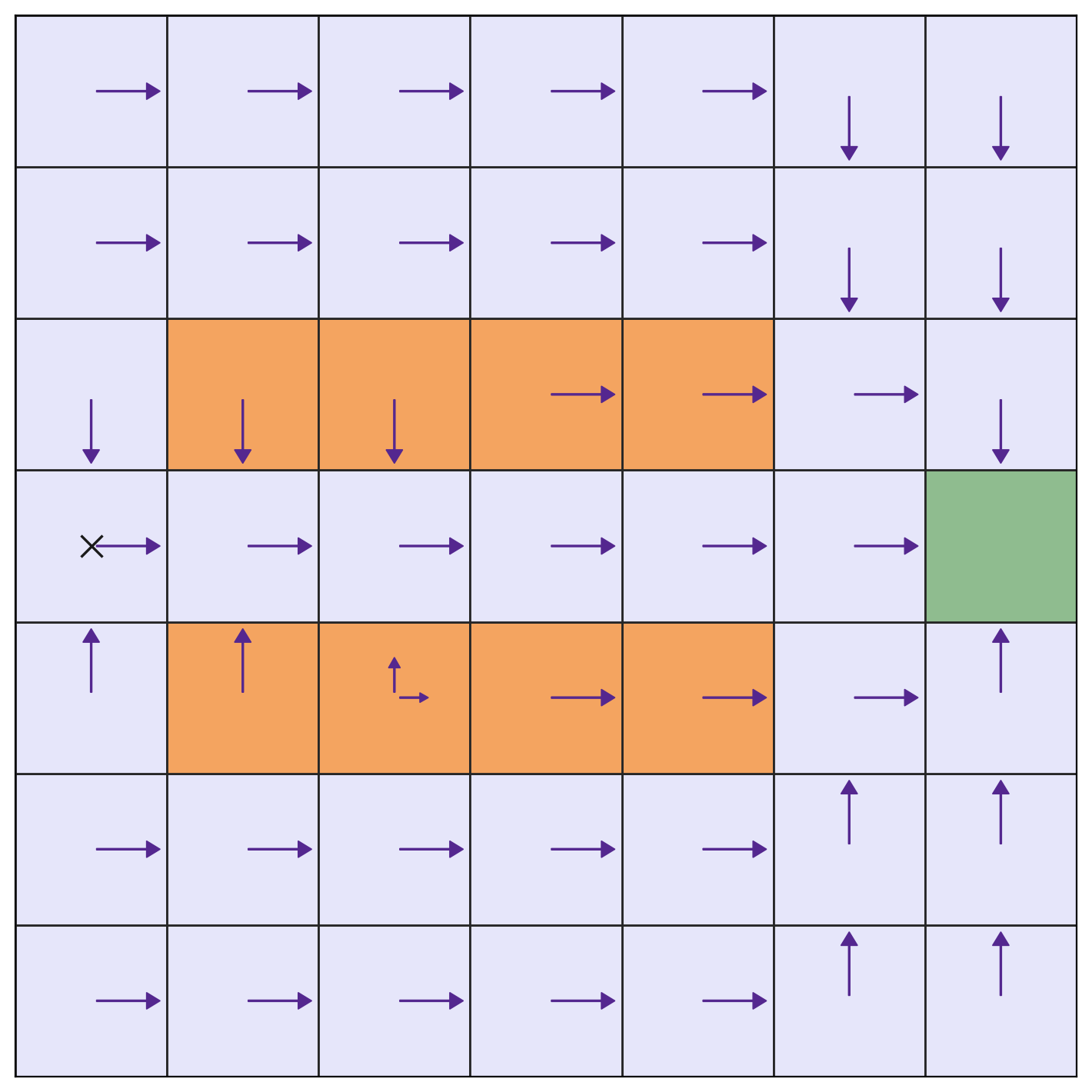}
    \end{subfigure}

    \caption{Approximate constrained \ac{CPOMDP} policy obtained using SARSOP (left) and exact fully observable policy (right). Their respective expected cumulative reward values are $3.426$ and $4.162$, while their cost values are $0.684$ and $0.692$. Arrow lengths are proportional to action probabilities.}
    \label{fig:optimalvssarsop}
\end{figure}

\textbf{2. Unconstrained and Fully Observable Learning Baselines.}
Table~\ref{tab:cpomdp_constrained_unconstrained} compares the constrained and unconstrained variants of Algorithm~\ref{alg:politex} after $K=10^4$ iterations. The constrained variant yields lower constraint violation for every history length. The unconstrained values are not uniformly higher because the table
reports finite-iteration average policies learned in the superstate
model and evaluated in the original \ac{CPOMDP}, rather than exact
optimal policies. Hence, the usual ordering between constrained and
unconstrained optimal values need not hold.

\begin{table}[h!]
\centering
\setlength{\tabcolsep}{3.5pt}
\begin{tabular}{c cc cc}
\toprule
& \multicolumn{2}{c}{\textbf{Average reward}}
& \multicolumn{2}{c}{\textbf{Constraint violation}} \\
$l$ & Constr. & Unconstr. & Constr. & Unconstr. \\
\midrule
0 & 1.599 & 0.685 & 0.682 & 1.158 \\
1 & 1.885 & 1.889 & 0.541 & 0.695 \\
2 & 2.750 & 2.901 & 0.331 & 0.840 \\
3 & 3.175 & 2.958 & 0.299 & 1.504 \\
4 & 3.329 & 3.534 & 0.237 & 0.922 \\
\bottomrule
\end{tabular}
\caption{Constrained and unconstrained policies evaluated in the
original \ac{CPOMDP} after $K=10^4$ iterations. Reported values are
averages over four random seeds. Constraint violations are computed
relative to $\od \simeq 0.692$.}
\label{tab:cpomdp_constrained_unconstrained}
\end{table}

Finally, Fig.~\ref{fig:fullyobs} compares Algorithm~\ref{alg:politex} performances under partial and full observability, showing that the performance gap decreases as $l$ increases.
\begin{figure}[h!]
    \centering
    \includegraphics[width=0.5\linewidth]
        {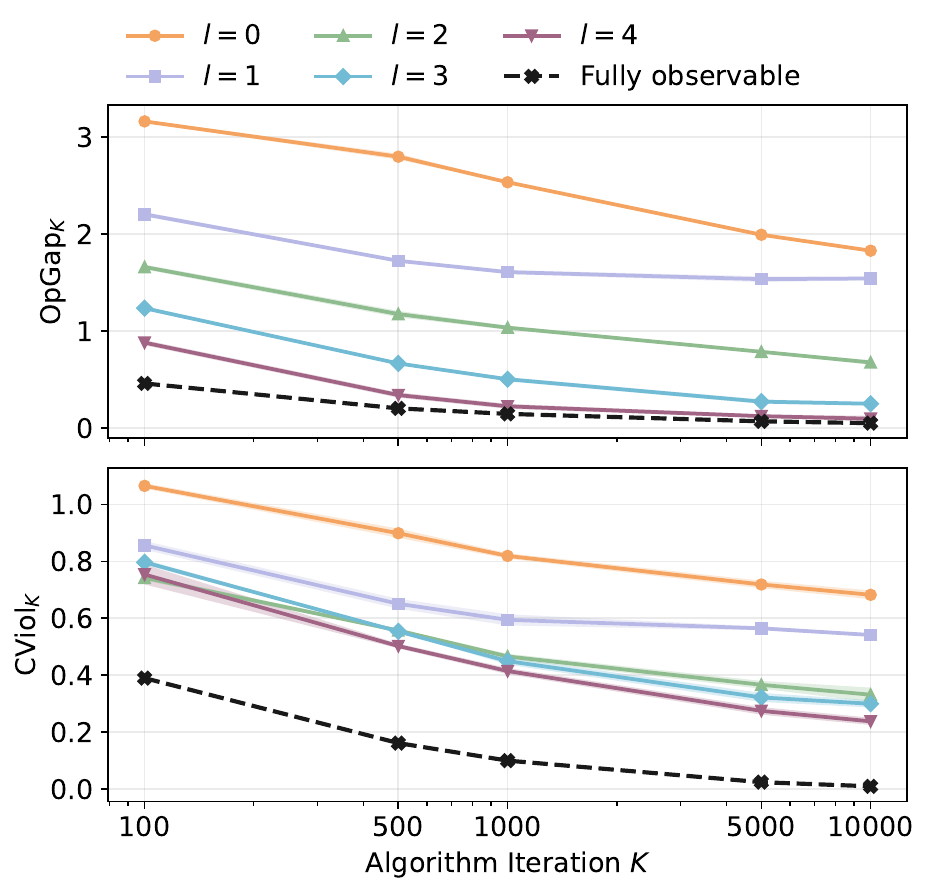}
    \caption{Performance of Algorithm~\ref{alg:politex} under partial
    observability for different values of $K$ and $l$, compared with
    its fully observable performance (black line).}
    \label{fig:fullyobs}
\end{figure}

\paragraph{Matching the Theoretical stepsizes}
Figure~\ref{fig:step_sizes} reports the performance of Algorithm~\ref{alg:politex} using the exact step-size constants from Theorem~\ref{thm:theorem},
\[
\eta_1 =
\frac{(1-\gamma)^2\xi^g}{(1-\gamma)\xi^g+2}
\sqrt{\frac{2\log|\mathcal A|}{K}},
\qquad
\eta_2 =
\frac{2}{\xi^g(1+(1-\gamma)d^g)\sqrt K}.
\]
Note that here we report the average reward value rather than the average optimality gap for better visualization.
Compared with the practical values used in the main experiments, the constraint violation decreases further, whereas the reward performances improve much more slowly. This is explained by the conservative theorem constants. For example, at \(K=1000\), the practical calibration gives
\(\eta_1=0.0526\) and \(\eta_2=0.0034\). With the theorem constants,
\(\eta_1\) ranges from \(1.74\cdot10^{-5}\) to
\(1.38\cdot10^{-4}\), while \(\eta_2\) ranges from \(0.110\) to
\(0.894\), depending on \(l\) through \(\xi_l^g\). Thus, the theorem calibration makes the primal updates substantially smaller and the dual updates larger. This explains the stronger reduction in constraint violation and the considerably slower reward improvement and the differences between curves associated with different $l$.

\begin{figure}
    \centering
    \begin{subfigure}[t]{0.49\textwidth}
        \centering
        \includegraphics[width=\linewidth]{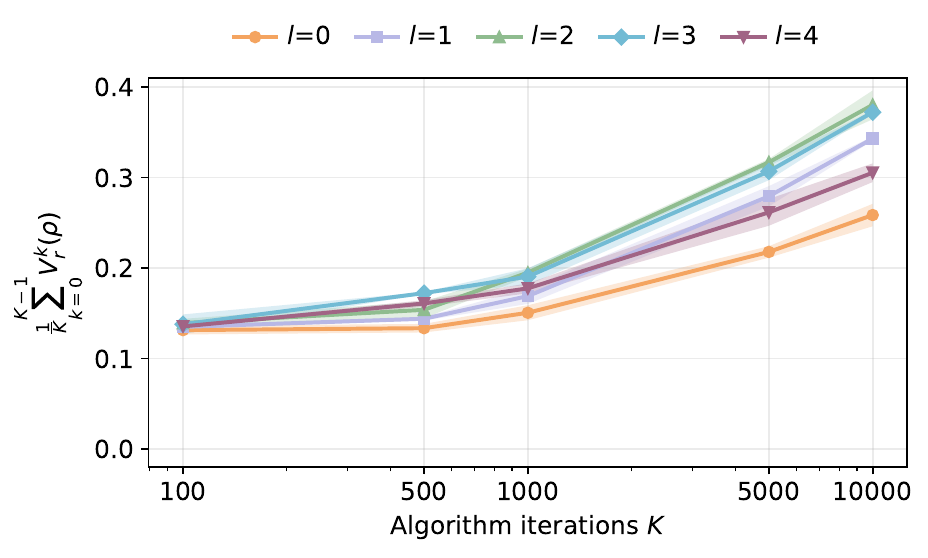}
    \end{subfigure}
    \hfill
    \begin{subfigure}[t]{0.49\textwidth}
        \centering
        \includegraphics[width=\linewidth]{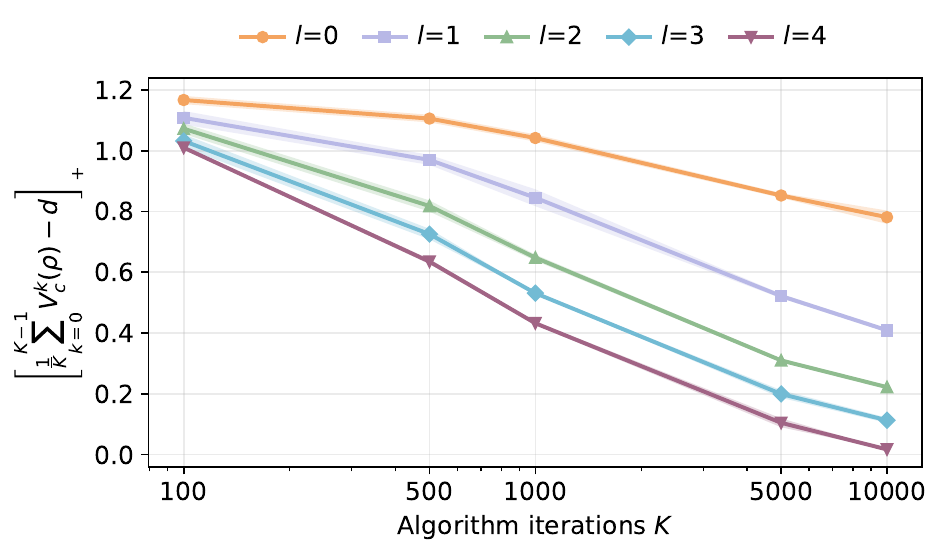}
    \end{subfigure}

    \caption{Performance of Algorithm~\ref{alg:politex} in the \ac{CPOMDP} for different values of $K$ and $l$ using the stepsizes required by Theorem~\ref{thm:theorem}. Left:
    $\frac{1}{K}\sum_{k=0}^{K-1} \oVrpik(\orho)$.
    Right:
    $\left[\frac{1}{K}\sum_{k=0}^{K-1}\oVcpik(\orho)-\od\right]_+$.}
    \label{fig:step_sizes}
\end{figure}

%% file: main.bbl
\begin{thebibliography}{}

\bibitem[Abbasi-Yadkori et~al., 2019]{abbasi2019politex}
Abbasi-Yadkori, Y., Bartlett, P., Bhatia, K., Lazic, N., Szepesvari, C., and Weisz, G. (2019).
\newblock Politex: Regret bounds for policy iteration using expert prediction.
\newblock In {\em International conference on machine learning}, pages 3692--3702. PMLR.

\bibitem[Altman, 2021]{altman2021constrained}
Altman, E. (2021).
\newblock {\em Constrained Markov decision processes}.
\newblock Routledge.

\bibitem[Anjarlekar et~al., 2025]{anjarlekar2025scalable}
Anjarlekar, A., Etesami, R., and Srikant, R. (2025).
\newblock Scalable policy-based rl algorithms for {POMDP}s.
\newblock {\em arXiv preprint arXiv:2510.06540}.

\bibitem[Carr et~al., 2023]{carr2023safe}
Carr, S., Jansen, N., Junges, S., and Topcu, U. (2023).
\newblock Safe reinforcement learning via shielding under partial observability.
\newblock In {\em Proceedings of the AAAI conference on artificial intelligence}, volume~37, pages 14748--14756.

\bibitem[Cayci et~al., 2024]{cayci2024finite}
Cayci, S., He, N., and Srikant, R. (2024).
\newblock Finite-time analysis of natural actor-critic for {POMDP}s.
\newblock {\em SIAM Journal on Mathematics of Data Science}, 6(4):869--896.

\bibitem[Cesa-Bianchi and Lugosi, 2006]{cesa2006prediction}
Cesa-Bianchi, N. and Lugosi, G. (2006).
\newblock {\em Prediction, learning, and games}.
\newblock Cambridge university press.

\bibitem[Ding et~al., 2023]{ding2023last}
Ding, D., Wei, C.-Y., Zhang, K., and Ribeiro, A. (2023).
\newblock Last-iterate convergent policy gradient primal-dual methods for constrained {MDP}s.
\newblock {\em Advances in Neural Information Processing Systems}, 36:66138--66200.

\bibitem[Ding et~al., 2025]{ding2025convergence}
Ding, D., Zhang, K., Duan, J., Basar, T., and Jovanovic, M.~R. (2025).
\newblock Convergence and sample complexity of natural policy gradient primal-dual methods for constrained {MDP}s.
\newblock {\em Journal of Machine Learning Research}, 26(256):1--76.

\bibitem[Ding et~al., 2018]{ding2018adaptive}
Ding, L., Li, S., Gao, H., Chen, C., and Deng, Z. (2018).
\newblock Adaptive partial reinforcement learning neural network-based tracking control for wheeled mobile robotic systems.
\newblock {\em IEEE Transactions on Systems, Man, and Cybernetics: Systems}, 50(7):2512--2523.

\bibitem[Efroni et~al., 2020]{efroni2020exploration}
Efroni, Y., Mannor, S., and Pirotta, M. (2020).
\newblock Exploration-exploitation in constrained {MDP}s.
\newblock {\em arXiv preprint arXiv:2003.02189}.

\bibitem[Hern{\'a}ndez-Lerma and Gonz{\'a}lez-Hern{\'a}ndez, 2000]{hernandez2000constrained}
Hern{\'a}ndez-Lerma, O. and Gonz{\'a}lez-Hern{\'a}ndez, J. (2000).
\newblock Constrained markov control processes in borel spaces: the discounted case.
\newblock {\em Mathematical Methods of Operations Research}, 52(2):271--285.

\bibitem[Hogewind et~al., 2022]{hogewind2022safe}
Hogewind, Y., Simao, T.~D., Kachman, T., and Jansen, N. (2022).
\newblock Safe reinforcement learning from pixels using a stochastic latent representation.
\newblock {\em arXiv preprint arXiv:2210.01801}.

\bibitem[Huang et~al., 2025]{huang2025pigdreamer}
Huang, D., Wang, J., Li, Y., Xia, C., Zhang, T., and Zhang, K. (2025).
\newblock {PIGD}reamer: Privileged information guided world models for safe partially observable reinforcement learning.
\newblock {\em arXiv preprint arXiv:2508.02159}.

\bibitem[Jordan and Kamgarpour, 2026]{jordan2026model}
Jordan, P. and Kamgarpour, M. (2026).
\newblock Model-based learning of near-optimal finite-window policies in {POMDP}s.
\newblock {\em arXiv preprint arXiv:2604.01024}.

\bibitem[Kaelbling et~al., 1998]{kaelbling1998planning}
Kaelbling, L.~P., Littman, M.~L., and Cassandra, A.~R. (1998).
\newblock Planning and acting in partially observable stochastic domains.
\newblock {\em Artificial intelligence}, 101(1-2):99--134.

\bibitem[Kakade and Langford, 2002]{kakade2002approximately}
Kakade, S. and Langford, J. (2002).
\newblock Approximately optimal approximate reinforcement learning.
\newblock In {\em Proceedings of the nineteenth international conference on machine learning}, pages 267--274.

\bibitem[Kalagarla et~al., 2022]{kalagarla2022optimal}
Kalagarla, K.~C., Dhruva, K., Shen, D., Jain, R., Nayyar, A., and Nuzzo, P. (2022).
\newblock Optimal control of partially observable markov decision processes with finite linear temporal logic constraints.
\newblock In {\em Uncertainty in Artificial Intelligence}, pages 949--958. PMLR.

\bibitem[Kara and Yuksel, 2022]{kara2022near}
Kara, A. and Yuksel, S. (2022).
\newblock Near optimality of finite memory feedback policies in partially observed markov decision processes.
\newblock {\em Journal of Machine Learning Research}, 23(11):1--46.

\bibitem[Kara and Y{\"u}ksel, 2023]{kara2023convergence}
Kara, A.~D. and Y{\"u}ksel, S. (2023).
\newblock Convergence of finite memory q learning for {POMDP}s and near optimality of learned policies under filter stability.
\newblock {\em Mathematics of Operations Research}, 48(4):2066--2093.

\bibitem[Kearns and Singh, 2002]{kearns2002near}
Kearns, M. and Singh, S. (2002).
\newblock Near-optimal reinforcement learning in polynomial time.
\newblock {\em Machine learning}, 49(2):209--232.

\bibitem[Khonji et~al., 2019]{khonji2019approximability}
Khonji, M., Jasour, A., and Williams, B.~C. (2019).
\newblock Approximability of constant-horizon constrained pomdp.
\newblock In {\em IJCAI}, pages 5583--5590.

\bibitem[Krishnamurthy et~al., 2016]{krishnamurthy2016pac}
Krishnamurthy, A., Agarwal, A., and Langford, J. (2016).
\newblock {PAC} reinforcement learning with rich observations.
\newblock {\em Advances in Neural Information Processing Systems}, 29.

\bibitem[Kurniawati et~al., 2008]{kurniawati2008sarsop}
Kurniawati, H., Hsu, D., Lee, W.~S., et~al. (2008).
\newblock Sarsop: Efficient point-based {POMDP} planning by approximating optimally reachable belief spaces.
\newblock In {\em Robotics: Science and systems}, volume 2008. Zurich, Switzerland.

\bibitem[Lauri et~al., 2022]{lauri2022partially}
Lauri, M., Hsu, D., and Pajarinen, J. (2022).
\newblock Partially observable markov decision processes in robotics: A survey.
\newblock {\em IEEE Transactions on Robotics}, 39(1):21--40.

\bibitem[Lee et~al., 2018]{lee2018monte}
Lee, J., Kim, G.-H., Poupart, P., and Kim, K.-E. (2018).
\newblock Monte-carlo tree search for constrained {POMDP}s.
\newblock {\em Advances in Neural Information Processing Systems}, 31.

\bibitem[McDonald and Y{\"u}ksel, 2020]{mcdonald2020exponential}
McDonald, C. and Y{\"u}ksel, S. (2020).
\newblock Exponential filter stability via dobrushin’s coefficient.

\bibitem[Ni and Kamgarpour, 2025]{pmlr-v258-ni25a}
Ni, T. and Kamgarpour, M. (2025).
\newblock A safe exploration approach to constrained markov decision processes.
\newblock In {\em Proceedings of The 28th International Conference on Artificial Intelligence and Statistics}, volume 258 of {\em Proceedings of Machine Learning Research}, pages 3592--3600. PMLR.

\bibitem[Papadimitriou and Tsitsiklis, 1987]{papadimitriou1987complexity}
Papadimitriou, C.~H. and Tsitsiklis, J.~N. (1987).
\newblock The complexity of markov decision processes.
\newblock {\em Mathematics of operations research}, 12(3):441--450.

\bibitem[Paternain et~al., 2019]{paternain2019constrained}
Paternain, S., Chamon, L., Calvo-Fullana, M., and Ribeiro, A. (2019).
\newblock Constrained reinforcement learning has zero duality gap.
\newblock {\em Advances in Neural Information Processing Systems}, 32.

\bibitem[Puterman, 2014]{puterman2014markov}
Puterman, M.~L. (2014).
\newblock {\em Markov decision processes: discrete stochastic dynamic programming}.
\newblock John Wiley \& Sons.

\bibitem[Sutton et~al., 1998]{sutton1998reinforcement}
Sutton, R.~S., Barto, A.~G., et~al. (1998).
\newblock {\em Reinforcement learning: An introduction}, volume~1.
\newblock MIT press Cambridge.

\bibitem[Tian et~al., 2024]{tian2024confident}
Tian, T., Yang, L.~F., and Szepesv{\'a}ri, C. (2024).
\newblock Confident natural policy gradient for local planning in q\_pi -realizable constrained {MDP}s.
\newblock {\em Advances in Neural Information Processing Systems}, 37:76139--76176.

\bibitem[Undurti and How, 2010]{undurti2010online}
Undurti, A. and How, J.~P. (2010).
\newblock An online algorithm for constrained {POMDP}s.
\newblock In {\em 2010 IEEE International Conference on Robotics and Automation}, pages 3966--3973. IEEE.

\bibitem[Yu et~al., 2021]{yu2021reinforcement}
Yu, C., Liu, J., Nemati, S., and Yin, G. (2021).
\newblock Reinforcement learning in healthcare: A survey.
\newblock {\em ACM Computing Surveys (CSUR)}, 55(1):1--36.

\end{thebibliography}
